\documentclass{article}
\usepackage{PRIMEarxiv}

\usepackage{natbib}
\usepackage[utf8]{inputenc}
\usepackage[T1]{fontenc}
\usepackage{microtype}

\usepackage{amsmath}
\usepackage{amsfonts}
\usepackage{amssymb}
\usepackage{amsthm}
\usepackage{mathtools}
\usepackage{nicefrac}
\usepackage{bbm}

\usepackage{graphicx}
\graphicspath{{media/}}
\usepackage{wrapfig}
\usepackage{adjustbox}
\usepackage{subcaption}
\usepackage{tikz}
\usetikzlibrary{shapes.geometric}

\usepackage{booktabs}

\usepackage[dvipsnames]{xcolor}

\usepackage{algorithm}
\usepackage{algorithmic}

\usepackage{fancyhdr}
\usepackage[toc,page,header]{appendix}
\usepackage{minitoc}
\usepackage{rotating}
\usepackage{lipsum}
\usepackage{comment}
\usepackage{url}

\usepackage{ifthen}

\DeclareRobustCommand{\yellowstar}{%
  \tikz[baseline=-0.7ex]\node[star,star points=5,star point ratio=2.5,
    fill=yellow,draw=black,inner sep=1.2pt,scale=0.8] {};%
}

\definecolor{alggreen}{RGB}{34, 139, 34}  
\newcommand{\cmt}[1]{{\color{alggreen}\texttt{#1}}}

\newcommand{\showcomments}{false}
\ifthenelse{\equal{\showcomments}{true}}{

    \newcommand{\draftnote}[1]{{\color{blue}#1}}
}{

    \newcommand{\draftnote}[1]{}
}

\def\UC{UC}

\theoremstyle{plain}
\newtheorem{theorem}{Theorem}[section]
\newtheorem{proposition}[theorem]{Proposition}
\newtheorem{lemma}[theorem]{Lemma}
\newtheorem{corollary}[theorem]{Corollary}
\theoremstyle{definition}

\newtheorem{assumption}[theorem]{Assumption}
\theoremstyle{remark}
\newtheorem{remark}[theorem]{Remark}

\usepackage{hyperref}
\usepackage[capitalize,noabbrev]{cleveref}
  
\title{Active Learning for Gaussian Process Regression Under Self-Induced Boltzmann Weights
}

\author{
  Jixiang Qing\thanks{MARS: Mathematics for AI in Real-world Systems, School of Mathematical Sciences, Lancaster University, Lancaster, LA1 4YF, United Kingdom. Correspondence to: \texttt{\{j.qing,\ m.sachs\}@lancaster.ac.uk}}
  \quad\quad Henry Moss\footnotemark[1]
  \quad\quad Matthias Sachs\footnotemark[1]
}

\begin{document}
\maketitle

\begin{abstract}
We consider the active learning problem where the goal is to learn an unknown function with low prediction error under an unknown Boltzmann distribution induced by the function itself. This self-induced weighting arises naturally in problems such as potential energy surface (PES) modeling in computational chemistry,
  yet poses unique challenges as the target distribution is unknown and its partition function is intractable. We propose \texttt{AB-SID-iVAR}, a Gaussian Process-based acquisition function that approximates the intractable Bayesian target distribution in closed form while avoiding partition function estimation, and is applicable to both discrete and continuous input domains. We also analyze a Thompson sampling alternative (\texttt{TS-SID-iVAR}) as a higher variance Monte Carlo variant. Despite the unknown target, under mild conditions, we establish that the terminal prediction error vanishes with high probability, and provide a tighter average-case guarantee. We demonstrate consistent improvements over existing approaches in this setting on synthetic benchmarks and real-world PES modeling and drug discovery tasks.
\end{abstract}


\setcounter{section}{1}
\section*{1. Introduction}

Learning an \textit{unknown function} from data is a fundamental problem in statistics and machine learning. When function evaluations are expensive, sequential experimental design \cite{pukelsheim2006optimal, fedorov2013theory, huan2024optimal} (also known as Active Learning (AL) \cite{settles2009active, rainforth2024modern}) provides a principled approach to optimize data acquisition based on a predictive surrogate. 
Gaussian processes \cite{williams2006gaussian} are a popular choice with convergence guarantees under mild regularity conditions \cite{krause2008near, srinivas2009gaussian, vakili2021scalable}.

\begin{wrapfigure}[14]{r}{0.6\textwidth}
\vspace{-1.2em}
\centering
\begin{tikzpicture}[scale=0.7]
    
    \node[font=\footnotesize] at (5,2.55) {Knowledge level of $P$};
    
    \node[align=center, font=\scriptsize] at (1.5,2.) {Ambiguity Set $\mathcal{P}$};
    \node[align=center, font=\scriptsize] at (5,2.07) {Boltzmann Form $e^{\lambda f + b}$};
    \node[align=center, font=\scriptsize] at (8.5,2.) {Exact};
    
    
    \begin{scope}[shift={(1.5,0)}]
        \draw[gray!50, fill=gray!10] (-1.1,0.05) rectangle (1.1,1.5);
        \draw[dashed, thick, gray!70] plot[smooth, domain=-0.9:0.9, samples=30] (\x, {1.4*exp(-((\x+0.35)^2)/0.06) + 1.1*exp(-((\x-0.4)^2)/0.05)});
        \node[font=\small] at (0,0.9) {?};
    \end{scope}
    
    \begin{scope}[shift={(5,0)}]
        \fill[red!15] plot[smooth, domain=-0.9:0.9, samples=30] (\x, {1.6*exp(-((\x+0.35)^2)/0.06) + 1.3*exp(-((\x-0.4)^2)/0.05)}) -- plot[smooth, domain=0.9:-0.9, samples=30] (\x, {0.9*exp(-((\x+0.35)^2)/0.06) + 0.7*exp(-((\x-0.4)^2)/0.05)}) -- cycle;
        \draw[thick, red!70!black] plot[smooth, domain=-0.9:0.9, samples=30] (\x, {1.25*exp(-((\x+0.35)^2)/0.06) + 1.0*exp(-((\x-0.4)^2)/0.05)});
    \end{scope}
    
    \begin{scope}[shift={(8.5,0)}]
        \fill[blue!30] plot[smooth, domain=-0.9:0.9, samples=30] (\x, {1.4*exp(-((\x+0.35)^2)/0.06) + 1.1*exp(-((\x-0.4)^2)/0.05)}) -- (0.9,0) -- (-0.9,0) -- cycle;
        \draw[thick, blue!70!black] plot[smooth, domain=-0.9:0.9, samples=30] (\x, {1.4*exp(-((\x+0.35)^2)/0.06) + 1.1*exp(-((\x-0.4)^2)/0.05)});
    \end{scope}
    
    \draw[-, thick, white, line width=2pt] (0,0) -- (10,0);
    \draw[->, thick] (0,0) -- (10,0);
    \node[left, font=\footnotesize] at (0,0) {Less};
    \node[right, font=\footnotesize] at (10,0) {More};
    
    
    \node[font=\footnotesize] at (1.5,-0.6) {DRAL};
    \node[font=\footnotesize\bfseries, red!70!black] at (5,-0.6) {SIDAL (ours)};
    \node[font=\footnotesize] at (8.5,-0.6) {Known-target};
\end{tikzpicture}
\caption{Spectrum of AL methods by knowledge of the target distribution $P_f$, where the goal is to minimize prediction error $\mathbb{E}_{x \sim P_f}[(f(x) - \hat{f}(x))^2]$. 
\textbf{Left}: DRAL assumes $P_f$ lies in an ambiguity set. 
\textbf{Middle}: SIDAL (ours) knows $P_f$ takes Boltzmann form $\exp(\lambda f + b)$ but $f$ itself is unknown. 
\textbf{Right}: Known-target has exact $P_f$.}
\label{fig:knowledge_spectrum}
\end{wrapfigure}

Given a surrogate, an \textit{acquisition function} characterizes the utility 
criterion motivated by the learning objective, such as minimizing prediction 
error over input space weighted by a \textit{target distribution} $P$, with uniform distribution as the most common scenario. Under this scheme, 
available knowledge about $P$ naturally shapes the acquisition strategy: 
when $P$ is fully specified, weighted uncertainty reduction applies 
\cite{seo2000gaussian, kirsch2021test, smith2023prediction}; when $P$ is unknown, one may instead specify possible $P$ through an ambiguity set $\mathcal{P}$ under a Distributionally Robust Active Learning (DRAL) framework \citep{Frogner, takeno2025distributionally}. 

\begin{figure*}[t]
    \centering
\includegraphics[width=1.0\linewidth]{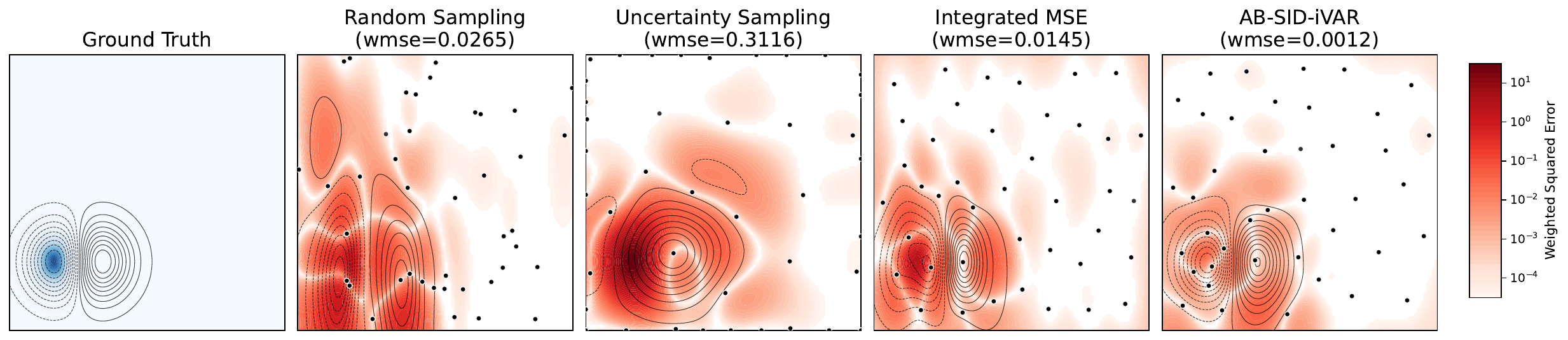}
    \caption{Comparison of active learning methods on the (continuous input) Gramacy 2D function \citep{gramacy2008gaussian}, aiming to minimize prediction error under a self-induced Boltzmann distribution $P_f \propto \exp(-f)$. 
Left: ground truth with high-probability regions shaded in \textcolor{cyan}{blue}; accurate prediction is mainly required in these regions. 
Right: weighted squared error (heatmap, log scale) after 40 queries (black dots: \raisebox{0.25ex}{\scalebox{0.6}{$\bullet$}}); contours show the learned GP posterior mean. 
SID-unaware baselines include Random Sampling (RS), Uncertainty Sampling (US), and Integrated MSE (IMSE). 
Our method \texttt{AB-SID-iVAR} balances exploration and concentration by placing samples in high-probability regions, achieving substantially lower weighted MSE (wmse) than baselines.}
    \label{fig:spotlight_fig}
\end{figure*}
Viewing the aforementioned variants on the unified spectrum (Figure~\ref{fig:knowledge_spectrum}) of knowledge about $P$, we argue that many applications naturally fall between established methods. For instance, in molecular modeling and free energy estimation, the distribution of interest takes the Boltzmann form \citep{frenkel2023understanding} weighted by the unknown energy function itself. We categorize this setting as \textbf{Active Learning under a Self-Induced Distribution (SIDAL)}, where the target distribution $P$ depends on the unknown function $f$. Specifically, $P$ often takes a Boltzmann form (Assumption~\ref{ass:target_dist}). To the best of our knowledge, this setting has not been formally studied in the active learning literature. While related heuristics have emerged in scientific domains, they require careful tuning and lack theoretical guarantees; we defer a detailed discussion to Appendix~\ref{App: Related_Work}.

\textbf{Contributions} Aiming to fill the aforementioned gap, we propose an SID-aware AL algorithm  under the GP surrogate model, which samples to reduce the weighted mean-squared-error (MSE) sequentially. Building upon the theoretical analysis of \cite{srinivas2009gaussian, cai2024kernelized, takeno2025distributionally},we provide sublinear convergence guarantees for the terminal prediction error in both high-probability and average (a.k.a., in-expectation) forms. Our contributions can be summarized as follows:

\begin{itemize}
      \item We formalize the novel problem of active learning under self-induced distributions (Section~\hyperlink{sec:problem_setting}{3.1}), capturing scenarios where the target distribution depends on the unknown function itself.
    \item We propose a simple-to-implement acquisition function \texttt{\texttt{AB-SID-iVAR}} (fully defined by Eq.~\ref{Eq: SID-iVAR} and Table~\ref{tab:methods}) that is free of partition function estimation and applies to both discrete and continuous input domains. A Thompson sampling variant (\texttt{TS-SID-iVAR}) offering a bias-variance tradeoff is also analyzed.
    \item We provide sublinear convergence guarantees under high-probability and average settings (Section~\hyperlink{sec:theoretical_analysis}{4.2}), explicitly accounting for Monte Carlo approximation error in continuous domains, and show (Section~\hyperlink{Sec:unification}{4.3}) that existing heuristics admit convergence analysis under our framework.
    
    \item We demonstrate competitive empirical performance on both synthetic (Figure~\ref{fig:spotlight_fig}, \ref{fig:synthetic_exp}) and real-world problems in PES modeling and drug discovery (Figure~\ref{fig:MD_Modeling}, \ref{fig:molecular_drug_discovery}).
\end{itemize}

\setcounter{section}{2}
\section*{2. Preliminaries}
\setcounter{theorem}{0}
\subsection*{2.1. Gaussian Processes}\label{Sec: GP}

We model the unknown function using Gaussian Process (GP) regression, a natural choice for active learning due to its closed-form posterior updates and calibrated uncertainty estimates. Given a positive semi-definite kernel $k: \mathcal{X} \times \mathcal{X} \to \mathbb{R}$, which is normalized as $k(x, x) \leq 1$, and data $\mathcal{D}_t = \{(x_i, y_i)\}_{i=1}^t$, the predictive distribution is:
\begin{equation}
\begin{aligned}
    \mu_t(x) = k(x, \mathbf{X}_t)(K_t + \tau^2 I)^{-1} \mathbf{y}_t, \ 
    k_t(x, x') = k(x, x') - k(x, \mathbf{X}_t)(K_t + \tau^2 I)^{-1} k(\mathbf{X}_t, x'),
\end{aligned}
\label{Eq: GP_posterior}
\end{equation}
where $\mathbf{X}_t = (x_1, \ldots, x_t)$ denotes the sequence of queried inputs, $\mathbf{y}_t = (y_1, \ldots, y_t)^\top$ the corresponding noisy observations, $K_t = k(\mathbf{X}_t, \mathbf{X}_t) \in \mathbb{R}^{t \times t}$ the kernel (Gram) matrix, and $\sigma_t^2(x) := k_t(x, x)$ the predictive variance at $x$ given the data $\mathcal{D}_t$.

\subsection*{2.2. Regularity Assumptions}
We assume the regularity of $f$ is consistent with the kernel $k$. We primarily consider the Bayesian setting where $f \sim \mathcal{GP}(0, k)$, and follow this assumption throughout unless stated otherwise; analogous (theoretical) results hold under the frequentist setting where $f$ lies in the reproducing kernel Hilbert space (RKHS) $\mathcal{H}_k$ with bounded norm. For continuous input domains, we additionally have the following regularity requirement on the kernel:

\begin{assumption}[Kernel Regularity]\label{ass:kernel_regularity}
The kernel $k$ is stationary and four times differentiable.
\end{assumption}
\noindent This is satisfied by common kernels such as squared exponential 
and Matérn-$\nu$ with $\nu > 2$. Under this assumption, sample paths of $f \sim \mathcal{GP}(0,k)$ 
are Lipschitz continuous with high probability 
\citep{ghosal2006posterior, srinivas2009gaussian}, and the posterior mean $\mu_t$ is similarly Lipschitz \citep{vakili2021scalable, takeno2025distributionally}. 
Together, these ensure the \textit{prediction error} $f(x) - \mu_t(x)$ is Lipschitz:

\begin{lemma}[Prediction Error Lipschitz Continuity {\cite{takeno2025distributionally}}]\label{lem:lip-pred}
Under Assumption~\ref{ass:kernel_regularity} 
with compact input domain $\mathcal{X} \subset [0,r]^d$, $f(x) - \mu_t(x)$ is 
$L$-Lipschitz with probability at least $1 - \delta$, where 
$L = b\sqrt{\log(4ad/\delta)}$ for kernel-dependent constants $a, b$. 
\end{lemma}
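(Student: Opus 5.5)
The plan is to split the prediction error as $f(x)-\mu_t(x)$, bound the Lipschitz constant of each piece separately, and combine them with a union bound, allocating failure probability $\delta/2$ to each piece. By the triangle inequality, if $f$ is $L_f$-Lipschitz and $\mu_t$ is $L_\mu$-Lipschitz on $\mathcal{X}$, then $f-\mu_t$ is $(L_f+L_\mu)$-Lipschitz. It therefore suffices to show that each of $L_f$ and $L_\mu$ is at most $\tfrac{b}{2}\sqrt{\log(4ad/\delta)}$ (up to constant absorption) on an event of probability at least $1-\delta/2$.

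\textbf{Sample path of $f$.} Under Assumption~\ref{ass:kernel_regularity}, four-times differentiability of the stationary kernel makes the partial derivatives $\partial_j f$, $j=1,\dots,d$, well-defined Gaussian processes with continuous sample paths. Their covariance is $-\partial_j^2 k$ at lag zero, which is finite. The standard derivative-tail bound from \cite{ghosal2006posterior,srinivas2009gaussian} then gives
\[
\Pr\Bigl(\sup_{x\in\mathcal{X}}|\partial_j f(x)|>L'\Bigr)\le a\,e^{-(L'/b)^2}
\]
for kernel-dependent constants $a,b$. This follows from Borell--TIS applied to the supremum of $\partial_j f$ over the compact set $[0,r]^d$, with the expected supremum controlled by Dudley's entropy integral. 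A union bound over the $d$ coordinates, with the right side set to $\delta/4$ each, yields $L' = b\sqrt{\log(4ad/\delta)}$. The mean value theorem then converts the gradient bound into a Lipschitz bound, at the cost of a $\sqrt d$ or $d$ factor that is absorbed into $b$.

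\textbf{Posterior mean $\mu_t$.} Write $\mu_t(x)=\sum_i \alpha_i k(x,x_i)$ with $\alpha=(K_t+\tau^2I)^{-1}\mathbf y_t$. Its gradient is $\nabla\mu_t(x)=\sum_i\alpha_i\nabla_x k(x,x_i)$. Rather than bounding $\alpha$ directly, which would depend on $t$, I would follow \cite{vakili2021scalable,takeno2025distributionally}. Conditional on the data, $\partial_j\mu_t(x)=\mathbb E[\partial_j f(x)\mid\mathcal D_t]$. Under the Bayesian model this conditional expectation is itself a Gaussian process whose variance is dominated by that of $\partial_j f$, because the posterior variance of the derivative is non-negative. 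The same Borell--TIS/Dudley argument therefore applies with the same constants, or the bound follows from Jensen's inequality applied to the supremum. A union bound over the two events gives total failure probability $\delta$, with the factor $4$ inside the logarithm accounting for the split.

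\textbf{Main obstacle.} The hard step is the uniformity of the $\mu_t$ bound in $t$ and in the adaptively chosen $\mathbf X_t$. The argument that $\partial_j\mu_t$ has dominated variance holds for a fixed design. With adaptive designs it needs either a martingale or filtration argument, or the cruder route of controlling $\sup_x|\partial_j\mu_t|$ by $\sup_x|\partial_j f|$ plus $\sup_x|\partial_j(f-\mu_t)|$ via the RKHS reproducing property of derivatives. I would try the latter route first, since it only requires the already-established event on $f$. If it fails, I would fall back to the published lemma of \cite{takeno2025distributionally}, which handles this detail.
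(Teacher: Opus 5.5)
Your decomposition is the one the paper relies on. The paper gives no proof of its own: it cites \cite{ghosal2006posterior,srinivas2009gaussian} for the sample path and \cite{vakili2021scalable,takeno2025distributionally} for $\mu_t$, then combines the two by the triangle inequality. Your treatment of $f$ is fine.

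The gap is in the $\mu_t$ step, and it sits exactly in the case the lemma is used for. Your comparison argument treats $\partial_j\mu_t=\partial_j k(\cdot,\mathbf{X}_t)(K_t+\tau^2I)^{-1}\mathbf{y}_t$ as a linear image of a Gaussian vector. That requires $\mathbf{X}_t$ to be fixed in advance, but AB/TS-SID-iVAR choose $x_t$ from $\mathcal{D}_{t-1}$. Also, Dudley or Sudakov--Fernique needs domination of increment variances, not only pointwise variances. You flag the adaptivity issue, but the repair you would try first fails for three reasons.
\begin{itemize}
\item Bounding $\sup_x|\partial_j\mu_t|$ by $\sup_x|\partial_j f|+\sup_x|\partial_j(f-\mu_t)|$ is circular: the last term is exactly the Lipschitz constant the lemma is about.
\item The RKHS derivative inequality $|\partial_j g(x)|\le C_k\|g\|_{\mathcal{H}_k}$ cannot control that term. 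A $\mathcal{GP}(0,k)$ sample lies outside an infinite-dimensional $\mathcal{H}_k$ almost surely, so $\|f-\mu_t\|_{\mathcal{H}_k}=\infty$.
\item Applied to $\mu_t$ alone, the inequality gives no $t$-free constant. For a fixed design, $\mathbb{E}\|\mu_t\|_{\mathcal{H}_k}^2=\mathrm{tr}\bigl(K_t(K_t+\tau^2I)^{-1}\bigr)$, which is not bounded uniformly in $t$.
\end{itemize}
Falling back on \cite{takeno2025distributionally} just cites the statement itself.

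The Jensen remark you made in passing is what closes the gap, but it needs one more step. Each $x_i$ depends only on $\mathcal{D}_{i-1}$ and on randomness independent of $f$ and the noise, so the posterior is still $\mathcal{GP}(\mu_t,k_t)$ under any adaptive design. Hence, using dominated convergence with $S_j:=\sup_x|\partial_j f(x)|$, you get $\partial_j\mu_t(x)=\mathbb{E}[\partial_j f(x)\mid\mathcal{D}_t]$ and therefore $\sup_x|\partial_j\mu_t(x)|\le\mathbb{E}[S_j\mid\mathcal{D}_t]$.

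This is not yet a tail bound. Take $\psi(s)=e^{s^2/(2b^2)}$. Conditional Jensen together with $\Pr(S_j>s)\le ae^{-(s/b)^2}$ gives $\mathbb{E}\,\psi(\mathbb{E}[S_j\mid\mathcal{D}_t])\le\mathbb{E}\,\psi(S_j)\le 1+a$. Markov's inequality then yields $\Pr(\mathbb{E}[S_j\mid\mathcal{D}_t]>L)\le(1+a)e^{-L^2/(2b^2)}$. This has the required form with modified kernel-dependent constants $(a,b)\mapsto(1+a,\sqrt{2}\,b)$, not the same constants.

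Alternatively, you can skip the split between $f$ and $\mu_t$ altogether. Conditionally on $\mathcal{D}_t$, $f-\mu_t\sim\mathcal{GP}(0,k_t)$, and $k-k_t$ is positive semidefinite. So every increment variance of $\partial_j(f-\mu_t)$, including those of $\partial_j(f-\mu_t)(x)\pm\partial_j(f-\mu_t)(x')$, is dominated by the corresponding one for $\partial_j f$. Sudakov--Fernique plus Borell--TIS then give the tail bound for $\sup_x|\partial_j(f-\mu_t)(x)|$ directly, for every realisation of the possibly adaptive design.
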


This enables a high-probability uniform prediction error guarantee using GP as a model of $f$ for continuous input:

\begin{lemma}[GP Confidence Bound]\label{lem:gp_confidence}
Pick $\delta \in (0,1)$ and $t \in \mathbb{N}$. Then with probability at least $1 - \delta$, for all $x \in \mathcal{X}$:
\begin{equation*}
|f(x) - \mu_{t-1}(x)| \leq \beta_t^{1/2} \sigma_{t-1}(x) + \epsilon_t,
\end{equation*}
where for finite $|\mathcal{X}| < \infty$: $\beta_t = 2\log(|\mathcal{X}|/\delta)$, $\epsilon_t = 0$; for continuous $\mathcal{X} \subset [0,r]^d$ under Assumption~\ref{ass:kernel_regularity}: $\beta_t = 2d\log(tdr + 1) + 2\log(2/\delta)$, $\epsilon_t = L/t$.
\end{lemma}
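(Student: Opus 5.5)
The argument follows the standard route of \cite{srinivas2009gaussian}: pointwise Gaussian tail bounds, a union bound for the finite case, and a discretization plus Lipschitz argument for the continuous case. Throughout we condition on $\mathcal{D}_{t-1}$. Under $f\sim\mathcal{GP}(0,k)$ with Gaussian observation noise of variance $\tau^2$, the posterior of $f(x)$ given $\mathcal{D}_{t-1}$ is $\mathcal{N}(\mu_{t-1}(x),\sigma_{t-1}^2(x))$. This holds even though $x_1,\dots,x_{t-1}$ were chosen adaptively, because each query depends only on past data. Hence for fixed $x$ and $c>0$,
\begin{equation*}
\Pr\bigl(|f(x)-\mu_{t-1}(x)|>c^{1/2}\sigma_{t-1}(x)\,\big|\,\mathcal{D}_{t-1}\bigr)\le e^{-c/2}
\end{equation*}
by the standard Gaussian tail bound $\Pr(|Z|>s)\le e^{-s^2/2}$ for $s\ge 0$. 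If $\sigma_{t-1}(x)=0$, the event is trivially empty.

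\emph{Finite case.} Take $c=\beta_t=2\log(|\mathcal{X}|/\delta)$, so each point fails with probability at most $\delta/|\mathcal{X}|$. A union bound over $\mathcal{X}$ gives simultaneous validity with probability at least $1-\delta$ and $\epsilon_t=0$. Integrating over $\mathcal{D}_{t-1}$ turns the conditional statement into an unconditional one.

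\emph{Continuous case.} Split the budget as $\delta/2+\delta/2$.

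First, build a uniform grid $\mathcal{X}_t\subset[0,r]^d$ with $\tau_t=\lceil t d r\rceil$ points per coordinate, so that $|\mathcal{X}_t|\le(tdr+1)^d$. The nearest grid point $[x]_t$ then satisfies $\|x-[x]_t\|_1\le d\cdot r/\tau_t\le 1/t$. Apply the finite argument to $\mathcal{X}_t$ with confidence $\delta/2$. This requires $c=2\log(2|\mathcal{X}_t|/\delta)\le 2d\log(tdr+1)+2\log(2/\delta)=\beta_t$, so with probability at least $1-\delta/2$,
\begin{equation*}
|f(z)-\mu_{t-1}(z)|\le\beta_t^{1/2}\sigma_{t-1}(z)\quad\text{for all } z\in\mathcal{X}_t.
\end{equation*}

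Second, invoke Lemma~\ref{lem:lip-pred} with $\delta/2$: with probability at least $1-\delta/2$, $g:=f-\mu_{t-1}$ is $L$-Lipschitz, with the constant absorbing $\delta/2$, in the $\ell_1$ norm. Then
\begin{equation*}
|g(x)|\le|g([x]_t)|+L/t\le\beta_t^{1/2}\sigma_{t-1}([x]_t)+L/t.
\end{equation*}

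The main obstacle is that the statement bounds $|g(x)|$ by $\sigma_{t-1}(x)$ at $x$ itself, not at the grid point $[x]_t$. Transferring from $[x]_t$ to $x$ needs regularity of $\sigma_{t-1}$. The first thing to try is a Lipschitz bound for $\sigma_{t-1}$ under Assumption~\ref{ass:kernel_regularity}: the posterior variance inherits smoothness from $k$, and $|\sigma_{t-1}(x)-\sigma_{t-1}(x')|\le\sigma_0$-based bounds via the RKHS feature map give $|\sigma_{t-1}(x)-\sigma_{t-1}(x')|\le C\|x-x'\|$. The resulting extra term $\beta_t^{1/2}C/t$ is then absorbed into $\epsilon_t=L/t$ by enlarging the kernel constant $b$, as is done in \cite{takeno2025distributionally}.

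Alternatively, if the intended statement, as in \cite{srinivas2009gaussian}, only needs $\sigma_{t-1}$ evaluated at grid points, then the downstream use of the bound is on discretized points and this step disappears. I would check which form the later proofs require and phrase the continuous case accordingly.

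A final union bound over the two events gives overall probability at least $1-\delta$.
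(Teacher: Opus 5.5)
Your overall route is the paper's. Its justification is only a pointer to Lemma~5.1 of \citet{srinivas2009gaussian} for finite $\mathcal{X}$. For continuous $\mathcal{X}$ it cites a grid of spacing $1/(dt)$ with $(tdr+1)^d$ points, a union bound, and the constant $L$ of Lemma~\ref{lem:lip-pred}. It does not spell out the step you flag. Your finite case, grid count and $\delta/2+\delta/2$ split are correct.

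The gap is in how you move from $\sigma_{t-1}([x]_t)$ to $\sigma_{t-1}(x)$. Lipschitz continuity of $\sigma_{t-1}$ does hold with a data-independent constant. The reverse triangle inequality for a feature map of $k_{t-1}$ gives $|\sigma_{t-1}(x)-\sigma_{t-1}(x')|^2\le k_{t-1}(x,x)-2k_{t-1}(x,x')+k_{t-1}(x',x')$. The right-hand side is a posterior variance of $f(x)-f(x')$, hence at most $2(k(0)-k(x-x'))\le C_\sigma^2\|x-x'\|^2$.

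But this produces an extra term $C_\sigma\beta_t^{1/2}/t$ that cannot be absorbed into $\epsilon_t=L/t$. The constant $L=b\sqrt{\log(4ad/\delta)}$ is independent of $t$, while $\beta_t^{1/2}\asymp\sqrt{d\log t}$ is unbounded, so no fixed enlargement of $b$ works. What your argument proves is the lemma with $\epsilon_t=(L+C_\sigma\beta_t^{1/2})/t$. That would still suffice for the downstream rates, but it is not the stated bound.

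Your fallback of keeping $\sigma_{t-1}$ at grid points does not prove the statement either. It is also not what the later proofs use: Lemma~\ref{lem:hp_mse_bound} applies the bound at every $x$ and then integrates $\sigma_T^2(x)$ against $P_{\mu_T}$.

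The missing idea is that, given $\mathcal{D}_{t-1}$, the discretization may be chosen from the data.
\begin{itemize}
\item Cover $[0,r]^d$ by the $\lceil tdr\rceil^d$ closed cells $Q$ of side $r/\lceil tdr\rceil\le 1/(dt)$.
\item In each cell meeting $\mathcal{X}$, pick $z_Q\in\arg\min_{z\in Q\cap\mathcal{X}}\sigma_{t-1}(z)$. This exists because $\sigma_{t-1}$ is continuous and $Q\cap\mathcal{X}$ is compact.
\item The points $z_Q$ depend on $\mathcal{D}_{t-1}$ only. So your conditional Gaussian tail bound and union bound apply verbatim to at most $(tdr+1)^d$ points.
\end{itemize}
On the intersection with the Lipschitz event of Lemma~\ref{lem:lip-pred}, every $x\in Q$ satisfies $\|x-z_Q\|_1\le 1/t$ and
\[
|f(x)-\mu_{t-1}(x)|\le\beta_t^{1/2}\sigma_{t-1}(z_Q)+L\|x-z_Q\|_1\le\beta_t^{1/2}\sigma_{t-1}(x)+L/t .
\]
This is the statement with exactly the stated $\beta_t$ and $\epsilon_t$.
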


The non-negative $\epsilon_t$ in the continuous case is induced by approximating $f - \mu_{t-1}$ on $[0, r]^d$ using a Cartesian grid of spacing $1/(dt)$, and $\beta_t$ grows logarithmically with $t$ due to a union bound over $(tdr+1)^d$ grid points. The proof follows from \citet{srinivas2009gaussian} Lemma 5.1 for discrete $\mathcal{X}$, and \citet{takeno2025distributionally} when using the Lipschitz constant $L$ from Lemma~\ref{lem:lip-pred}.

\setcounter{section}{3}
\section*{3. Problem Setting and Performance Analysis}
\setcounter{theorem}{0}
\hypertarget{sec:problem_setting}{}
We query $f$ to obtain noisy observations $y = f(x) + \epsilon$ with $\epsilon \sim \mathcal{N}(0, \tau^2)$. Given a \textit{surrogate model} $\hat{f}(x)$ built from the current dataset $\mathcal{D}_t$, our goal is to sequentially select $T$ query points that minimize the expected squared error under a distribution induced by $f$ itself, which we term the \textit{Self-Induced Distribution} (SID) $P_f$:
\begin{equation}
\mathcal{L}(\hat{f}) = \mathbb{E}_{x \sim P_f} \left[\left(\hat{f}(x) - f(x)\right)^2 \right],
\label{Eq: problem}
\end{equation}
\noindent where we have the following assumption regarding the SID:

\begin{assumption}[Boltzmann Self-Induced Distribution]
\label{ass:target_dist}
The target distribution $P_f(x)$ takes the Boltzmann distribution form\footnote{We parameterize using $\lambda$ directly; this corresponds to $\lambda = -1/\tilde{\mathcal{T}}$ in the physical Boltzmann distribution with temperature parameter $\tilde{\mathcal{T}}>0$.} $P_f(x) = Z_f^{-1} \exp(\lambda f(x) + b(x))$ where: (i) $\lambda \neq 0$; (ii) the \textit{bias} function $b: \mathcal{X} \to \mathbb{R}$ is known and bounded; (iii) the \textit{partition function} $Z_f < \infty$.
\end{assumption}

We use the GP predictive mean $\hat{f}(x) = \mu_t(x)$ as our surrogate. While $\mu_t$ is not the Bayes-optimal estimator for Eq.~\ref{Eq: problem} (which otherwise requires intractable integrations), this gap vanishes as the posterior concentrates (Appendix~\ref{App: Non-Bayes Optimal}). 



Different from standard AL or DRAL as elaborated in Figure~\ref{fig:knowledge_spectrum}, one key challenge here is that the SID $P_f$ itself depends on the unknown function $f$, creating a circular dependence dilemma. To establish that learning under SID is feasible, we derive upper bounds on $\mathcal{L}(\hat{f})$ by relating the unknown $P_f$ to the computable surrogate $P_{\mu_t}$. These upper bounds, presented in both high-probability and average forms, inform our algorithm design in the next section.

To first establish a high-probability bound, we decompose the density ratio between $P_f$ and $P_{\mu_t}$ into a pointwise Boltzmann factor difference and a partition function difference:

\begin{lemma}[Log-Density Ratio]
\label{lem:log_density_ratio}
Let $g_1, g_2: \mathcal{X} \to \mathbb{R}$ and define $P_{g_i} \propto e^{\lambda g_i + b}$ for $i = 1, 2$. 
Define $\Delta(x) := |g_1(x) - g_2(x)|$ and $\Delta_{\max} := \sup_{x \in \mathcal{X}} \Delta(x)$. Then:
\begin{equation*}
\left|\log P_{g_1}(x) - \log P_{g_2}(x)\right| \leq |\lambda|(\Delta(x) + \Delta_{\max}).
\end{equation*}
\end{lemma}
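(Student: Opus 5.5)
The plan is to write out the two log-densities explicitly and split their difference into a pointwise term and a partition-function term. Let $Z_{g_i} := \int_{\mathcal{X}} e^{\lambda g_i(x') + b(x')}\,dx'$, or the corresponding sum when $\mathcal{X}$ is finite. We assume both are finite and positive, as in Assumption~\ref{ass:target_dist}(iii). Then
\begin{equation*}
\log P_{g_1}(x) - \log P_{g_2}(x) = \lambda\bigl(g_1(x) - g_2(x)\bigr) - \bigl(\log Z_{g_1} - \log Z_{g_2}\bigr).
\end{equation*}
The bias $b$ appears in both exponents and cancels pointwise. By the triangle inequality, the first term is at most $|\lambda|\Delta(x)$. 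It remains to show $|\log Z_{g_1} - \log Z_{g_2}| \le |\lambda|\Delta_{\max}$.

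For the partition-function term, I would use a pointwise sandwich. For every $x'$ we have $\lambda g_1(x') \le \lambda g_2(x') + |\lambda|\,|g_1(x')-g_2(x')| \le \lambda g_2(x') + |\lambda|\Delta_{\max}$. This holds regardless of the sign of $\lambda$. Exponentiating, multiplying by $e^{b(x')}$, and integrating gives $Z_{g_1} \le e^{|\lambda|\Delta_{\max}} Z_{g_2}$. Exchanging the roles of $g_1$ and $g_2$ gives the reverse inequality. Taking logarithms yields $|\log Z_{g_1} - \log Z_{g_2}| \le |\lambda|\Delta_{\max}$, and combining this with the pointwise term gives the claim.

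The only step needing care is the partition-function comparison. One must handle the sign of $\lambda$, which is done by bounding through $|\lambda|$. One must also make sure the monotonicity of the integral applies, which holds because the integrands are nonnegative. If $\Delta_{\max} = \infty$, the statement is trivial. Otherwise $Z_{g_1}$ and $Z_{g_2}$ are simultaneously finite or infinite, so the finiteness assumption is needed for only one of them.
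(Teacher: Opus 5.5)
Your proposal is correct and follows essentially the same route as the paper. Both split the log-density difference into the pointwise term $\lambda(g_1(x)-g_2(x))$ and the log-partition difference, sandwich $Z_{g_1}$ between $e^{-|\lambda|\Delta_{\max}} Z_{g_2}$ and $e^{|\lambda|\Delta_{\max}} Z_{g_2}$, and combine the two bounds by the triangle inequality; your remarks on the sign of $\lambda$, the case $\Delta_{\max}=\infty$, and the simultaneous finiteness of the partition functions are sound details the paper leaves implicit.
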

We state this lemma abstractly for reuse later (in Lemma~\ref{lem:var_bound_ts}). Substituting $(g_1, g_2) = (f, \mu_t)$ and combining with the confidence bound (Lemma~\ref{lem:gp_confidence}), we convert expectations under the unknown $P_f$ to the computable $P_{\mu_T}$:

\begin{lemma}[High-Probability MSE Bound]
\label{lem:hp_mse_bound}
Pick $\delta \in (0,1)$. Under Assumption~\ref{ass:target_dist}, let $\beta_T, \epsilon_T$ be as in Lemma~\ref{lem:gp_confidence}. 
Then with probability at least $1-\delta$:
\begin{equation*}
\begin{aligned}
    & \mathcal{L}(\mu_T) \leq e^{2|\lambda|(\beta_T^{1/2}\sigma_{\max,T} + \epsilon_T)}   \cdot \left(\beta_T \cdot \mathbb{E}_{x \sim P_{\mu_T}}\left[\sigma_T^2(x)\right] + 2\beta_T^{1/2}\epsilon_T + \epsilon_T^2\right),
\end{aligned}
\end{equation*}
where $\sigma_{\max,T} = \max_{x \in \mathcal{X}} \sigma_T(x)$.
\end{lemma}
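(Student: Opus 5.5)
The plan is to work on the event $E$ of Lemma~\ref{lem:gp_confidence} at time $T$. Lemma~\ref{lem:gp_confidence} is stated for $\mu_{t-1},\sigma_{t-1}$, so I would apply it with the index shifted: the posterior after $T$ observations has $\beta_{T+1}$, and I will simply read the statement's $\beta_T,\epsilon_T$ as the corresponding constants. On $E$, which has probability at least $1-\delta$, every $x\in\mathcal{X}$ satisfies
\begin{equation*}
|f(x)-\mu_T(x)| \le \beta_T^{1/2}\sigma_T(x)+\epsilon_T .
\end{equation*}
The rest of the argument is deterministic on $E$ and has two ingredients: a change of measure from $P_f$ to $P_{\mu_T}$, and a pointwise bound on the squared error.

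For the change of measure I would apply Lemma~\ref{lem:log_density_ratio} with $(g_1,g_2)=(f,\mu_T)$. On $E$ we have $\Delta(x)\le \beta_T^{1/2}\sigma_T(x)+\epsilon_T\le \beta_T^{1/2}\sigma_{\max,T}+\epsilon_T$, and hence also $\Delta_{\max}\le \beta_T^{1/2}\sigma_{\max,T}+\epsilon_T$. Exponentiating the conclusion of Lemma~\ref{lem:log_density_ratio} gives, uniformly in $x$,
\begin{equation*}
\frac{P_f(x)}{P_{\mu_T}(x)} \le e^{|\lambda|(\Delta(x)+\Delta_{\max})} \le e^{2|\lambda|(\beta_T^{1/2}\sigma_{\max,T}+\epsilon_T)} .
\end{equation*}
Since the integrand $(\mu_T(x)-f(x))^2$ is nonnegative, this yields
\begin{equation*}
\mathcal{L}(\mu_T)=\mathbb{E}_{P_f}\left[(\mu_T-f)^2\right]\le e^{2|\lambda|(\beta_T^{1/2}\sigma_{\max,T}+\epsilon_T)}\,\mathbb{E}_{P_{\mu_T}}\left[(\mu_T-f)^2\right].
\end{equation*}
This step needs $P_{\mu_T}$ to be a well-defined density, i.e.\ $Z_{\mu_T}<\infty$. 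That follows from $Z_f<\infty$ together with the two-sided pointwise bound $e^{\lambda\mu_T}\le e^{\lambda f}e^{|\lambda|\Delta_{\max}}$ on $E$.

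For the pointwise bound, on $E$ I would square the confidence inequality:
\begin{equation*}
(\mu_T(x)-f(x))^2\le \beta_T\sigma_T^2(x)+2\beta_T^{1/2}\sigma_T(x)\epsilon_T+\epsilon_T^2 .
\end{equation*}
The kernel normalization $k(x,x)\le 1$ gives $\sigma_T(x)\le 1$, so the middle term is at most $2\beta_T^{1/2}\epsilon_T$. Taking the expectation under $P_{\mu_T}$ then gives exactly the bracket in the statement of Lemma~\ref{lem:hp_mse_bound}.

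I expect the only delicate point to be the bookkeeping. First, the $t-1$ versus $T$ index has to be handled as described above. Second, a single event must control both $\Delta_{\max}$ and the pointwise error; this holds because Lemma~\ref{lem:gp_confidence} is uniform in $x$. In the continuous case the Lipschitz event of Lemma~\ref{lem:lip-pred} is already absorbed into the $\delta$ of Lemma~\ref{lem:gp_confidence}, so no further union bound is needed.
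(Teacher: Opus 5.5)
Your proposal is correct and follows the paper's proof essentially step for step: on the uniform confidence event at time $T$, apply Lemma~\ref{lem:log_density_ratio} with $(g_1,g_2)=(f,\mu_T)$ to bound $P_f/P_{\mu_T}$ by $e^{2|\lambda|(\beta_T^{1/2}\sigma_{\max,T}+\epsilon_T)}$, change measure, and then square the confidence bound, using $\sigma_T\le 1$ on the cross term. Your extra remarks on the $T$ versus $T+1$ index shift and on the finiteness of $Z_{\mu_T}$ address points the paper leaves implicit, and you handle both correctly.
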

The exponential prefactor arises from bounding the partition function ratio, which requires uniform control of $f - \mu_T$ over all of $\mathcal{X}$ and is challenging to avoid in high-probability analysis \citep{cai2024kernelized}. As we shall see in Section~\hyperlink{sec:theoretical_analysis}{4.2}, while convergence is unaffected, this introduces additional logarithmic factors in the terminal prediction error bound. To circumvent estimating the partition function ratio entirely, we take expectation over $f|\mathcal{D}_T$, which allows $Z_f/Z_{\mu_T}$ to be absorbed via Jensen's inequality. The Gaussian posterior structure then admits closed-form control via Moment Generating Function (MGF) identities, yielding the following average upper bound:

\begin{lemma}[Average MSE Bound]\label{lem:bayesian_mse}
Under Assumption~\ref{ass:target_dist}:
\begin{equation*}
\begin{aligned}
& \mathbb{E}_{f|\mathcal{D}_{T}}\left[\mathbb{E}_{P_f}\left[(f - \mu_T)^2\right]\right] \leq e^{2\lambda^2}(1+4\lambda^2) \cdot \mathbb{E}_{P_{\mu_T}}[\sigma_T^2(x)].
\end{aligned}
\end{equation*}
\end{lemma}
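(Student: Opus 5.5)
The plan is to remove the unknown partition function with Jensen's inequality, and then compute the remaining posterior expectation exactly using Gaussian moment generating function identities. Write $g := f - \mu_T$; under $f|\mathcal{D}_T$, $g$ is a centred GP with covariance $k_T$. First rewrite the self-induced density relative to the surrogate:
\begin{equation*}
P_f(x) = P_{\mu_T}(x)\, e^{\lambda g(x)} \frac{Z_{\mu_T}}{Z_f}, \qquad \frac{Z_f}{Z_{\mu_T}} = \mathbb{E}_{y\sim P_{\mu_T}}\big[e^{\lambda g(y)}\big].
\end{equation*}
Jensen's inequality (convexity of $\exp$) gives $Z_f/Z_{\mu_T} \ge \exp(\lambda A)$ with $A := \mathbb{E}_{y\sim P_{\mu_T}}[g(y)]$. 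Hence
\begin{equation*}
\mathbb{E}_{P_f}\big[g^2\big] \le \mathbb{E}_{x\sim P_{\mu_T}}\big[g(x)^2 e^{\lambda (g(x) - A)}\big].
\end{equation*}
Note that $P_{\mu_T}$ is deterministic given $\mathcal{D}_T$. So after taking $\mathbb{E}_{f|\mathcal{D}_T}$, Tonelli's theorem (the integrand is nonnegative) lets me swap the order and bound, for each fixed $x$, the quantity $\mathbb{E}_{f|\mathcal{D}_T}\big[g(x)^2 e^{\lambda h}\big]$ with $h := g(x) - A$.

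The key step is that $(g(x), h)$ is a jointly Gaussian, centred pair, because $A$ is a linear functional of the GP $g$. For such a pair, an exponential tilt (or differentiating the joint MGF twice in the $g$-direction) gives
\begin{equation*}
\mathbb{E}\big[g(x)^2 e^{\lambda h}\big] = e^{\lambda^2 \mathrm{Var}(h)/2}\Big(\mathrm{Var}(g(x)) + \lambda^2\, \mathrm{Cov}(g(x),h)^2\Big).
\end{equation*}
Here $\mathrm{Var}(g(x)) = \sigma_T^2(x)$.

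Next, bound the variance terms using only $k(x,x)\le 1$, which implies $\sigma_T\le 1$.
\begin{itemize}
\item By Cauchy--Schwarz for the covariance $k_T$, $\mathrm{Var}(A) = \iint P_{\mu_T}(y)P_{\mu_T}(y')k_T(y,y') \le \big(\mathbb{E}_{P_{\mu_T}}\sigma_T\big)^2 \le 1$.
\item Therefore $\mathrm{Var}(h) \le (\sigma_T(x) + \sqrt{\mathrm{Var}(A)})^2 \le 4$, so the exponential factor is at most $e^{2\lambda^2}$.
\item Also $\mathrm{Cov}(g(x),h)^2 \le \sigma_T^2(x)\,\mathrm{Var}(h) \le 4\sigma_T^2(x)$.
\end{itemize}
Substituting these gives the pointwise bound $e^{2\lambda^2}(1+4\lambda^2)\sigma_T^2(x)$. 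Integrating against $P_{\mu_T}$ then yields the claimed inequality.

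The main obstacle I expect is handling the partition-function ratio jointly with the pointwise factor. A naive Cauchy--Schwarz that splits $e^{\lambda g(x)}$ from $e^{-\lambda A}$ also works but gives a worse constant, roughly $\sqrt{3+24\lambda^2+16\lambda^4}$. Obtaining exactly $(1+4\lambda^2)$ requires treating $g(x) - A$ as a single Gaussian and computing the tilted second moment in closed form. I would verify that identity with Stein's lemma: $\mathbb{E}[g\,\phi(g,h)] = \mathrm{Var}(g)\,\mathbb{E}[\partial_g\phi] + \mathrm{Cov}(g,h)\,\mathbb{E}[\partial_h\phi]$, applied to $\phi = g e^{\lambda h}$. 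The measurability and integrability needed for Tonelli, and for $A$ to be a well-defined Gaussian variable, follow from the boundedness of $b$ and continuity of sample paths (Assumption~\ref{ass:kernel_regularity}) in the continuous case, and are immediate in the finite case.
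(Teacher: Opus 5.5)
Your proposal is correct and follows the paper's proof essentially step for step. You rewrite $Z_f/Z_{\mu_T}$ as $\mathbb{E}_{P_{\mu_T}}[e^{\lambda g}]$, use Jensen to extract the factor $e^{-\lambda A}$, swap the expectations, and apply the closed-form identity $\mathbb{E}[\xi^2 e^{\lambda\zeta}] = (u^2+\lambda^2 c^2)e^{\lambda^2 w^2/2}$ to the jointly Gaussian pair $(g(x), g(x)-A)$ with $\mathrm{Var}(h)\le 4$ and $\mathrm{Cov}(g(x),h)^2\le 4\sigma_T^2(x)$; the only differences are cosmetic (Cauchy--Schwarz/Minkowski for the variance bounds instead of the paper's bound $|\sigma_T^2(x)-C_T(x)|\le 2\sigma_T(x)$, and an exponential-tilt derivation of the identity that avoids the paper's conditioning step dividing by $\mathrm{Var}(\zeta)$).
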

Without the $\beta_T$ factor, this yields a tighter terminal error bound that better reflects finite-query and average-case performance. Notably, both upper bounds reflect the intrinsic difficulty of the problem: a more concentrated target distribution makes accurate prediction in high-probability regions more challenging.



\setcounter{section}{4}

\section*{4. Methodology}
\setcounter{theorem}{0}
\subsection*{4.1. Proposed Algorithms}

Both upper bounds involve weighted posterior variance, suggesting variance reduction as the algorithmic objective. To this end, our proposed integrated Variance Reduction (iVAR) minimizes expected posterior variance over a tractable unnormalized SID surrogate $\tilde{p}^{\mathrm{u}}_{t-1}$ (we use $\tilde{p}_{t-1}$ for its normalized version):
\begin{equation}
   x_t = \arg\min_{x \in \bar{\mathcal{X}}_t} \int_{x^\star \in \mathcal{X}} \tilde{p}^{\mathrm{u}}_{t-1}(x^\star) \sigma_t^2(x^\star \mid x_t = x) \, dx^\star,
\label{Eq: SID-iVAR}
\end{equation}

where a \emph{potential set} $\bar{\mathcal{X}}_t$ restricts the search to uncertain regions to ensure sufficient exploration. We propose \texttt{\texttt{AB-SID-iVAR}}, with the surrogate $\tilde{p}^u_{t-1}$ defined in Table~\ref{tab:methods}.


\begin{figure}[h!]
\begin{minipage}[t]{0.50\textwidth}
\vspace{0pt}
\captionof{table}{SID-iVAR variants. \textbf{AB-SID}: closed-form zero-order Taylor approximation of $\mathbb{E}_{f|\mathcal{D}_{t-1}}[P_f]$ (Appendix~\ref{App: AB-SID}); biased, low-variance. \textbf{TS-SID}: single posterior sample as MC estimate; unbiased, high-variance. Both share constraint set $\bar{\mathcal{X}}_t$.}
\vspace{-0.6em} 
\label{tab:methods}
\centering
\small
\setlength{\tabcolsep}{3pt}
\renewcommand{\arraystretch}{1.7}
\begin{tabular}{@{}lcc@{}}
\toprule
 & \textbf{AB-SID} & \textbf{TS-SID} \\
\midrule
$\tilde{p}^{\mathrm{u}}_{t-1}$ 
 & $e^{\lambda\mu_{t-1} + \frac{\lambda^2}{2}\sigma_{t-1}^2 + b}$ 
 & $e^{\lambda\tilde{f}_{t-1} + b}$, $\tilde{f}_{t-1} \overset{\text{draw}}{\sim} f|\mathcal{D}_{t-1}$ \\
\midrule
\multicolumn{3}{c}{$\bar{\mathcal{X}}_t = \{x : \sigma_{t-1}^2(x) \geq \mathbb{E}_{\tilde{p}_{t-1}}[\sigma_{t-1}^2]\}$} \\
\bottomrule
\end{tabular}
\end{minipage}%
\hfill
\begin{minipage}[t]{0.47\textwidth}
\vspace{-1em}
\begin{algorithm}[H]
\caption{\textcolor{purple}{AB}/\textcolor{cyan}{TS}-SID-iVAR}
\label{alg:sid-ivar}
\begin{algorithmic}[1]
\REQUIRE $\mathcal{X}$, GP prior $(0, k)$, budget $T$, $\mathcal{D}_0$
\FOR{$t = 1, \ldots, T$}
    \STATE Compute $\mu_{t-1}$, $\sigma_{t-1}^2$ from $\mathcal{D}_{t-1}$
    \STATE Construct $\tilde{p}^{\mathrm{u}}_{t-1}$ via \textcolor{purple}{AB} or \textcolor{cyan}{TS}-SID 
    \STATE Construct $\bar{\mathcal{X}}_t$ (Table~\ref{tab:methods})
    \STATE $x_t \leftarrow \arg\min_{x \in \bar{\mathcal{X}}_t} \int \tilde{p}^{\mathrm{u}}_{t-1}(x^\star) \sigma_t^2(x^\star \mid x) \, dx^\star$
    \STATE Observe $y_t = f(x_t) + \epsilon_t$; update $\mathcal{D}_t$
\ENDFOR
\STATE \textbf{return} $\mu_T$
\end{algorithmic}
\end{algorithm}
\end{minipage}
\end{figure}

\texttt{AB-SID} (Approximate Bayesian SID) approximates the intractable Bayesian SID $\mathbb{E}_{f|\mathcal{D}_{t-1}}[P_f]$ in closed form via a zero-order Taylor expansion (Appendix~\ref{App: acq_func_derivation}).\footnote{A first-order correction is available (Appendix~\ref{App: AB-SID}) with marginally better accuracy (Figure~\ref{fig:acq_approx_comparison}), but requires $\mathcal{O}(|\mathcal{X}|^2)$ additional covariance computations for discrete $\mathcal{X}$.} Alternatively, one can approximate this integral via a single Monte Carlo sample, yielding \texttt{TS-SID} (Thompson Sampling SID), which offers a bias-variance tradeoff (Table~\ref{tab:methods}). Both methods avoid estimation of the partition function. 

While both algorithms apply directly in discrete domains, continuous $\mathcal{X} \subset [0,r]^d$ requires approximating both the constraint set $\bar{\mathcal{X}}_t$ and the integral in Eq.~\ref{Eq: SID-iVAR}. We use MCMC to sample from $\tilde{p}^{\mathrm{u}}_{t-1}$ and approximate the integral; for \texttt{TS-SID}, we additionally approximate sample trajectories $\tilde{f}_{t-1}$ via pathwise conditioning \citep{wilson2021pathwise}.\footnote{The constraint set $\bar{\mathcal{X}}_t$ is always non-empty under the approximation.} Importantly, our theoretical analysis is agnostic to the specific MCMC sampler.
\paragraph{Algorithm Summary.} The overall procedure is summarized in Algorithm~\ref{alg:sid-ivar}. The key design choices are: (1) the SID surrogate's form $\tilde{p}_{t-1}$ that approximates the unknown target distribution, and (2) the potential set $\bar{\mathcal{X}}_t$ that ensures exploration of 
uncertain regions.
%
For continuous $\mathcal{X}$, the integral in Line 5 and the threshold in $\bar{\mathcal{X}}_t$ are approximated via MCMC samples from $\tilde{p}_{t-1}$.
\subsection*{4.2. Theoretical Analysis} \hypertarget{sec:theoretical_analysis}{}



We now show that iteratively optimizing over the surrogate $\tilde{p}_{t-1}$ yields vanishing prediction error under the true SID $P_f$, and analyze the rate at which the terminal error $\mathcal{L}(\mu_T)$ converges to zero as $T \to \infty$. Our analysis depends on the maximum information gain $\gamma_T = \max_{x_1,\ldots,x_T} I(y_{1:T}; f)$, which is sublinear in $T$ for standard kernels \citep{srinivas2009gaussian}. We state results for continuous $\mathcal{X}$ directly, as discrete $\mathcal{X}$ is a simplified case with $\epsilon_T = \epsilon_{\mathrm{MC}} = 0$.
\paragraph{High-Probability Guarantee}

\begin{theorem}[High-Probability Terminal MSE Bound]\label{thm:hp_terminal}
Under Assumptions~\ref{ass:kernel_regularity} and~\ref{ass:target_dist}, let $\mathcal{X} \subset [0,r]^d$ be compact. Consider \texttt{AB-SID} or \texttt{TS-SID} selecting $x_t \in \bar{\mathcal{X}}_t$ (Table~\ref{tab:methods}), where the constraint threshold at each iteration is estimated via $M$ Monte Carlo samples. Then for any $\delta \in (0,1)$, with probability at least $1 - \delta$:
\begin{equation}
\begin{aligned}
 \mathcal{L}(\mu_T) \leq e^{2(\kappa+2)|\lambda|(\beta^{1/2}_T + \epsilon_T)} \cdot \left(\frac{C_1\beta_T\gamma_T}{T} + \beta_T\epsilon_{\mathrm{MC}} + 2\beta^{1/2}_T\epsilon_T + \epsilon_T^2\right),
\end{aligned}
\label{Eq: high_prob_upper_bound}
\end{equation}
where $\kappa = 1$ for \texttt{AB-SID}, $\kappa = 2$ for \texttt{TS-SID}, $C_1 = 2/\log(1+\tau^{-2})$, $\beta_T, \epsilon_T$ are as in Lemma~\ref{lem:gp_confidence} with failure probability $\delta/2$, and the cumulative MC error 
$\epsilon_{\mathrm{MC}} = \frac{2\log(4/\delta)}{3T} + \sqrt{\frac{\log(4/\delta)}{2MT}}$. For finite $\mathcal{X}$ with exact threshold computation, $\epsilon_T = \epsilon_{\mathrm{MC}} = 0$.
\end{theorem}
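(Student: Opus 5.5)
The plan is to start from Lemma~\ref{lem:hp_mse_bound}, applied with failure probability $\delta/2$, and reduce everything to controlling $\mathbb{E}_{P_{\mu_T}}[\sigma_T^2]$. Since $k(x,x)\le 1$, we have $\sigma_{\max,T}\le 1$, so the prefactor there is at most $e^{2|\lambda|(\beta_T^{1/2}+\epsilon_T)}$. The remaining budget of $e^{2(\kappa+1)|\lambda|(\beta_T^{1/2}+\epsilon_T)}$ will pay for changes of measure.

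\textbf{Step 1: change of measure.} We bound $\mathbb{E}_{P_{\mu_T}}[\sigma_T^2]$ by a surrogate average $\mathbb{E}_{\tilde p_{t-1}}[\sigma_{t-1}^2]$ at earlier rounds. Posterior variance is monotone nonincreasing in $t$, so $\sigma_T^2\le\sigma_{t-1}^2$ pointwise for every $t\le T$. Hence
\[
\mathbb{E}_{P_{\mu_T}}[\sigma_T^2]\le\frac{1}{T}\sum_{t=1}^T \mathbb{E}_{P_{\mu_T}}[\sigma_{t-1}^2].
\]
For each $t$ we pass from $P_{\mu_T}$ to $\tilde p_{t-1}$ using Lemma~\ref{lem:log_density_ratio}, routing through $P_f$.
\begin{itemize}
\item $P_{\mu_T}$ versus $P_f$: on the confidence event, $|f-\mu_T|\le\beta_T^{1/2}+\epsilon_T$, giving a factor $e^{2|\lambda|(\beta_T^{1/2}+\epsilon_T)}$. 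This needs the confidence bound at round $T$; the round-$t$ bounds follow because $\beta_t\le\beta_T$ and $\epsilon_t$ is handled similarly, and a uniform-in-$t$ union bound only changes logarithmic constants.
\item $P_f$ versus $\tilde p_{t-1}$ for \texttt{TS-SID}: here $\tilde p_{t-1}\propto e^{\lambda\tilde f_{t-1}+b}$, and $\tilde f_{t-1}$ also lies within $\beta^{1/2}\sigma_{t-1}+\epsilon$ of $\mu_{t-1}$ with high probability. So $|f-\tilde f_{t-1}|\le 2(\beta_T^{1/2}+\epsilon_T)$, which gives an extra factor $e^{4|\lambda|(\cdot)}$ and total $\kappa=2$.
\item $P_f$ versus $\tilde p_{t-1}$ for \texttt{AB-SID}: the exponent is $\lambda\mu_{t-1}+\tfrac{\lambda^2}{2}\sigma_{t-1}^2$. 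The variance term contributes a bounded, nearly constant tilt. I would try to absorb it so that only one $\beta^{1/2}$ unit is lost, giving $\kappa=1$; if that fails, I would enlarge the constant.
\end{itemize}

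\textbf{Step 2: the potential set and MC error.} Since $x_t\in\bar{\mathcal{X}}_t$, we have $\mathbb{E}_{\tilde p_{t-1}}[\sigma_{t-1}^2]\le \widehat{\text{threshold}}_t\le\sigma_{t-1}^2(x_t)$, where the threshold is the $M$-sample MC estimate. So
\[
\sum_t \mathbb{E}_{\tilde p_{t-1}}[\sigma_{t-1}^2]\le\sum_t\sigma_{t-1}^2(x_t)+\sum_t(\text{true}-\text{estimated threshold}).
\]
The first sum is at most $C_1\gamma_T/2$ by the standard Srinivas et al.\ argument, using $\sigma^2\le C_1\log(1+\tau^{-2}\sigma^2)/2$; this produces $C_1\gamma_T/T$ after dividing by $T$. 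The second sum is a martingale difference sequence bounded in $[0,1]$ with conditional variance at most $1/M$, since the MC samples are drawn fresh given the filtration. Freedman or Bernstein for martingales, at level $\delta/4$, gives $\frac{2\log(4/\delta)}{3}+\sqrt{T\log(4/\delta)/(2M)}$-type deviations, which after dividing by $T$ yields $\epsilon_{\mathrm{MC}}$. The remaining $\delta/4$ goes to the confidence event.

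\textbf{Step 3: assembly.} Multiply through by $\beta_T$, add the $2\beta_T^{1/2}\epsilon_T+\epsilon_T^2$ terms from Lemma~\ref{lem:hp_mse_bound}, and collect the exponential factors; the failure probabilities sum to $\delta$.

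The main obstacle is Step 1. Changing measure round by round requires uniform control of $f-\tilde f_{t-1}$, or of the \texttt{AB} tilt, simultaneously for all $t$ and all $x$. For \texttt{AB-SID} this means showing that the $\tfrac{\lambda^2}{2}\sigma^2$ term does not add a further $\beta^{1/2}$ factor. I would handle it by bounding the tilt uniformly by $\tfrac{\lambda^2}{2}$ and noting that, in the partition ratio, only its oscillation matters.
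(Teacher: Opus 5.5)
Your architecture is the paper's: Lemma~\ref{lem:hp_mse_bound} at level $\delta/2$, then averaging via monotonicity of $\sigma_t^2$, a per-round change of measure, the selection rule plus Freedman for the MC threshold error, and the information-gain bound. Routing through $P_f$ instead of comparing $\mu_T$ with $\mu_{t-1}$ directly costs the same $e^{4|\lambda|(\cdot)}$; the paper uses $|\mu_T-\mu_{t-1}|\le 2(\beta_T^{1/2}+\epsilon_T)$, and $(\cdot)$ denotes $(\beta_T^{1/2}+\epsilon_T)$ throughout. Your \texttt{TS-SID} accounting correctly gives $e^{6|\lambda|(\cdot)}$ in the variance step, i.e.\ $\kappa=2$.

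The gap is the \texttt{AB-SID} tilt, which you leave open. With $\kappa=1$, the two $e^{2|\lambda|(\cdot)}$ legs ($P_{\mu_T}\to P_f\to P_{\mu_{t-1}}$) already exhaust the budget. So passing from $P_{\mu_{t-1}}$ to $\tilde p_{t-1}$ must cost a factor of exactly $1$, and no density-ratio bound can deliver that. Indeed $P_{\mu_{t-1}}(x)/\tilde p_{t-1}(x)=\mathbb{E}_{P_{\mu_{t-1}}}[e^{\lambda^2\sigma_{t-1}^2/2}]\,e^{-\lambda^2\sigma_{t-1}^2(x)/2}$. This approaches $e^{\lambda^2/2}$ at a well-observed $x$ when most $P_{\mu_{t-1}}$-mass sits where $\sigma_{t-1}^2\approx 1$. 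Your ``oscillation'' bound therefore leaves an extra $e^{\lambda^2/2}$ that is absent from the stated bound, and ``enlarging the constant'' proves a weaker theorem.

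The missing idea is that you need no pointwise ratio here. You only need the one-sided inequality $\mathbb{E}_{P_{\mu_{t-1}}}[\sigma_{t-1}^2]\le\mathbb{E}_{\tilde p_{t-1}}[\sigma_{t-1}^2]$. It holds because $\tilde p_{t-1}\propto P_{\mu_{t-1}}\cdot w$ with $w=e^{\lambda^2\sigma_{t-1}^2/2}$ a nondecreasing function of the integrand itself. For $x,x'$ i.i.d.\ from $P_{\mu_{t-1}}$, $\mathrm{Cov}(\sigma_{t-1}^2,w)=\tfrac12\mathbb{E}[(\sigma_{t-1}^2(x)-\sigma_{t-1}^2(x'))(w(x)-w(x'))]\ge 0$. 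So the tilt is free, and the total is $e^{2|\lambda|(\cdot)}\cdot e^{4|\lambda|(\cdot)}=e^{2(\kappa+2)|\lambda|(\cdot)}$ with $\kappa=1$.

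There are also two small slips:
\begin{itemize}
\item $\sum_t\sigma_{t-1}^2(x_t)\le C_1\gamma_T$, not $C_1\gamma_T/2$, since $I(\mathbf{y}_T;f)=\tfrac12\sum_t\log(1+\tau^{-2}\sigma_{t-1}^2(x_t))$.
\item To get exactly $\sqrt{\log(4/\delta)/(2MT)}$ you need conditional variance at most $1/(4M)$, since a $[0,1]$-valued variable has variance at most $1/4$; your $1/M$ is too loose.
\end{itemize}
Your concern about uniformity in $t$ of the confidence event is legitimate. The paper has the same issue: it applies $\beta_T,\epsilon_T$ at every round with one shared GP event. So that is not where your proposal falls short.
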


The upper bound in Eq.~\ref{Eq: high_prob_upper_bound} consists of three components: (i) the information gain term $\frac{\gamma_T}{T}$; (ii) a cumulative MC error $\epsilon_{\mathrm{MC}}$ from approximating the constraint threshold via MC sampling; and (iii) terms involving $\epsilon_T$ from extending pointwise GP confidence bounds to the entire continuous domain. Unlike prior work \citep{takeno2025distributionally} which assumes exact computation, our analysis explicitly accounts for this MC error. The exponential prefactor, as elaborated earlier, arises from bounding the density ratio between $P_f$ and $P_{\mu_T}$, with \texttt{TS-SID} incurring a larger factor ($\kappa=2$) due to the additional Thompson sampling approximation.

Further analyzing each term in the upper bound yields the following convergence rate.

\begin{corollary}[High-Probability Termination MSE Convergence Rate]\label{cor:rate}
Under the conditions of Theorem~\ref{thm:hp_terminal}:
(i)  $M$ is constant $\Rightarrow \mathcal{L}(\mu_T) = \mathcal{O}(T^{-1/2 + \epsilon})$;
(ii) $M = \Omega(T)$ $\Rightarrow$ $\mathcal{L}(\mu_T) = \mathcal{O}(T^{-1 + \epsilon})$;
for arbitrarily small $\epsilon > 0$. The $\epsilon$ arises from the prefactor $e^{\mathcal{O}(\sqrt{\log T})}$, which is $o(T^\epsilon)$ for any $\epsilon > 0$.
\end{corollary}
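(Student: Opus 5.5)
The plan is to substitute the asymptotic order of each quantity appearing in the bound of Theorem~\ref{thm:hp_terminal}, treating the prefactor and the bracketed sum separately. From Lemma~\ref{lem:gp_confidence} with failure probability $\delta/2$, we have $\beta_T = 2d\log(Tdr+1) + 2\log(4/\delta) = \mathcal{O}(\log T)$. We also have $\epsilon_T = L/T$ with $L = b\sqrt{\log(8ad/\delta)}$ independent of $T$. Hence the exponent in the prefactor is $2(\kappa+2)|\lambda|(\beta_T^{1/2}+\epsilon_T) = \mathcal{O}(\sqrt{\log T})$, since $\kappa$, $\lambda$, and $\delta$ are fixed.

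First I will show that $e^{c\sqrt{\log T}} = o(T^{\epsilon})$ for every $\epsilon>0$. Taking logarithms, this amounts to $c\sqrt{\log T} - \epsilon\log T \to -\infty$, which holds because $\sqrt{\log T} = o(\log T)$.

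Next I bound the bracket term by term.
\begin{itemize}
\item The information-gain term is $C_1\beta_T\gamma_T/T = \mathcal{O}(\gamma_T\log T/T)$. For the kernels under Assumption~\ref{ass:kernel_regularity} (e.g.\ squared exponential), $\gamma_T$ is polylogarithmic, so this term is $\mathcal{O}(T^{-1}\mathrm{polylog}\,T)$. I will state the corollary under this assumption on $\gamma_T$, noting that it is implicit in the claimed rates. This is the step whose hypothesis is most delicate: for Matérn kernels $\gamma_T$ grows polynomially, and the $T^{-1+\epsilon}$ rate would then need $\gamma_T = \mathcal{O}(T^{\epsilon})$.
\item The terms involving $\epsilon_T$ satisfy $2\beta_T^{1/2}\epsilon_T + \epsilon_T^2 = \mathcal{O}(\sqrt{\log T}/T)$.
\item The Monte Carlo term is $\beta_T\epsilon_{\mathrm{MC}} = \mathcal{O}\bigl(\log T\,(T^{-1} + (MT)^{-1/2})\bigr)$.
\end{itemize}

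The case split enters only through $\epsilon_{\mathrm{MC}}$.

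\emph{Case (i), constant $M$.} Here $(MT)^{-1/2} = \Theta(T^{-1/2})$, which dominates all other terms. The bracket is therefore $\mathcal{O}(T^{-1/2}\log T)$. Multiplying by the prefactor $o(T^{\epsilon/2})$ and absorbing $\log T$ into $T^{\epsilon/2}$ gives $\mathcal{O}(T^{-1/2+\epsilon})$.

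\emph{Case (ii), $M = \Omega(T)$.} Now $(MT)^{-1/2} = \mathcal{O}(T^{-1})$, so every term in the bracket is $\mathcal{O}(T^{-1}\,\mathrm{polylog}\,T)$. Absorbing the polylog factor and the prefactor into $T^{\epsilon}$ yields $\mathcal{O}(T^{-1+\epsilon})$.

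Since $\epsilon$ is arbitrary, splitting it between the prefactor and the logarithmic factors is harmless. The main obstacle is therefore not the algebra but making explicit the growth condition on $\gamma_T$ under which the $\gamma_T/T$ term is $\mathcal{O}(T^{-1+\epsilon})$.
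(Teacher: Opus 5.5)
Your proposal is correct and follows the paper's route. You bound the prefactor $e^{\mathcal{O}(\sqrt{\log T})}$ by $o(T^{\epsilon})$ via $\sqrt{\log T} = o(\log T)$, and you let the case split on $M$ act only through $\epsilon_{\mathrm{MC}}$; the paper obtains $T^{-1/2+2\epsilon}$ and $T^{-1+2\epsilon}$ and rescales $\epsilon$, which is equivalent to your $\epsilon/2$ splitting.

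You are more careful than the paper, whose proof never addresses the $C_1\beta_T\gamma_T/T$ or $\epsilon_T$ terms. The polylogarithmic growth of $\gamma_T$ that you make explicit is exactly the hypothesis the paper leaves implicit; it surfaces only later, in the proof of Proposition~\ref{prop:fixed-threshold}, as $\gamma_T = \mathcal{O}((\log T)^c)$. Your remark that Matérn kernels admitted by Assumption~\ref{ass:kernel_regularity} would break the $T^{-1+\epsilon}$ rate is well taken.
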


The two cases reflect a trade-off between computational cost and convergence rate: with constant $M$, the cumulative MC error term $\sqrt{\frac{\log(4/\delta)}{2MT}} = \mathcal{O}(1/\sqrt{T})$ dominates; increasing $M$ tightens the bound but does not change the $T^{-1/2}$ rate. With $M = \Omega(T)$, the MC error becomes $\mathcal{O}(1/T)$, recovering the near-optimal rate. In both cases, the bound vanishes as $T \to \infty$: although the prefactor $e^{\mathcal{O}(\sqrt{\log T})}$ grows without bound, it grows slower than any polynomial, so the polynomial decay in $T$ still dominates.

\paragraph{Average Guarantee}

To complement the high-probability upper bound, we provide an average-case upper bound with tighter logarithmic factors.
%
%
%

\begin{theorem}[Average MSE Bound]\label{thm:avg_mse}
Under Assumption~\ref{ass:target_dist}, consider \texttt{AB-SID} or \texttt{TS-SID}  selecting $x_t \in \bar{\mathcal{X}}_t$ (Table~\ref{tab:methods}), where the constraint threshold at each iteration is estimated via $M$ Monte Carlo samples. Then for any $\delta \in (0,1)$, with probability at least $1 - \delta$:
\begin{equation}
\begin{aligned}
\mathbb{E}_{f}\left[\mathbb{E}_{P_f}\left[(f-\mu_T)^2\right]\right] \leq e^{2\lambda^2}(1+4\lambda^2) e^{2(1+\kappa)|\lambda|(\beta_T^{1/2}+\epsilon_T)} \cdot \left(\frac{C_1\gamma_T}{T} + \epsilon_{\mathrm{MC}}\right),
\end{aligned}
\end{equation}
where $\kappa$, $\beta_T$, $\epsilon_T$, and $\epsilon_{\mathrm{MC}}$ are as in Theorem~\ref{thm:hp_terminal}, and the outer expectation is over $f \sim \mathcal{GP}(0,k)$. For finite $\mathcal{X}$ with exact threshold computation, $\epsilon_T = \epsilon_{\mathrm{MC}} = 0$.
\end{theorem}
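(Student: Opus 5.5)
The plan is to combine Lemma~\ref{lem:bayesian_mse} with the same per-iteration variance-reduction argument that underlies Theorem~\ref{thm:hp_terminal}. Lemma~\ref{lem:bayesian_mse} gives
\[
\mathbb{E}_{f|\mathcal{D}_T}\left[\mathbb{E}_{P_f}[(f-\mu_T)^2]\right] \leq e^{2\lambda^2}(1+4\lambda^2)\,\mathbb{E}_{P_{\mu_T}}[\sigma_T^2].
\]
So it suffices to bound the weighted terminal variance $\mathbb{E}_{P_{\mu_T}}[\sigma_T^2]$ by $e^{2(1+\kappa)|\lambda|(\beta_T^{1/2}+\epsilon_T)}\bigl(C_1\gamma_T/T+\epsilon_{\mathrm{MC}}\bigr)$ on an event of probability at least $1-\delta$. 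This is exactly the variance part of the high-probability argument, without the $\beta_T$ multiplier that came from converting $(f-\mu_T)^2$ into $\beta_T\sigma_T^2$. One exponent $|\lambda|(\beta_T^{1/2}+\epsilon_T)$ is saved because Jensen has already absorbed the $f$-versus-$\mu_T$ partition ratio.

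\textbf{Step 1: change of measure.} I will use Lemma~\ref{lem:log_density_ratio} together with Lemma~\ref{lem:gp_confidence} at level $\delta/2$. Since $\sigma_T\le 1$, the confidence bound gives $\Delta_{\max}\le\beta_T^{1/2}+\epsilon_T$. The goal is to replace $P_{\mu_T}$ by the surrogates $\tilde p_{t-1}$ used at each round, via the chain $P_{\mu_T}\to P_f\to\tilde p_{t-1}$.
\begin{itemize}
\item For \texttt{AB-SID}, $\tilde p_{t-1}\propto e^{\lambda(\mu_{t-1}+\lambda\sigma_{t-1}^2/2)+b}$. The extra term $\lambda\sigma^2_{t-1}/2$ is bounded and can be absorbed into the constants; the remaining discrepancy from $f$ is controlled by the same confidence width.
\item For \texttt{TS-SID}, the sample $\tilde f_{t-1}$ satisfies its own confidence bound. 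This costs one more factor, which gives $\kappa=2$.
\end{itemize}
In total I expect a density ratio bounded by $e^{2(1+\kappa)|\lambda|(\beta_T^{1/2}+\epsilon_T)}$. This bookkeeping has to match the exponent in Theorem~\ref{thm:hp_terminal} minus one $|\lambda|(\cdot)$ unit, and I would reuse that theorem's computation verbatim.

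\textbf{Step 2: monotonicity and averaging.} Since $\sigma_T^2\le\sigma_{t-1}^2$ for $t\le T$, I can write
\[
\mathbb{E}_{P_{\mu_T}}[\sigma_T^2]\le \frac1T\sum_t e^{(\cdot)}\,\mathbb{E}_{\tilde p_{t-1}}[\sigma_{t-1}^2].
\]
The potential set $\bar{\mathcal X}_t$ then gives $\mathbb{E}_{\tilde p_{t-1}}[\sigma^2_{t-1}]\le\sigma_{t-1}^2(x_t)$, up to the threshold MC estimation error. The Bernstein/Hoeffding martingale argument over $t$ with $M$ samples each, at level $\delta/2$, contributes the cumulative $\epsilon_{\mathrm{MC}}$. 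Finally, $\sum_t\sigma_{t-1}^2(x_t)\le C_1\gamma_T$ by the standard information-gain lemma of \citet{srinivas2009gaussian}. A union bound over the two $\delta/2$ events completes the proof.

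\textbf{Main obstacle.} The subtle point is the quantifier structure. The left-hand side is an expectation over $f$, yet the density-ratio control in Step 1 needs the high-probability event on $f$. I would handle this as follows. The confidence event enters only through comparing the computable measures $P_{\mu_T}$ and $\tilde p_{t-1}$. For \texttt{AB-SID}, that comparison can be done deterministically via $|\mu_T-\mu_{t-1}|$ bounded by posterior widths. So the randomness left in the final statement is the MC and algorithmic randomness (for \texttt{TS-SID}, the samples $\tilde f_{t-1}$), and the "with probability $1-\delta$" refers to that.
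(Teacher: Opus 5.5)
Your skeleton matches the paper's. Lemma~\ref{lem:bayesian_mse} reduces everything to $\mathbb{E}_{P_{\mu_T}}[\sigma_T^2]$, which is then handled by variance monotonicity, the potential-set rule, a martingale bound on the threshold error, and Lemma~\ref{lem:info_gain}. Routing the $\mu$-comparison through $P_f$ costs the same $e^{4|\lambda|(\beta_T^{1/2}+\epsilon_T)}$ as the paper's direct comparison of $P_{\mu_T}$ with $P_{\mu_{t-1}}$, and your \texttt{TS-SID} count $e^{6|\lambda|(\cdot)}$ is right.

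The first gap is in \texttt{AB-SID}. The tilt $\frac{\lambda^2}{2}\sigma^2_{t-1}$ cannot be ``absorbed into the constants'', because the constants in the statement are explicit. Treating the tilt as a density-ratio perturbation costs an extra $e^{\lambda^2}$ via Lemma~\ref{lem:log_density_ratio}, or at best $e^{\lambda^2/2}$ with a one-sided pointwise ratio. You would then end up with $e^{5\lambda^2/2}$ or $e^{3\lambda^2}$ in place of $e^{2\lambda^2}$. The missing idea is that you only need the one-sided inequality $\mathbb{E}_{P_{\mu_{t-1}}}[\sigma^2_{t-1}]\le\mathbb{E}_{\tilde p_{t-1}}[\sigma^2_{t-1}]$, and it is free. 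Since $\tilde p_{t-1}\propto P_{\mu_{t-1}}e^{\frac{\lambda^2}{2}\sigma^2_{t-1}}$ reweights by an increasing function of the integrand $\sigma^2_{t-1}$ itself, Chebyshev's association inequality gives $\mathrm{Cov}_{P_{\mu_{t-1}}}(\sigma^2_{t-1},e^{\frac{\lambda^2}{2}\sigma^2_{t-1}})\ge 0$. With this, the only exponential cost for \texttt{AB-SID} is $e^{4|\lambda|(\beta_T^{1/2}+\epsilon_T)}$, i.e.\ $\kappa=1$.

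The second gap is that your resolution of the quantifier issue rests on a false claim. $|\mu_T-\mu_{t-1}|$ is not deterministically bounded by posterior widths, because $\mu_T-\mu_{t-1}$ depends on the observations $y_t,\dots,y_T$, which are unbounded. The bound $2\beta_T^{1/2}+2\epsilon_T$ holds only on the confidence event of Lemma~\ref{lem:gp_confidence}, which is an event over $f$ and the noise. Your own Step~1, which passes through $P_f$, uses that event explicitly. So the probability-$(1-\delta)$ qualifier cannot be confined to MC or Thompson randomness.

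What the composition actually proves, and what the paper's two-line argument amounts to, is the following chain:
\begin{enumerate}
\item Lemma~\ref{lem:bayesian_mse} holds for every realization of $\mathcal{D}_T$.
\item The bound on the $\mathcal{D}_T$-measurable quantity $\mathbb{E}_{P_{\mu_T}}[\sigma_T^2]$ holds on an event of probability at least $1-\delta$ under the joint law of $f$, the noise and the MC draws, with $\delta/2$ each for Lemma~\ref{lem:gp_confidence} and the martingale bound.
\item Hence the posterior expected loss $\mathbb{E}_{f|\mathcal{D}_T}[\mathbb{E}_{P_f}[(f-\mu_T)^2]]$ satisfies the stated bound with probability at least $1-\delta$.
\end{enumerate}
If you want a genuinely unconditional expectation over $f\sim\mathcal{GP}(0,k)$, you cannot pull it outside a high-probability event. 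You would instead use $\mathbb{E}_{P_{\mu_T}}[\sigma_T^2]\le 1$ on the failure event and pay an additive $e^{2\lambda^2}(1+4\lambda^2)\delta$.
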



As detailed in Appendix~\ref{app:avg_rate}, the average upper bound achieves the same asymptotic rates as Corollary~\ref{cor:rate}, but with tighter logarithmic factors. Specifically, the main term is $\gamma_T/T$ rather than $\beta_T\gamma_T/T$, and the prefactor exponent is also reduced from $2(\kappa+2)|\lambda|$ to $2(\kappa+1)|\lambda|$.

\paragraph{Frequentist Setting.} While we focus on the Bayesian setting, analogous rates hold under the assumption $f \in \mathcal{H}_k$ with $\|f\|_{\mathcal{H}_k} \leq B$. The proof is identical with $\beta_T$ replaced by its RKHS counterpart \citep{vakili2021scalable, takeno2025distributionally}; since both versions of $\beta_T$ scale as $\mathcal{O}(\log T)$, the rates remain unchanged.

\textbf{Other Performance Metrics.} While our main focus is $\mathcal{L}(\hat{f})$ defined in Eq.~\eqref{Eq: problem}, the analysis extends to any metric controlled by posterior uncertainty:
\begin{remark}[General Performance Metrics]
\label{rmk:general_metrics}
Suppose a metric $\mathcal{M}_T$ satisfies
\begin{equation*}
    \mathbb{E}_{P_f}[\mathcal{M}_T] \leq C \cdot \mathbb{E}_{P_{\mu_T}}[g(\sigma_T(x))]
\end{equation*}
for some non-decreasing $g: \mathbb{R}_{\geq 0} \to \mathbb{R}_{\geq 0}$ and $C > 0$. Then $\mathbb{E}_{P_f}[\mathcal{M}_T] \to 0$ as $T \to \infty$. For instance, $\ell_p$ losses satisfy this condition via Lemma~\ref{lem:gp_confidence} with $g(\sigma) = (\beta_T^{1/2}\sigma + \epsilon_T)^p$, and thus inherit the same convergence rates.
\end{remark}

\begin{figure*}[t]
    \centering
\includegraphics[width=1.0\linewidth]{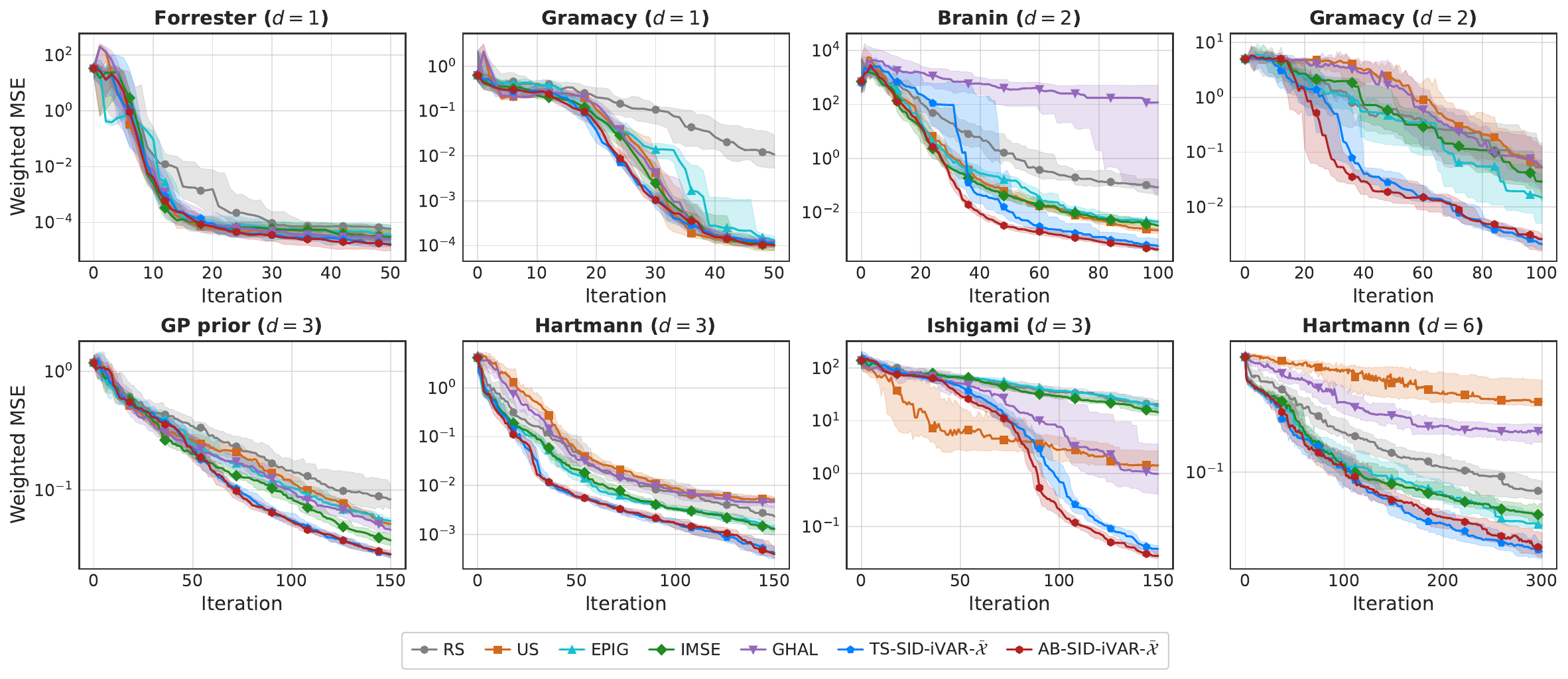}
    \caption{Weighted MSE convergence on synthetic benchmarks across dimensions $d = 1$ to $d = 6$. Each panel shows median performance over 30 random seeds (shaded regions: interquartile range). \texttt{AB-SID-iVAR}-$\bar{\mathcal{X}}$ consistently achieves lower weighted MSE than baselines (RS, US, IMSE), with the gap widening in higher dimensions.}
    \label{fig:synthetic_exp}
\end{figure*}


\subsection*{4.3. Analyzing Existing Heuristics}

The heuristic active learning approaches 
mentioned in the Related Works section and detailed in Appendix~\ref{app:hal}, differ in various ad-hoc practical choices, but share a common algorithmic structure (which we term Generic Heuristic Active Learning, \texttt{GHAL}), targeting variance reduction under a Boltzmann surrogate as in our methods. This connection, together with the analytical tools developed in Section~\hyperlink{sec:problem_setting}{3}, allows us to make a first attempt at analyzing their convergence properties.

Specifically, at each iteration, starting from a point outside the feasible region $\bar{\mathcal{X}}_t' := \{x : \sigma^2_{t-1}(x) \geq \eta\}$ with threshold $\eta>0$, a Markov chain is simulated targeting a surrogate distribution of generic form $\tilde{p}_{t-1} \propto \exp\bigl(\lambda\mu_{t-1}(x) + \lambda h(\sigma^2_{t-1}(x)) + b(x)\bigr)$, where $h(\sigma^2)$ is a smooth transformation of $\sigma^2$, until the chain crosses into the feasible region, at which point $x_t$ is queried; if the Markov chain does not cross into the feasible region, the sample of maximum uncertainty is queried. 
If we assume that exploration by the Markov chain is exhaustive so that $\sigma^2_{t-1}(x_t) \geq \min\{\eta, \max_{x \in \mathcal{X}} \sigma^2_{t-1}(x)\}$, the conditions of the following convergence guarantee are satisfied:
\begin{proposition}
\label{prop:fixed-threshold}
For a fixed threshold $\eta \in (0, 1]$, define $\bar{\mathcal{X}}'_t := \{x : \sigma^2_{t-1}(x) \geq \eta\}$. Consider an active learning algorithm that selects $x_t$ from the set $\bar{\mathcal{X}}'_t$ if $\bar{\mathcal{X}}'_t \neq \phi$ and $x_t = \arg\max_x \sigma_{t-1}(x)$ otherwise. Then, under Assumptions 2.1 and 3.1, for any $\delta \in (0, 1)$, with probability at least $1 - \delta$:
\begin{equation}
\mathcal{L}(\mu_T) = \mathcal{O}(T^{-1+\epsilon})
\end{equation}
for arbitrarily small $\epsilon > 0$ and sufficiently large $T$ such that $\eta > C_1 \gamma_T / T$.
\end{proposition}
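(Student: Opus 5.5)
The plan is to show that once $T$ is large, the fixed-threshold rule forces the maximum posterior variance at time $T$ below $\eta$, and in fact below $C_1\gamma_T/T$. After that, Lemma~\ref{lem:hp_mse_bound} with the trivial bound $\mathbb{E}_{P_{\mu_T}}[\sigma_T^2]\le \sigma_{\max,T}^2$ finishes the argument.

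The core is a counting argument with the standard information-gain inequality $\sum_{t=1}^T \sigma_{t-1}^2(x_t) \le C_1\gamma_T$, where $C_1=2/\log(1+\tau^{-2})$ and $k(x,x)\le 1$ (Srinivas et al.). We split on whether $\bar{\mathcal{X}}'_t$ is empty.
\begin{itemize}
\item If $\bar{\mathcal{X}}'_t\neq\emptyset$, then $\sigma_{t-1}^2(x_t)\ge\eta$.
\item If $\bar{\mathcal{X}}'_t=\emptyset$, then $x_t$ maximises the variance, so $\sigma^2_{t-1}(x_t)=\max_x\sigma^2_{t-1}(x)$.
\end{itemize}
In both cases $\sigma_{t-1}^2(x_t)\ge \min\{\eta,\max_x\sigma^2_{t-1}(x)\}$. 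Posterior variance is monotone non-increasing under conditioning, so $\max_x \sigma^2_{t-1}(x) \ge \max_x\sigma^2_T(x)=\sigma^2_{\max,T}$ for every $t\le T$. Hence
\begin{equation*}
T\min\{\eta,\sigma^2_{\max,T}\}\le \sum_{t=1}^T\sigma^2_{t-1}(x_t)\le C_1\gamma_T .
\end{equation*}
If the minimum were $\eta$, we would get $\eta\le C_1\gamma_T/T$, contradicting the hypothesis $\eta>C_1\gamma_T/T$. Therefore the minimum is $\sigma^2_{\max,T}$, and $\sigma^2_{\max,T}\le C_1\gamma_T/T$.

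We then apply Lemma~\ref{lem:hp_mse_bound} (with Lemma~\ref{lem:gp_confidence}; in the continuous case Assumption~2.1 gives $\epsilon_T=L/T$). Using $\mathbb{E}_{P_{\mu_T}}[\sigma_T^2]\le\sigma^2_{\max,T}\le C_1\gamma_T/T$ and $\sigma_{\max,T}\le 1$, with probability at least $1-\delta$,
\begin{equation*}
\mathcal{L}(\mu_T)\le e^{2|\lambda|(\beta_T^{1/2}+\epsilon_T)}\Big(\frac{C_1\beta_T\gamma_T}{T}+2\beta_T^{1/2}\epsilon_T+\epsilon_T^2\Big).
\end{equation*}
Here there is no MC error because the threshold $\eta$ is fixed rather than estimated. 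The rates then follow as in Corollary~\ref{cor:rate}(ii):
\begin{itemize}
\item $\beta_T=\mathcal{O}(\log T)$;
\item $\epsilon_T=\mathcal{O}(1/T)$ up to $\sqrt{\log(1/\delta)}$;
\item $\gamma_T=\mathcal{O}(T^{\epsilon'})$ for the kernels under consideration, as in that corollary;
\item the prefactor $e^{\mathcal{O}(\sqrt{\log T})}$ is $o(T^{\epsilon})$.
\end{itemize}
Together these give $\mathcal{L}(\mu_T)=\mathcal{O}(T^{-1+\epsilon})$.

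The main obstacle is the case split, specifically making the uniform inequality $\sigma_{t-1}^2(x_t)\ge\min\{\eta,\sigma^2_{\max,T}\}$ rigorous. This needs monotonicity of posterior variance for every $t\le T$. In continuous domains it also needs the $\arg\max$ to be attained, which holds because $\mathcal{X}$ is compact and $\sigma_{t-1}$ is continuous. A further point is that the failure probability is consumed only by Lemma~\ref{lem:gp_confidence}, since the variance argument is deterministic. Beyond that, the bound is carried over essentially verbatim from Corollary~\ref{cor:rate}(ii), inheriting whatever growth assumption on $\gamma_T$ that corollary uses.
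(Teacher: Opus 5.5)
Your proof is correct. It differs from the paper's mainly in how it controls the variance.

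The paper splits the rounds into two phases. Phase~1 consists of rounds with $\max_x\sigma^2_{t-1}(x)\ge\eta$, and there are at most $C_1\gamma_T/\eta$ of them. Phase~2 consists of rounds where the arg-max rule is active. The paper applies the information-gain bound to Phase~2 alone, obtaining $\sigma^2_{\max,T}\le C_1\gamma_T/(T-C_1\gamma_T/\eta)$. It then weakens this to $2C_1\gamma_T/T$ for $T$ large enough that $\eta\ge 2C_1\gamma_T/T$.

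You instead use the single pointwise bound $\sigma^2_{t-1}(x_t)\ge\min\{\eta,\sigma^2_{\max,T}\}$ over all $T$ rounds. This lets Phase~1 rounds count toward the sum as well, removes the phase bookkeeping, and gives the sharper $\sigma^2_{\max,T}\le C_1\gamma_T/T$ under exactly the stated hypothesis $\eta>C_1\gamma_T/T$.

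In the final step the comparison runs the other way. You bound the prefactor of Lemma~\ref{lem:hp_mse_bound} using $\sigma_{\max,T}\le 1$, which leaves an $e^{\mathcal{O}(\sqrt{\log T})}$ factor to be absorbed into $T^{\epsilon}$. The paper substitutes the small $\sigma_{\max,T}$ into the exponent, so its prefactor is $1+o(1)$ and the bound is an explicit $\mathcal{O}(\beta_T\gamma_T/T)$. You could match this by inserting your own $\sigma_{\max,T}\le\sqrt{C_1\gamma_T/T}$ there.

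Your step ``$\gamma_T=\mathcal{O}(T^{\epsilon'})$ as in that corollary'' is an extra assumption, not something Corollary~\ref{cor:rate} establishes. Its proof never treats the $\gamma_T/T$ term explicitly. The paper's proof of this proposition states the assumption openly as $\gamma_T=\mathcal{O}((\log T)^c)$. That holds for the squared exponential kernel but fails for Matérn kernels allowed by Assumption~\ref{ass:kernel_regularity}, where $\gamma_T$ grows polynomially. You should state it as a hypothesis; the same caveat applies to the paper's argument.
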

To the best of our knowledge, this is the first convergence analysis  under the SIDAL framework for heuristic active learning methods tackling this problem.

\section*{5. Experiments}

We validate our proposed methods against Random Sampling (\texttt{RS}), Uncertainty 
Sampling (\texttt{US}), Integrated Mean Squared Error (\texttt{IMSE})~\citep{sacks1989design, seo2000gaussian}, 
Expected Predictive Information Gain (\texttt{EPIG})~\citep{smith2023prediction}, and 
\texttt{GHAL} detailed in Appendix~\ref{app:hal}. We omit comparing with DRAL as it 
optimizes for worst-case performance over an ambiguity set rather than exploiting the 
known Boltzmann form. We use a Matérn-5/2 GP surrogate by default. For our methods, we 
set $\lambda = 1$ and $b(x) = 0$ as default (see Appendix~\ref{app: temp_and_bias} for 
sensitivity to other settings). Each experiment starts with a single initial sample 
and is repeated 30 times with different random seeds. Full experimental setup is 
deferred to Appendix~\ref{App: setup}.

\subsection*{5.1. Synthetic Experiments}

We first evaluate AL performance on standard synthetic benchmarks spanning $d=1$ to $d=6$. Results are illustrated in Figure~\ref{fig:synthetic_exp}. \texttt{AB-SID-iVAR} consistently achieves lower weighted MSE than baselines, with the gap becoming more pronounced in higher dimensions. The Thompson sampling variant \texttt{TS-SID-iVAR} is competitive but exhibits higher variance across seeds, particularly on multimodal targets such as Branin. \texttt{US} shows a clear failure mode: by concentrating queries in high-uncertainty regions regardless of their probability under $P_f$, it can perform worse than random sampling. \texttt{IMSE} is a strong baseline as it also minimizes integrated variance, but learns uniformly and thus underperforms SID-aware approaches. \texttt{GHAL}, while SID-aware and competitive in some tasks, 
exhibits high variance and inconsistent performance due to its sensitivity to hyperparameters $\eta$ and $h$; in contrast, our methods avoid task-specific hyperparameter tuning.


\noindent \textbf{Ablation Study.} To understand the contribution of different components, 
we conduct an ablation study under two settings: (1) using $\mu_{t-1}$ as a naive surrogate 
for SID (i.e., $e^{\lambda \mu_{t-1}}$), tested with both our acquisition 
(denoted $\mu$\texttt{-SID-iVAR}) and EPIG (denoted $\mu$\texttt{-SID-EPIG}) to verify 
that combining any acquisition function with a plug-in SID surrogate is insufficient, 
and (2) removing the constraint set (denoted \texttt{SID-iVAR}-$\phi$). Results are 
shown in Figure~\ref{fig:abl_study} and analyzed in detail in Appendix~\ref{App: Ablation_Study}. 
While these variants achieve comparable performance on benchmarks dominated by a broad 
basin (e.g., Hartmann), both plug-in variants fail in scenarios where the target 
distribution has well-separated modes (Branin) or strongly oscillatory structure (Ishigami), 
as the point estimate $\mu_{t-1}$ discards posterior uncertainty and commits prematurely 
to whichever mode is discovered first. This validates that the Bayesian treatment of $f$ 
in our surrogate $\tilde{p}_{t-1}$, alongside the constraint set $\bar{\mathcal{X}}_t$, 
enables its robust performance.

Finally, we report computational cost comparisons in Appendix~\ref{app: run_time_rep}. In discrete domains, our methods are slightly slower than \texttt{IMSE} ($<$10 seconds per iteration). In continuous domains, our methods incur higher per-iteration cost than \texttt{IMSE} (which uses uniform random samples) due to MCMC sampling and constrained acquisition optimization, but remain practical ($<$30 seconds per iteration), a sensitivity regarding MCMC particle size at Figure~\ref{fig:mcmc_runtime_report} in  Appendix~\ref{app: sensitivity_analysis_of_mcmc} also shows that the empirical performance stays the same while using $N=250d$ particles.

\subsection*{5.2. Potential Energy Surface Modeling}\label{Sec: PES}

Learning the potential energy surface (PES) $E(\mathbf{x})$ of an atomistic 
system is a natural SIDAL instance: downstream molecular dynamics samples 
configurations from the Boltzmann distribution $P_E(\mathbf{x}) \propto 
\exp(-E(\mathbf{x})/k_{\rm B}\mathcal{T})$ (i.e., $\lambda = 
-(k_{\rm B}\mathcal{T})^{-1} < 0$, $b \equiv 0$, where $\mathcal{T}>0$ denotes the thermal temperature and $k_{\rm B}$ is the Boltzmann constant), so prediction accuracy is 
only required where this self-induced density places mass, typically a small, 
a priori unknown fraction of configuration space corresponding to physically 
accessible states. Ground-truth queries to $E$ using, e.g., density functional theory \cite{szabo2012modern}, typically cost hours to weeks per evaluation, making 
sample efficiency essential.  

We benchmark the proposed methods on four PES fitting problems, the PES of (i) a hydrogen molecule on a copper surface  ($\mathrm{H_2}$ on $\mathrm{Cu}$), (ii) a hydrogen atom on a copper cluster ($\mathrm{H_2}$ on $\mathrm{Cu}$ Cluster), (iii) a highly symmetric silicon cluster (Si Crystal), and (iv) a water molecule on a platinum surface ($\mathrm{H_{2}O}$ on $\mathrm{Pt}$). Details, including 
coordinate parameterizations and $\lambda$ values, are deferred to 
Appendix~\ref{App: PES_modeling_details}.


\begin{figure}[h!]
    \centering
    \includegraphics[width=1.0\linewidth]{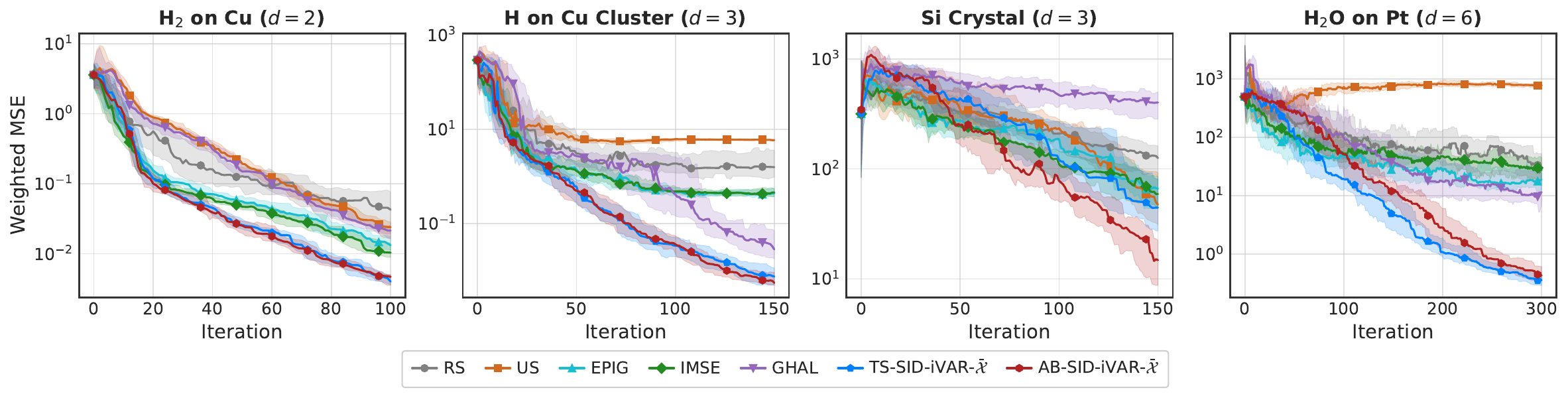}
    \caption{Weighted MSE convergence on four PES fitting tasks of increasing 
dimensionality: \mbox{H$_2$ on Cu} ($d{=}2$), \mbox{H on Cu$_{13}$} ($d{=}3$), 
\mbox{Si Crystal} ($d{=}3$), and \mbox{H$_2$O on Pt} ($d{=}6$). \texttt{AB-SID-iVAR} consistently achieves the lowest error across 
systems.}
\label{fig:MD_Modeling}
    \label{fig:MD_Modeling}
\end{figure}

 The result summarized in Figure~\ref{fig:MD_Modeling} shows that \texttt{AB-SID-iVAR} consistently achieves the lowest weighted MSE, the performance of GHAL, on the other hand, varies between problems, indicating its sensitivity to the specification of hyperparameters.

\subsection*{5.3. Molecular Drug Discovery}\label{Sec: Drug_screening}

\begin{figure*}[t]
    \centering
    \includegraphics[width=1.0\linewidth]{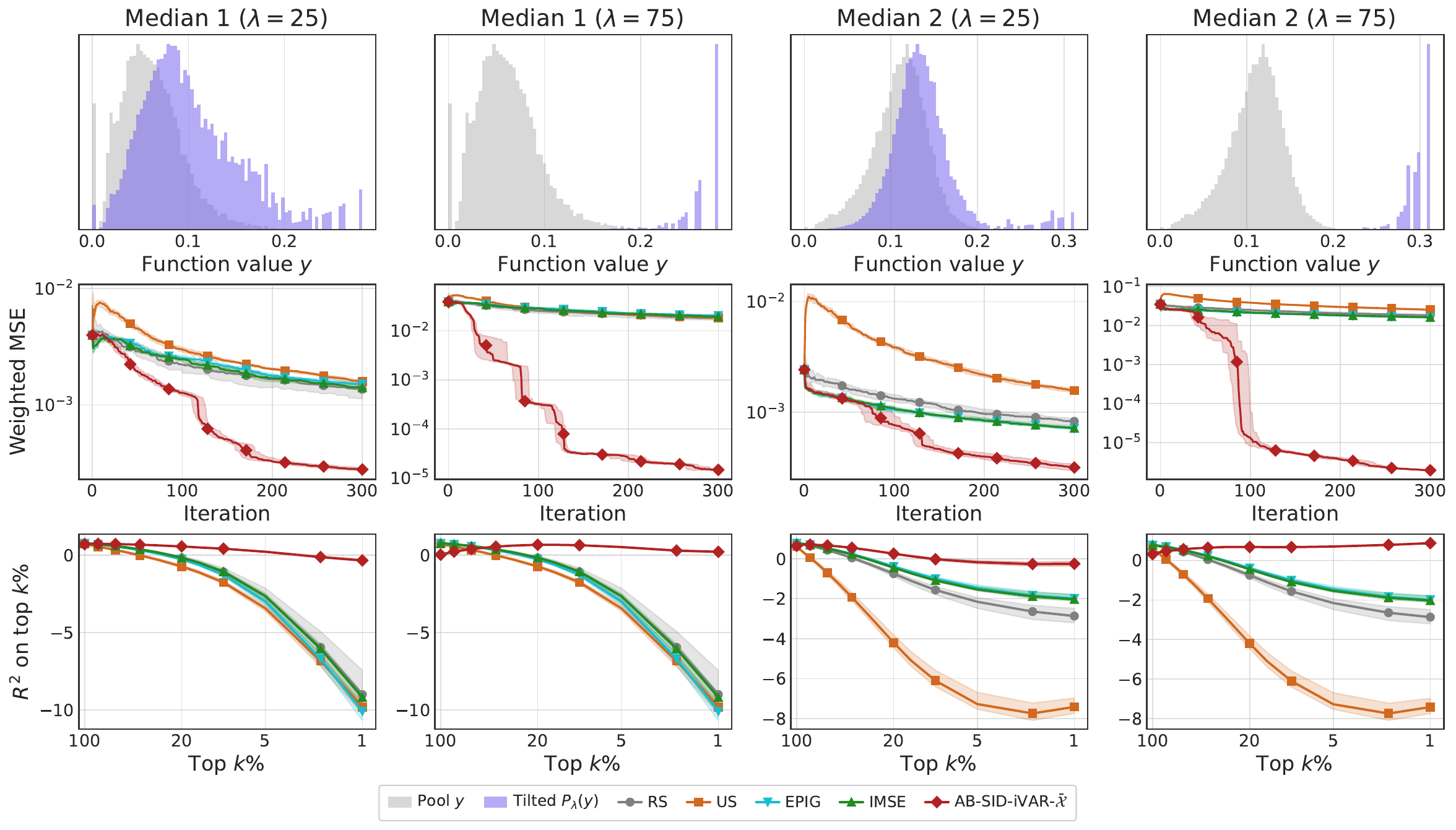}
    \caption{Active learning for molecular property prediction on GuacaMol scoring functions (Median 1, Median 2). The target distribution $P_\lambda(x) \propto \exp(\lambda f(x))$ emphasizes high-scoring compounds, relevant for drug discovery where accurate predictions matter most for promising candidates. \textbf{Top:} pool distribution (gray) vs.\ Boltzmann-tilted density (red). \textbf{Middle:} weighted MSE convergence over 300 iterations; \texttt{AB-SID-iVAR} achieves one to two orders of magnitude lower error than baselines. \textbf{Bottom:} $R^2$ on top $k\%$ compounds ranked by Boltzmann weight; baselines degrade to negative $R^2$ as $k$ shrinks while \texttt{AB-SID-iVAR} remains near zero. Pool contains approximately 20,000 molecules with Tanimoto kernel GP.}
    \label{fig:molecular_drug_discovery}
\end{figure*}


Finally, we evaluate our methods on a discrete pool-based active learning problem using real molecular dataset. In practice, evaluating molecular properties is expensive hence drug discovery campaigns prioritize accurate predictions for promising candidates over uniform accuracy across the library. This naturally matches the SIDAL setting, where the Boltzmann density concentrates on high-scoring compounds that matter most for downstream screening. We use a GP with a Tanimoto kernel over Morgan fingerprints as the surrogate, suited for the discrete molecular domain. This also tests whether our method remains effective beyond the kernel regularity assumed in our theory (Assumption~2.1).

We evaluate on the GuacaMol benchmark~\citep{brown2019guacamol} ($\approx$20{,}000 molecules) using two drug-likeness scoring functions (Median 1, Median 2), with $\lambda \in \{25, 75\}$ \footnote{We note the large $\lambda$ is offset by the narrow score range $f \in [0.1, 0.3]$ (Figure~\ref{fig:molecular_drug_discovery}, top row).} controlling target concentration (Figure~\ref{fig:molecular_drug_discovery}, top row). Setup details are deferred to Appendix~\ref{App: Mol_drug_disc}. Increasing $\lambda$ from 25 to 75 sharpens the target, concentrating mass on the highest-scoring molecules. This exponential tilting appears broadly, including guided sampling in diffusion models~\citep{song2023loss} and reward-tilted preference alignment~\citep{rafailov2023direct}. Based on the convergence curves, \texttt{AB-SID-iVAR} demonstrates strong performance, converging roughly an order of magnitude lower at $\lambda=25$, and this benefit becomes more pronounced at $\lambda = 75$. This reflects the increased cost of learning uniformly when only a narrow slice of the input space is of interest. 


Further inspecting final model quality, the bottom row evaluates $R^2$ \footnote{We use  $R^2 = 1 - \sum_i (y_i - \hat{y}_i)^2 / \sum_i (y_i - \bar{y})^2$ computed 
on the top-$k\%$ subset, where $\bar{y}$ is the subset mean.} on the top-$k\%$ of molecules ranked by Boltzmann weight. At $k = 100\%$ represent uniform learning objective, all methods achieve comparable $R^2$, reflecting uniform-average performance. As $k$ decreases toward the elite subset, however, SID-unaware baselines degrade to strongly negative $R^2$, indicating predictions worse than a constant baseline on the molecules that matter most. In comparison, \texttt{AB-SID-iVAR} demonstrates significantly better prediction accuracy in high-value regions. The pairwise plots in Figure~\ref{fig:pairwise_plot} confirm this finding: dark (high-weight) points align with the diagonal only for \texttt{AB-SID-iVAR}, while baselines show systematic spread or bias. Together, these results demonstrate that SID-unaware methods can appear reasonable under global metrics yet fail on the subset relevant for downstream screening, and $\lambda$ provides a principled mechanism to specify \emph{which} output region should be learned well, without requiring prior knowledge of where these regions lie in input space.


\section*{6. Conclusions}
We formalized the problem of active learning under self-induced distributions (SIDAL), where the target distribution depends on the unknown function being learned. We proposed two uncertainty reduction acquisition functions based on Gaussian Process surrogates that approximate the intractable Bayesian target in closed form while avoiding partition function estimation. Our theoretical analysis establishes that the terminal prediction error vanishes with high probability under mild conditions, with tighter logarithmic factors available under an average-case analysis; a detailed elaboration of limitations and future work is deferred to Appendix~\ref{App:limit}. Empirically, our methods consistently outperform SID-unaware baselines across synthetic benchmarks and real-world problems, with the gap widening as the target concentrates or the input dimension increases. 

 
\subsection*{Acknowledgments}
 This work was supported by Research England under the Expanding Excellence in England (E3) funding stream, which was awarded to MARS: Mathematics for AI in Real-world Systems in the School of Mathematical Sciences at Lancaster University. 

{
\small

%
%
%
%

\bibliographystyle{plainnat}
\bibliography{references}
}

\appendix
\onecolumn

\renewcommand{\contentsname}{Appendix Contents}
\tableofcontents
\section{Related Works}\label{App: Related_Work}
Several lines of work consider active learning with structured targets rather than uniform error across the input domain. These methods differ in how the target is specified: a known finite set of test points \citep{yu2006active,hubotter2024transductive}, a continuous target distribution $P$ \citep{kirsch2021test,smith2023prediction}, or an ambiguity set containing possible $P$ when it cannot be exactly specified \citep{Frogner, takeno2025distributionally}, hence optimizing for worst-case performance over the set. In all cases, the target does not depend on the unknown function being learned.

Outside the AL literature, related heuristics have emerged in scientific domains \citep{li2015molecular,podryabinkin2017active,vandermause2020fly,jinnouchi2020fly,van2023hyperactive,kulichenko2023uncertainty,duschatko2024uncertainty}, where a surrogate model guides Markov Chain Monte Carlo (MCMC) sampling from a tilted Boltzmann distribution and triggers a query when predictive uncertainty exceeds a pre-specified threshold. These methods often require careful tuning of hyperparameters and lack theoretical guarantees. In Section~\hyperlink{Sec:unification}{4.3}, we formalize a generalized version of such heuristics within our framework and for the first time analyze their convergence properties.

\section{Methodology Details}\label{App: Method_Details}

\subsection{Acquisition Functions} \label{App: acq_func_derivation}
Starting from one-step look-ahead MSE minimization, we make two approximations: (i) replace squared error with posterior variance (according to lemma~\ref{lem:gp_confidence}), and (ii) treat $P_f$ as fixed when evaluating variance reduction, avoiding the need to recompute partition functions for each hypothetical observation. This reduces the acquisition function to minimizing $\mathbb{E}_{f|D_{t-1}}\mathbb{E}_{P_f}[\sigma^2_t \mid x_t = x]$, which requires approximating $\mathbb{E}_{f|D_{t-1}}[P_f(x^\star)]$. Substituting the Boltzmann form:
\begin{equation}
    \mathbb{E}_{f|D_{t-1}}[P_f(x^\star)] = \mathbb{E}_{f|D_{t-1}}\left[\frac{e^{\lambda f(x^\star) + b(x^\star)}}{Z_f}\right].
\label{Eq: expected_var_form}
\end{equation}

\subsubsection{AB-SID} \label{App: AB-SID}
For notation simplicity, let $A = e^{\lambda f(x^\star) + b(x^\star)}$ and $B = Z_f$. In order to compute $\mathbb{E}[A/B]$, We approximate $\mathbb{E}[A/B]$ via Taylor expansion around $B = \mathbb{E}[B]$:
\begin{align*}
\frac{A}{B} = \frac{A}{\mathbb{E}[B]} - \frac{A(B - \mathbb{E}[B])}{(\mathbb{E}[B])^2} + \frac{A(B - \mathbb{E}[B])^2}{(\mathbb{E}[B])^3} - \cdots.
\end{align*}

Taking expectation:
\begin{align*}
\mathbb{E}\left[\frac{A}{B}\right] &= \frac{\mathbb{E}[A]}{\mathbb{E}[B]} - \frac{\mathbb{E}[A(B - \mathbb{E}[B])]}{(\mathbb{E}[B])^2} + \frac{\mathbb{E}[A(B - \mathbb{E}[B])^2]}{(\mathbb{E}[B])^3} - \cdots.
\end{align*}

For the second term, note that:
\begin{equation*}
\mathbb{E}[A(B - \mathbb{E}[B])] = \mathbb{E}[AB] - \mathbb{E}[A]\mathbb{E}[B] = \mathrm{Cov}(A, B),
\end{equation*}
therefore,
\begin{equation*}
\mathbb{E}\left[\frac{A}{B}\right] = \underbrace{\frac{\mathbb{E}[A]}{\mathbb{E}[B]}}_{T_0} - \underbrace{\frac{\mathrm{Cov}(A, B)}{(\mathbb{E}[B])^2}}_{T_1} + \frac{\mathbb{E}[A(B - \mathbb{E}[B])^2]}{(\mathbb{E}[B])^3} - \cdots
\end{equation*}

\textbf{Zero-Order Term $T_0$} For the first term $T_1$, substituting $A$ and $B$'s expression back:
\begin{equation*}
\mathbb{E}_{f|D}[P_f(x^\star)] \approx \frac{\mathbb{E}_{f|D}[e^{\lambda f(x^\star) + b(x^\star)}]}{\mathbb{E}_{f|D}[Z_f]}
\end{equation*}

For the numerator, again using the Gaussian MGF identity, we have:
\begin{equation*}
\mathbb{E}_{f|D}[e^{\lambda f(x^\star) + b(x^\star)}] = \exp\left(\lambda \mu_t(x^\star) + \frac{\lambda^2}{2}\sigma^2_t(x^\star) + b(x^\star)\right)
\end{equation*}

Similarly, for the partition function:
\begin{align*}
\mathbb{E}_{f|D}[Z_f] = \sum_{x'} \exp\left(\lambda \mu_t(x') + \frac{\lambda^2}{2}\sigma^2_t(x') + b(x')\right)
\end{align*}

\textbf{First-Order $T_1$} For the first-order term
\begin{equation*}
T_1 = -\frac{\mathrm{Cov}(A, B)}{(\mathbb{E}[B])^2}
\end{equation*}

For $\mathrm{Cov}(A, B)$, since $B = Z_f = \sum_{x'} e^{\lambda f(x') + b(x')}$, by bi-linearity of covariance:
\begin{equation*}
\mathrm{Cov}(A, B) = \sum_{x'} \mathrm{Cov}\left(e^{\lambda f(x^\star) + b(x^\star)}, e^{\lambda f(x') + b(x')}\right)
\end{equation*}

To evaluate this we first introduce the following helper lemma. 
\begin{lemma}[Exponential Covariance under Joint Gaussian]
\label{lem:exp_cov}
Let $(U, V)$ be jointly Gaussian with means $(\mu_U, \mu_V)$, variances $(\sigma_U^2, \sigma_V^2)$, and covariance $\kappa$. Then:
\begin{equation*}
\mathrm{Cov}(e^{\lambda U}, e^{\lambda V}) = \mathbb{E}[e^{\lambda U}]\mathbb{E}[e^{\lambda V}]\left(e^{\lambda^2 \kappa} - 1\right)
\end{equation*}
\end{lemma}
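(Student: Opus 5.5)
The plan is to use the definition of covariance together with the Gaussian moment generating function. Write
\begin{equation*}
\mathrm{Cov}(e^{\lambda U}, e^{\lambda V}) = \mathbb{E}[e^{\lambda(U+V)}] - \mathbb{E}[e^{\lambda U}]\,\mathbb{E}[e^{\lambda V}],
\end{equation*}
so it suffices to compute the three expectations in closed form. Each marginal is Gaussian, so the MGF identity already used for the numerator of $T_0$ gives $\mathbb{E}[e^{\lambda U}] = \exp(\lambda\mu_U + \tfrac{\lambda^2}{2}\sigma_U^2)$, and likewise for $V$.

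For the joint term, note that $U+V$ is a linear functional of the jointly Gaussian pair, so it is itself Gaussian. Its mean is $\mu_U+\mu_V$ and its variance is $\sigma_U^2+\sigma_V^2+2\kappa$. The same MGF identity then gives
\begin{equation*}
\mathbb{E}[e^{\lambda(U+V)}] = \exp\Bigl(\lambda(\mu_U+\mu_V) + \tfrac{\lambda^2}{2}(\sigma_U^2+\sigma_V^2) + \lambda^2\kappa\Bigr) = \mathbb{E}[e^{\lambda U}]\,\mathbb{E}[e^{\lambda V}]\,e^{\lambda^2\kappa}.
\end{equation*}
Subtracting the product of the marginals and factoring out $\mathbb{E}[e^{\lambda U}]\mathbb{E}[e^{\lambda V}]$ yields the claimed expression.

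There is no real obstacle. The only point to state explicitly is that joint Gaussianity, not just Gaussian marginals, is what makes $U+V$ Gaussian with the stated variance. In the application, $(U,V)=(f(x^\star),f(x'))$ under the GP posterior, so this holds automatically. Degenerate cases such as $\sigma_U=0$ are covered because the MGF formula remains valid for point masses.
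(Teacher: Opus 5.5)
Your proposal is correct and follows essentially the same route as the paper: it expands the covariance as $\mathbb{E}[e^{\lambda(U+V)}]-\mathbb{E}[e^{\lambda U}]\mathbb{E}[e^{\lambda V}]$, evaluates the joint term with the Gaussian MGF at variance $\sigma_U^2+\sigma_V^2+2\kappa$, and factors out the product of the marginal MGFs. Your explicit remark that joint Gaussianity is what makes $U+V$ Gaussian is a small but welcome addition; the paper leaves it implicit.
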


\begin{proof}
By the joint MGF of Gaussian random variables:
\begin{equation*}
\mathbb{E}[e^{\lambda U + \lambda V}] = \exp\left(\lambda(\mu_U + \mu_V) + \frac{\lambda^2}{2}(\sigma_U^2 + \sigma_V^2 + 2\kappa)\right)
\end{equation*}
The product of marginal MGFs:
\begin{equation*}
\mathbb{E}[e^{\lambda U}] \cdot \mathbb{E}[e^{\lambda V}] = \exp\left(\lambda(\mu_U + \mu_V) + \frac{\lambda^2}{2}(\sigma_U^2 + \sigma_V^2)\right)
\end{equation*}
Therefore:
\begin{align*}
\mathrm{Cov}(e^{\lambda U}, e^{\lambda V}) &= \mathbb{E}[e^{\lambda U} \cdot e^{\lambda V}] - \mathbb{E}[e^{\lambda U}]\mathbb{E}[e^{\lambda V}] \nonumber = \mathbb{E}[e^{\lambda U}]\mathbb{E}[e^{\lambda V}]\left(e^{\lambda^2 \kappa} - 1\right)
\end{align*}
\end{proof}

Applying Lemma~\ref{lem:exp_cov} with $U = f(x^\star) + \frac{b(x^\star)}{\lambda}$, $V = f(x') + \frac{b(x')}{\lambda}$, and hence $\kappa = k_{t-1}(x^\star, x')$:
\begin{equation*}
\begin{aligned}
& \mathrm{Cov}\left(e^{\lambda f(x^\star) + b(x^\star)}, e^{\lambda f(x') + b(x')}\right) = \mathbb{E}\left[e^{\lambda f(x^\star) + b(x^\star)}\right]\mathbb{E}\left[e^{\lambda f(x') + b(x')}\right]\left(e^{\lambda^2 k_{t-1}(x^\star, x')} - 1\right)
\end{aligned}
\end{equation*}
Summing over $x'$:
\begin{equation*}
\mathrm{Cov}(A, B) = \mathbb{E}[A] \sum_{x'} \mathbb{E}\left[e^{\lambda f(x') + b(x')}\right]\left(e^{\lambda^2 k_{t-1}(x^\star, x')} - 1\right)
\end{equation*}
Since $\mathbb{E}[B] = \sum_{x'} \mathbb{E}\left[e^{\lambda f(x') + b(x')}\right]$, the first-order term becomes:
\begin{align*}
T_1 &= -\frac{\mathbb{E}[A]}{(\mathbb{E}[B])^2} \sum_{x'} \mathbb{E}\left[e^{\lambda f(x') + b(x')}\right]\left(e^{\lambda^2 k_{t-1}(x^\star, x')} - 1\right) \nonumber \\
&= -\frac{\mathbb{E}[A]}{\mathbb{E}[B]} \sum_{x'} \frac{\mathbb{E}\left[e^{\lambda f(x') + b(x')}\right]}{\mathbb{E}[B]}\left(e^{\lambda^2 k_{t-1}(x^\star, x')} - 1\right) \nonumber \\
&= -\frac{\mathbb{E}[A]}{\mathbb{E}[B]} \cdot \frac{\sum_{x'} \tilde{p}^{\mathrm{u}}_{t-1}(x')\left(e^{\lambda^2 k_{t-1}(x^\star, x')} - 1\right)}{\sum_{x''} \tilde{p}^{\mathrm{u}}_{t-1}(x'')}
\end{align*}
where $\tilde{p}^{\mathrm{u}}_{t-1}(x') = e^{\left(\lambda \mu_{t-1}(x') + \frac{\lambda^2}{2}\sigma_{t-1}^2(x') + b(x')\right)}$, the same unnormalized surrogate as in Table~\ref{tab:methods}.

\subsubsection{TS-SID}
Another approach is to approximate Eq.~\ref{Eq: expected_var_form} by a Monte Carlo estimate, 

\begin{equation*}
    \mathbb{E}_{f|D_{t-1}}[P_f(x^\star)] \approx \frac{1}{N} \sum_{i=1}^N\left[\frac{e^{\lambda f_i(x^\star) + b(x^\star)}}{Z_{f_i}}\right].
\end{equation*}

Specifically we use $N=1$, which corresponds to Thompson sampling.

\subsubsection{Acquisition Illustrations}

\begin{figure}[h!]
    \centering
    \includegraphics[width=0.9\linewidth]{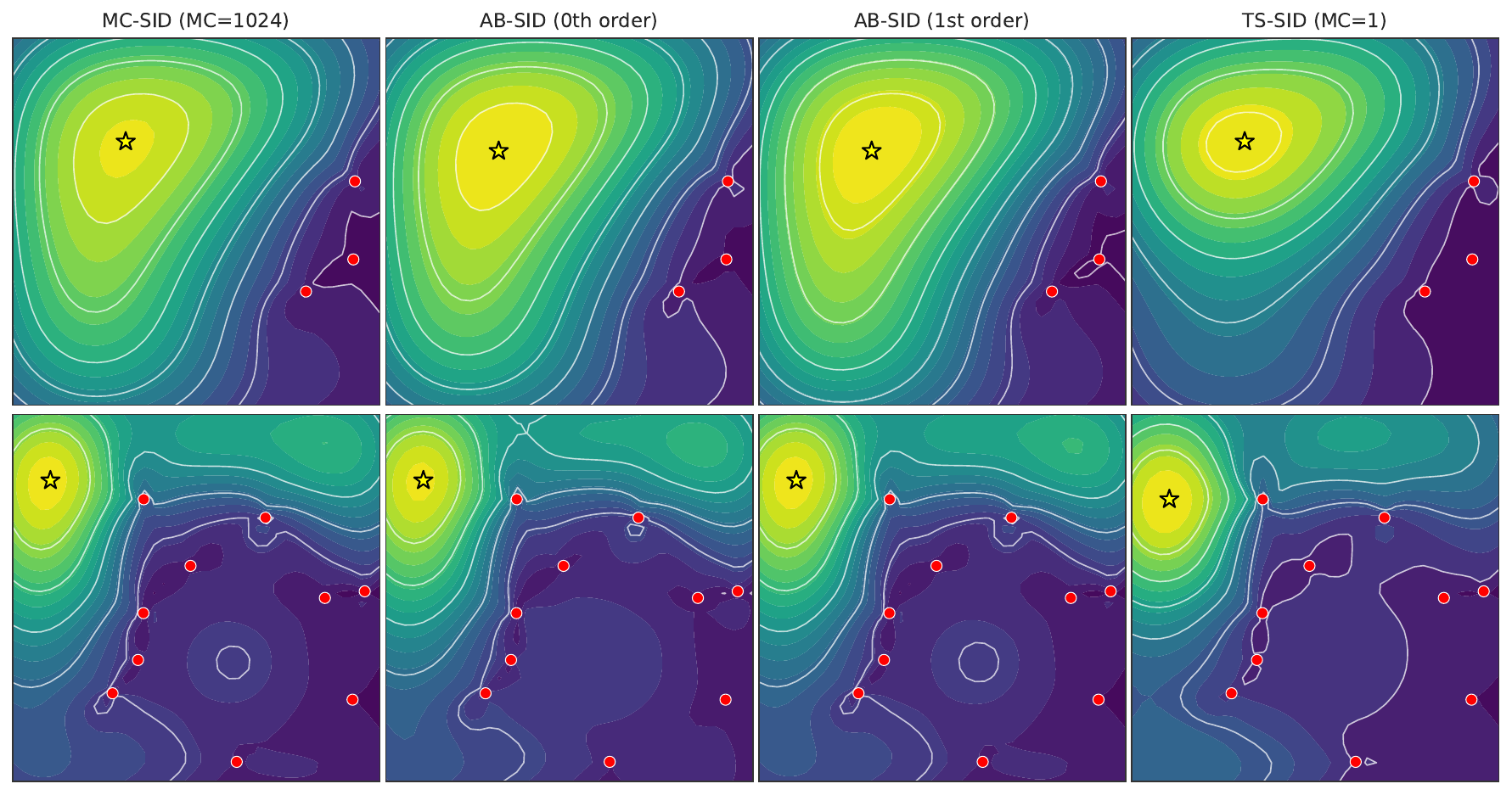}
   \caption{Acquisition function comparison on a 2D GP prior. We use Monte Carlo with 1024 samples (MC-SID, leftmost) as a reference approximation of Eq.~\ref{Eq: expected_var_form}. The zero-order \texttt{AB-SID}approximation closely matches the reference contours regardless of training data size ({\color{red}$\bullet$}), while the first-order correction provides marginal improvement especially when data is abundant. \texttt{TS-SID}  (single posterior sample) shows higher variability. Stars (\yellowstar) indicate the acquisition maximum.}
    \label{fig:acq_approx_comparison}
\end{figure}
In order to better understand the approximation efficacy of the zero order and first order approximation of Eq.~\ref{Eq: expected_var_form}, we provide a two dimensional visual comparison of different acquisition function variants in Figure~\ref{fig:acq_approx_comparison}, which helps justifying the usage of our zero order approximation.

\subsubsection{MCMC approximation of $\tilde{p}_t$ \label{App: MCMC_sample}}
We leverage Sequential Monte Carlo (SMC) \cite{del2006sequential} as our importance sampler for $P_f$, using annealed tempering    
  with Random Walk Metropolis (RWM) rejuvenation steps. The temperature schedule is adaptively determined via bisection to maintain a
   target effective sample size (ESS) ratio \cite{jasra2011inference, schafer2013sequential}. 

\begin{algorithm}[h]                                    \caption{Sequential Monte Carlo (SMC) Sampling}        \label{alg:smc}                                        \begin{algorithmic}[1]                                 \REQUIRE Log unnormalized density $\log \tilde{p}(x)$, bounds $[l, u]$, particles $N$, ESS threshold $\alpha$, RWM steps $K$       
  \ENSURE Weighted samples $\{(x_i, w_i)\}_{i=1}^N$ approximating $p(x) \propto \tilde{p}(x)$            \STATE $x_i \sim \text{Uniform}(l, u)$ for $i = 1, \ldots, N$ \cmt{// Initialize particles}             \STATE $\log w_i \gets 0$ for all $i$ \cmt{// Uniform weights}                                      \STATE $\tau \gets 0$ \cmt{// Initial temperature}  \WHILE{$\tau < 1$}                                   \STATE $\tau_{\text{prev}} \gets \tau$               \STATE $\tau \gets \min(\tau', 1)$ where $\tau'$ satisfies $\text{ESS}(\tau') \approx \alpha \cdot N$ \cmt{// Adaptive         
  tempering}                                            \STATE $\log w_i \gets \log w_i + (\tau - \tau_{\text{prev}}) \cdot \log \tilde{p}(x_i)$ for all $i$ \cmt{// Reweight}         
      \IF{$\text{ESS} < \alpha \cdot N$}               \STATE $\{x_i\} \gets \text{SystematicResample}(\{x_i\}, \{\log w_i\})$ \cmt{// Resample}           \STATE $\log w_i \gets 0$ for all $i$          \ENDIF                                           \FOR{$k = 1, \ldots, K$}                          \STATE $x_i \gets \text{RWM}(x_i, \tau \cdot \log \tilde{p})$ for all $i$ \cmt{// Rejuvenate} \ENDFOR                                          \ENDWHILE                                         \STATE \textbf{return} $\{(x_i, \exp(\log w_i))\}_{i=1}^N$                                 \end{algorithmic}                                \end{algorithm} 


\section{Theoretic Analysis}
\subsection{Non-Bayes Optimality} \label{App: Non-Bayes Optimal}
We note that the Bayes-optimal estimator minimizes the posterior expected loss:
$$R(\hat{f}) = \mathbb{E}_{f|D_t}\left[\sum_{x \in \mathcal{X}} P_f(x) \cdot (\hat{f}(x) - f(x))^2\right]$$
Taking the derivative with respect to $\hat{f}$:
$$\frac{dR}{d\hat{f}} = \mathbb{E}_{f|D_t}[2P_f \cdot (\hat{f} - f)] = 0$$
This yields:
$$\hat{f}^\star(x') = \frac{\mathbb{E}_{f|D_t}[P_f(x') \cdot f(x')]}{\mathbb{E}_{f|D_t}[P_f(x')]} \neq \mu_t(x')$$
Since $P_f(x') = g(f(x'))/Z_f$ depends on $f$ through both $g(f(x'))$ and the partition function $Z_f = \sum_x g(f(x))$, the numerator and denominator involve the intractable expectation. Hence we use $\mu_t(x)$ as a computationally tractable approximation.

\subsection{Lemma~\ref{lem:log_density_ratio}}

\begin{proof}
Expanding the log-density difference:
\begin{equation*}
    \log P_{g_1}(x) - \log P_{g_2}(x) = \lambda(g_1(x) - g_2(x)) + \log Z_{g_2} - \log Z_{g_1}
\end{equation*}
where $Z_{g_i} = \int_{x'} e^{\lambda g_i(x') + b(x')} dx'$. The first term by definition is:
\begin{equation*}
    |\lambda(g_1(x) - g_2(x))| = |\lambda| \Delta(x)
\end{equation*}
For the partition function ratio:
\begin{equation*}
    Z_{g_1} = \int_{x'} e^{\lambda g_2(x') + b(x')} e^{\lambda(g_1(x') - g_2(x'))} dx'
\end{equation*}
Since $|g_1(x') - g_2(x')| \leq \Delta_{\max}$ for all $x'$ by definition:
\begin{equation*}
    e^{-|\lambda| \Delta_{\max}} Z_{g_2} \leq Z_{g_1} \leq e^{|\lambda| \Delta_{\max}} Z_{g_2}
\end{equation*}
Taking logarithms:
\begin{equation}
     |\log Z_{g_1} - \log Z_{g_2}| \leq |\lambda| \Delta_{\max}
\label{Eq: log_partition_inequality}
\end{equation}
Combining via triangle inequality:
\begin{equation*}
    |\log P_{g_1}(x) - \log P_{g_2}(x)|  \leq   |\lambda| \Delta(x) + |\lambda| \Delta_{\max}
\end{equation*}
\end{proof}

\subsection{Lemma~\ref{lem:hp_mse_bound}}

\begin{proof}
By Lemma~\ref{lem:gp_confidence}, with probability at least $1-\delta$, at iteration $T$ we have:
\begin{equation*}
    |f(x) - \mu_T(x)| \leq \beta_T^{1/2}\sigma_T(x) + \epsilon_T, \quad \forall x \in \mathcal{X}
\end{equation*}
Applying Lemma~\ref{lem:log_density_ratio} with $(g_1, g_2) = (f, \mu_T)$ we have
$\Delta(x) = |f(x) - \mu_T(x)| \leq \beta_T^{1/2}\sigma_T(x) + \epsilon_T$ and 
$\Delta_{\max} \leq \beta_T^{1/2}\sigma_{\max,T} + \epsilon_T$. Additionally with log partition function difference inequality (Eq.~\ref{Eq: log_partition_inequality}):
\begin{align*}
    \log \frac{P_f(x)}{P_{\mu_T}(x)} 
    &\leq |\lambda||f(x) - \mu_T(x)| + |\lambda|\Delta_{\max} \\
    &\leq |\lambda|(\beta_T^{1/2}\sigma_T(x) + \epsilon_T) + |\lambda|(\beta_T^{1/2}\sigma_{\max,T} + \epsilon_T)
\end{align*}
Using $\sigma_T(x) \leq \sigma_{\max,T}$ and exponentiating:
\begin{equation*}
    \frac{P_f(x)}{P_{\mu_T}(x)} \leq e^{2|\lambda|(\beta_T^{1/2}\sigma_{\max,T} + \epsilon_T)}
\end{equation*}

Bounding the MSE:

\begin{align*}
    \mathbb{E}_{P_f}[(f - \mu_T)^2] &= \mathbb{E}_{P_{\mu_T}}\left[\frac{P_f(x)}{P_{\mu_T}(x)} (f(x) - \mu_T(x))^2\right]
    \\
    &\leq e^{2|\lambda|(\beta_T^{1/2}\sigma_{\max,T} + \epsilon_T)} \cdot \mathbb{E}_{P_{\mu_T}}\left[(\beta_T^{1/2}\sigma_T(x) + \epsilon_T)^2\right] \\
    &= e^{2|\lambda|(\beta_T^{1/2}\sigma_{\max,T} + \epsilon_T)} \cdot \mathbb{E}_{P_{\mu_T}}\left[\beta_T \sigma_T^2(x) + 2\beta_T^{1/2}\sigma_T(x)\epsilon_T + \epsilon_T^2\right] \\
    &\leq e^{2|\lambda|(\beta_T^{1/2}\sigma_{\max,T} + \epsilon_T)} \cdot \left(\beta_T \cdot \mathbb{E}_{P_{\mu_T}}\left[\sigma_T^2(x)\right] + 2\beta_T^{1/2}\epsilon_T + \epsilon_T^2\right)
\end{align*}
where the last inequality holds as $\sigma_T \leq 1$.

\end{proof}

\subsection{Lemma~\ref{lem:bayesian_mse}}

We first establish a helper lemma for expectations involving Gaussian random variables.

\begin{lemma}\label{lem:gaussian_mgf}
Let $(\xi, \zeta)$ be jointly Gaussian with $\mathbb{E}[\xi] = \mathbb{E}[\zeta] = 0$, $\mathbb{V}(\xi) = u^2$, $\mathbb{V}(\zeta) = w^2$, and $\mathrm{Cov}(\xi, \zeta) = c$. Then:
\begin{equation*}
    \mathbb{E}[\xi^2 e^{\lambda\zeta}] = (u^2 + \lambda^2 c^2) e^{\lambda^2 w^2 / 2}
\end{equation*}
\end{lemma}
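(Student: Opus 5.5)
Write $\mathbb{E}[\xi^2 e^{\lambda\zeta}]$ as the second derivative of a joint moment generating function. For $(\xi,\zeta)$ centered jointly Gaussian with the stated covariance, the joint MGF is
\begin{equation*}
M(s) := \mathbb{E}\left[e^{s\xi + \lambda\zeta}\right] = \exp\left(\tfrac{1}{2}\left(s^2u^2 + 2s\lambda c + \lambda^2 w^2\right)\right),
\end{equation*}
which is finite for all real $s$. Since $\xi^2 e^{s\xi+\lambda\zeta}$ is dominated by an integrable function for $s$ in a neighbourhood of $0$ (Gaussian tails beat polynomial-times-exponential growth), we may differentiate under the expectation twice. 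This gives $\mathbb{E}[\xi^2 e^{\lambda\zeta}] = M''(0)$.

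Next, compute $M''(0)$ explicitly. Writing $M(s) = e^{\phi(s)}$ with $\phi(s) = \tfrac12 s^2u^2 + s\lambda c + \tfrac12\lambda^2w^2$, the chain rule gives
\begin{equation*}
M''(s) = \left(\phi''(s) + \phi'(s)^2\right)e^{\phi(s)}.
\end{equation*}
At $s=0$ we have $\phi'(0) = \lambda c$, $\phi''(0) = u^2$ and $\phi(0) = \tfrac12\lambda^2w^2$. Hence $M''(0) = (u^2 + \lambda^2c^2)e^{\lambda^2w^2/2}$, as claimed.

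As an alternative cross-check, one can avoid the differentiation argument by a change of measure. Under the tilted measure $d\mathbb{Q}/d\mathbb{P} = e^{\lambda\zeta}/\mathbb{E}[e^{\lambda\zeta}]$, with $\mathbb{E}[e^{\lambda\zeta}] = e^{\lambda^2w^2/2}$, the vector $(\xi,\zeta)$ stays Gaussian with unchanged covariance and mean shifted by $\lambda\cdot\mathrm{Cov}(\cdot,\zeta)$. In particular $\xi \sim \mathcal{N}(\lambda c, u^2)$ under $\mathbb{Q}$, so $\mathbb{E}_{\mathbb{Q}}[\xi^2] = u^2 + \lambda^2c^2$, and multiplying by the normalizer recovers the result.

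The only step needing any care is justifying the interchange of differentiation and expectation (or, in the second route, the Gaussian tilting fact, which follows by completing the square in the joint density). Both are routine, so I do not expect a genuine obstacle here.
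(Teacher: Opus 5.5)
Your proof is correct, but it takes a different route from the paper's.

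The paper conditions on $\zeta$. It uses the Gaussian regression formula $\xi\mid\zeta \sim \mathcal{N}(c\zeta/w^2,\, u^2 - c^2/w^2)$, and the tower property then gives $\mathbb{E}[\xi^2 e^{\lambda\zeta}] = (u^2 - c^2/w^2)\,\mathbb{E}[e^{\lambda\zeta}] + (c^2/w^4)\,\mathbb{E}[\zeta^2 e^{\lambda\zeta}]$. It then substitutes the scalar identities $\mathbb{E}[e^{\lambda\zeta}] = e^{\lambda^2 w^2/2}$ and $\mathbb{E}[\zeta^2 e^{\lambda\zeta}] = (w^2 + \lambda^2 w^4)e^{\lambda^2 w^2/2}$, after which the $c^2/w^2$ terms cancel. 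The second identity is the second $\lambda$-derivative of the one-dimensional MGF.

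You instead attach an auxiliary parameter $s$ to $\xi$ and differentiate the joint MGF $\mathbb{E}[e^{s\xi+\lambda\zeta}]$ twice at $s=0$. Equivalently, you read off the first two moments of $\xi$ under the tilted measure with density proportional to $e^{\lambda\zeta}$.

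Both arguments rely on the same routine interchange of differentiation and expectation; the paper needs it for $\mathbb{E}[\zeta^2 e^{\lambda\zeta}]$. Your version has two advantages:
\begin{itemize}
\item It never divides by $w^2$, so it also covers the degenerate case $w=0$. There the paper's conditional formula is undefined, although the claim holds trivially because $\zeta=0$ almost surely and $c=0$.
\item The tilting version explains the shape of the answer: $u^2$ is the unchanged variance of $\xi$, and $\lambda c$ is its mean shift under the Boltzmann-type reweighting. This extends immediately to $\mathbb{E}[g(\xi)e^{\lambda\zeta}]$ for general $g$.
\end{itemize}
In exchange, the paper's route needs only Gaussian conditioning and scalar MGF facts, with no joint MGF or change of measure.
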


\begin{proof}
Using the conditional distribution $\xi|\zeta \sim \mathcal{N}\left(\frac{c}{w^2}\zeta, u^2 - \frac{c^2}{w^2}\right)$:
\begin{equation*}
    \mathbb{E}[\xi^2|\zeta] = \mathbb{V}[\xi|\zeta] + \left(\mathbb{E}[\xi|\zeta] \right)^2 = u^2 - \frac{c^2}{w^2} + \frac{c^2}{w^4}\zeta^2
\end{equation*}
By the tower property:
\begin{align*}
    \mathbb{E}[\xi^2 e^{\lambda\zeta}] &= \mathbb{E}\left[\mathbb{E}[\xi^2 e^{\lambda\zeta}|\zeta]\right] = \mathbb{E}\left[e^{\lambda\zeta} \mathbb{E}[\xi^2|\zeta]\right] = \mathbb{E}\left[e^{\lambda\zeta} \left(u^2 - \frac{c^2}{w^2} + \frac{c^2}{w^4}\zeta^2\right)\right] \\ & = \left(u^2 - \frac{c^2}{w^2}\right)\mathbb{E}[e^{\lambda\zeta}] + \frac{c^2}{w^4}\mathbb{E}[\zeta^2 e^{\lambda\zeta}]
\end{align*}
Using the Gaussian MGF, $\mathbb{E}[e^{\lambda\zeta}] = e^{\lambda^2 w^2/2}$ and $\mathbb{E}[\zeta^2 e^{\lambda\zeta}] = (w^2 + \lambda^2 w^4)e^{\lambda^2 w^2/2}$ $\left(\mathrm{i.e.,}\ \frac{d\mathbb{E}[e^{\lambda\zeta}]}{d\lambda^2}\right)$, the result follows.
\end{proof}

Now we prove Lemma~\ref{lem:bayesian_mse}.

\begin{proof}

Let $r_t(x) = f(x) - \mu_{t-1}(x)$ denote the residual. Under the GP posterior, $r_t(x) \sim \mathcal{N}(0, \sigma_{t-1}^2(x))$. Using $f(x) = \mu_{t-1}(x) + r_t(x)$ and noting that $Z_f = Z_{\mu_{t-1}} \cdot \mathbb{E}_{P_{\mu_{t-1}}}[e^{\lambda r_t(x')}]$, the weighted MSE becomes:
\begin{align*}
    L_t := \mathbb{E}_{P_f}[r_t(x)^2] &= \int \frac{e^{\lambda \mu_{t-1}(x) + b(x)} \cdot e^{\lambda r_t(x)}}{Z_{\mu_{t-1}} \cdot \mathbb{E}_{P_{\mu_{t-1}}}[e^{\lambda r_t(x')}]} r_t(x)^2 dx = \frac{\mathbb{E}_{P_{\mu_{t-1}}}[e^{\lambda r_t(x)} r_t(x)^2]}{\mathbb{E}_{P_{\mu_{t-1}}}[e^{\lambda r_t(x')}]}
\end{align*}

Applying Jensen's inequality to the denominator (using convexity of $e^x$):
\begin{equation*}
    L_t \leq \mathbb{E}_{P_{\mu_{t-1}}}[e^{\lambda r_t(x)} r_t(x)^2] \cdot e^{-\lambda\mathbb{E}_{P_{\mu_{t-1}}}[r_t(x')]}
\end{equation*}

Define $\bar{r}_t := \mathbb{E}_{P_{\mu_{t-1}}}[r_t(x')]$. Since $e^{-\lambda \bar{r}_t}$ does not depend on $x$, we have:
\begin{align*}
    L_t &\leq \mathbb{E}_{P_{\mu_{t-1}}}[e^{\lambda r_t(x)} r_t(x)^2] \cdot e^{-\lambda \bar{r}_t} = \mathbb{E}_{P_{\mu_{t-1}}}[r_t(x)^2 e^{\lambda(r_t(x) - \bar{r}_t)}]
\end{align*}

Taking the posterior expectation over $f|D_{t-1}$ and applying Fubini's theorem:
\begin{align*}
    \mathbb{E}_{f|D_{t-1}}[L_t] &\leq  \mathbb{E}_{P_{\mu_{t-1}}}\left[\mathbb{E}_{f|D_{t-1}}\left[r_t(x)^2 e^{\lambda(r_t(x) - \bar{r}_t)}\right]\right]
\end{align*}

Define $\zeta(x) := r_t(x) - \bar{r}_t$. Under the posterior, $(r_t(x), \zeta(x))$ are jointly Gaussian with:

\begin{align*}
    \mathbb{V}(\zeta(x)) &= \sigma_{t-1}^2(x) + V_{t-1}^2 - 2C_{t-1}(x)\\
    \mathrm{Cov}(r_t(x), \zeta(x)) &  = \sigma_{t-1}^2(x) - C_{t-1}(x)
\end{align*}

where 
\begin{align*}
    V_{t-1}^2 :&= \mathbb{E}_{f|D_{t-1}}\left[\left(\mathbb{E}_{x' \sim P_{\mu_{t-1}}}[r_t(x')]\right)^2\right] = \mathbb{E}_{f|D_{t-1}}\left[\mathbb{E}_{(x',x'')\sim P_{\mu_{t-1}}^2}[r_t(x') r_t(x'')]\right]
    \\ &  = \mathbb{E}_{(x',x'')\sim P_{\mu_{t-1}}^2}[k_{t-1}(x',x'')]
\end{align*}
and (similarly)
\begin{align*}
    C_{t-1}(x) &:= \mathrm{Cov}(r_t(x), \bar{r}_t) = \mathbb{E}_{f|D_{t-1}}[r_t(x) \bar{r}_t] = \mathbb{E}_{f|D_{t-1}}\left[r_t(x) \mathbb{E}_{x' \sim P_{\mu_{t-1}}}[r_t(x')]\right] \\
    & = \mathbb{E}_{x'\sim P_{\mu_{t-1}}}[k_{t-1}(x,x')]
\end{align*}
using $\mathbb{E}_{f|D_{t-1}}[r_t(x)] = 0$ and $\mathbb{E}_{f|D_{t-1}}[r_t(x) r_t(x')] = k_{t-1}(x, x')$ and Fubini's theorem.

By Cauchy-Schwarz inequality on the positive semi-definite kernel (i.e., $|k_{t-1}(x,x')| \leq \sigma_{t-1}(x)\sigma_{t-1}(x') \leq \sigma_{t-1}(x)$), we have $|C_{t-1}(x)| \leq \sigma_{t-1}(x)$, so:
\begin{align*}
    & |\sigma_{t-1}^2(x) - C_{t-1}(x)| \leq |\sigma_{t-1}^2(x)| + |C_{t-1}(x)| \leq 2\sigma_{t-1}(x) \\
    & \mathbb{V}(\zeta(x)) \leq  4
\end{align*}

Applying Lemma~\ref{lem:gaussian_mgf} with $\xi = r_t(x)$ and $\zeta = \zeta(x)$ as defined above, we have $u^2 = \sigma_{t-1}^2(x)$, $w^2 = \mathbb{V}(\zeta(x))$, and $c = \sigma_{t-1}^2(x) - C_{t-1}(x)$: 
\begin{align*}
    \mathbb{E}_{f|D_{t-1}}[r_t(x)^2 e^{\lambda\zeta(x)}] &= (\sigma_{t-1}^2(x) + \lambda^2(\sigma_{t-1}^2(x) - C_{t-1}(x))^2) e^{\lambda^2 \mathbb{V}(\zeta(x))/2} \\
    &\leq (\sigma_{t-1}^2(x) + 4\lambda^2\sigma_{t-1}^2(x)) e^{2\lambda^2} \\
    &= (1 + 4\lambda^2)\sigma_{t-1}^2(x) \cdot e^{2\lambda^2}
\end{align*}
Finally, taking expectation over $x \sim P_{\mu_{t-1}}$:
\begin{equation*}
    \mathbb{E}_{f|D_{t-1}}[L_t] \leq e^{2\lambda^2}(1 + 4\lambda^2) \cdot \mathbb{E}_{P_{\mu_{t-1}}}[\sigma_{t-1}^2(x)]
\end{equation*}
\end{proof}

\subsection{Theorem~\ref{thm:hp_terminal} and Corollary~\ref{cor:rate}} \label{App: them_main_hp_finit_hp}

We first recall a standard result on information gain.

\begin{lemma}[Information Gain Bound {\citep[Lemma 5.4]{srinivas2009gaussian}}]\label{lem:info_gain}
For any sequence of GP queries $x_1, \ldots, x_T$:
\begin{equation*}
\sum_{t=1}^T \sigma_{t-1}^2(x_t) \leq C_1 \gamma_T,
\end{equation*}
where $C_1 = 2/\log(1+\tau^{-2})$ and $\tau^2$ is the noise variance.
\end{lemma}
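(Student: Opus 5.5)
The plan is the standard argument of \citet{srinivas2009gaussian}. First express the information gain of the actual query sequence in terms of posterior variances. Then bound each $\sigma_{t-1}^2(x_t)$ by a multiple of the corresponding logarithmic summand. Finally use that $\gamma_T$ dominates the information gain of any particular sequence of $T$ queries.

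\textbf{Step 1 (information gain identity).} By the chain rule for mutual information, $I(y_{1:T}; f) = \sum_{t=1}^T I(y_t; f \mid y_{1:t-1})$. Conditionally on $\mathcal{D}_{t-1}$, the observation $y_t = f(x_t) + \epsilon_t$ is Gaussian with variance $\sigma_{t-1}^2(x_t) + \tau^2$. Conditionally on $f$, it is Gaussian with variance $\tau^2$. Hence each summand is a difference of Gaussian entropies:
\begin{equation*}
I(y_t; f \mid y_{1:t-1}) = \tfrac12 \log\bigl(1 + \tau^{-2}\sigma_{t-1}^2(x_t)\bigr).
\end{equation*}
This gives $I(y_{1:T}; f) = \tfrac12 \sum_{t=1}^T \log(1+\tau^{-2}\sigma_{t-1}^2(x_t))$. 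A subtlety is that $x_t$ may depend on past data, including Monte Carlo randomness in \texttt{AB-SID} or \texttt{TS-SID}. The Gaussian conditional structure still holds given $\mathcal{D}_{t-1}$, because the posterior variance does not depend on the observed $y$ values. I would either argue this directly or bound the realized sum of log terms by $\gamma_T$, as Srinivas et al.\ do.

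\textbf{Step 2 (elementary inequality).} Because $k(x,x)\le 1$, we have $\sigma_{t-1}^2(x_t) \le 1$, so $s := \tau^{-2}\sigma_{t-1}^2(x_t) \in [0,\tau^{-2}]$. The function $s \mapsto s/\log(1+s)$ is nondecreasing on $(0,\infty)$. Therefore
\begin{equation*}
s \le \frac{\tau^{-2}}{\log(1+\tau^{-2})}\log(1+s) \quad \text{for } s \in [0,\tau^{-2}].
\end{equation*}
Multiplying by $\tau^2$ yields
\begin{equation*}
\sigma_{t-1}^2(x_t) \le \frac{\log\bigl(1+\tau^{-2}\sigma_{t-1}^2(x_t)\bigr)}{\log(1+\tau^{-2})}.
\end{equation*}

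\textbf{Step 3 (sum and conclude).} Summing over $t$ and using Step 1 gives
\begin{equation*}
\sum_{t=1}^T \sigma_{t-1}^2(x_t) \le \frac{2}{\log(1+\tau^{-2})}\, I(y_{1:T};f) \le C_1\gamma_T,
\end{equation*}
where the last inequality holds because $\gamma_T$ is the maximum of the information gain over all query sequences of length $T$. The only step needing care is the entropy identity in Step 1 under adaptive query selection. Everything else is routine.
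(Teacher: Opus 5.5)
Your proof is correct and is the standard argument of \citet{srinivas2009gaussian} (their Lemmas 5.3--5.4 with the $4\beta_t$ factor dropped, so $8/\log(1+\tau^{-2})$ becomes $2/\log(1+\tau^{-2})$); the paper gives no proof of its own and simply cites that result. Your adaptivity concern is settled by the route you already mention: $\tfrac12\sum_{t=1}^T\log(1+\tau^{-2}\sigma_{t-1}^2(x_t)) = \tfrac12\log\det(I+\tau^{-2}K_T)$ depends only on the realized points, because posterior variances do not depend on the observed $y$ values, so it is at most $\gamma_T$ however the queries were chosen.
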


\begin{lemma}[Variance Bound for \texttt{AB-SID-iVAR}]\label{lem:var_bound_ab}
Consider \texttt{AB-SID-iVAR} selecting $x_t \in \bar{\mathcal{X}}_t$ where $\bar{\mathcal{X}}_t$ is defined in Table~\ref{tab:methods}. If the constraint threshold at each iteration is estimated via $M$ Monte Carlo samples, then for any $\delta_{\mathrm{GP}}, \delta_{\mathrm{MC}} \in (0,1)$, with probability at least $1 - \delta_{\mathrm{GP}} - \delta_{\mathrm{MC}}$:
\begin{equation*}
\mathbb{E}_{P_{\mu_T}}[\sigma_T^2] \leq e^{4|\lambda|(\beta^{1/2}_T + \epsilon_T)} \cdot \left(\frac{C_1\gamma_T}{T} + \frac{2\log(2/\delta_{\mathrm{MC}})}{3T} + \sqrt{\frac{\log(2/\delta_{\mathrm{MC}})}{2MT}}\right),
\end{equation*}
where $\beta_T, \epsilon_T$ are as in Lemma~\ref{lem:gp_confidence} with failure probability $\delta_{\mathrm{GP}}$. For finite $\mathcal{X}$ with exact threshold computation, the $\log(2/\delta_{\mathrm{MC}})$ terms vanish.
\end{lemma}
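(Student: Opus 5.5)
The plan is to bound $\mathbb{E}_{P_{\mu_T}}[\sigma_T^2]$ by an average over iterations of the \emph{selected} variances $\sigma_{t-1}^2(x_t)$, and then apply Lemma~\ref{lem:info_gain}. The argument rests on three observations.

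\textbf{Monotonicity of the posterior variance.} Since $\sigma_T^2(x)\le\sigma_{t-1}^2(x)$ for every $t\le T$, we have
\begin{equation*}
\mathbb{E}_{P_{\mu_T}}[\sigma_T^2]\le \frac1T\sum_{t=1}^T \mathbb{E}_{P_{\mu_T}}[\sigma_{t-1}^2].
\end{equation*}
It therefore suffices to compare, for each $t$, the expectation under $P_{\mu_T}$ with the expectation under the surrogate $\tilde p_{t-1}$.

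\textbf{Change of measure via Lemma~\ref{lem:log_density_ratio}.} I would write $\tilde p_{t-1}\propto e^{\lambda g_{t-1}+b}$ with $g_{t-1}=\mu_{t-1}+\tfrac{\lambda}{2}\sigma_{t-1}^2$ (absorbing the variance term into $g$). Both $\mu_T$ and $g_{t-1}$ are compared to $f$ through the confidence bound of Lemma~\ref{lem:gp_confidence}, applied with failure probability $\delta_{\mathrm{GP}}$ and monotone $\beta_t\le\beta_T$, $\epsilon_t$ handled similarly. This gives
\begin{equation*}
\sup_x |f-\mu_T|\le \beta_T^{1/2}+\epsilon_T,\qquad \sup_x|f-\mu_{t-1}|\le \beta_T^{1/2}+\epsilon_T.
\end{equation*}
Applying Lemma~\ref{lem:log_density_ratio} to $(\mu_T,f)$ and to $(f,\mu_{t-1})$, each density ratio is bounded by $e^{2|\lambda|(\beta_T^{1/2}+\epsilon_T)}$. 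Chaining the two gives the prefactor $e^{4|\lambda|(\beta_T^{1/2}+\epsilon_T)}$.

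The extra term $\tfrac{\lambda}{2}\sigma_{t-1}^2$ in the AB surrogate is the delicate point. I would handle it using the fact that $\tilde p^{\mathrm u}_{t-1}$ is the Gaussian-MGF-tilted form: the additional factor $e^{\lambda^2\sigma^2/2}\in[1,e^{\lambda^2/2}]$ is bounded. If this constant does not fit the stated exponent, I would instead argue that it can be absorbed. This is the step I expect to need care with the constants.

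\textbf{Threshold rule and MC error.} By the definition of $\bar{\mathcal X}_t$, the selected point satisfies $\sigma_{t-1}^2(x_t)\ge \hat\theta_t$, where $\hat\theta_t$ is the $M$-sample estimate of $\theta_t=\mathbb{E}_{\tilde p_{t-1}}[\sigma_{t-1}^2]$. Summing over $t$,
\begin{equation*}
\sum_t \theta_t\le \sum_t\sigma_{t-1}^2(x_t)+\sum_t(\theta_t-\hat\theta_t).
\end{equation*}
The first sum is at most $C_1\gamma_T$ by Lemma~\ref{lem:info_gain}.

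The second sum collects $TM$ bounded samples in $[0,1]$. I would treat $(\theta_t-\hat\theta_t)$ as a martingale difference sequence with respect to the filtration generated by the data and past MC draws. A Freedman/Bernstein-type inequality (variance at most $1/(4M)$ per step, increments bounded by $1$) then gives, with probability $1-\delta_{\mathrm{MC}}$,
\begin{equation*}
\frac1T\sum_t(\theta_t-\hat\theta_t)\le \frac{2\log(2/\delta_{\mathrm{MC}})}{3T}+\sqrt{\frac{\log(2/\delta_{\mathrm{MC}})}{2MT}},
\end{equation*}
matching the stated form. The $\tfrac{2}{3}$ constant is Bernstein's, and the square-root term is the Hoeffding-variance term over $MT$ samples.

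\textbf{Main obstacle.} Getting this concentration right is the hard step. The samples are drawn adaptively, since $\tilde p_{t-1}$ depends on past data, so a plain Hoeffding bound does not apply and a martingale argument is required. In the continuous case the MCMC samples must also be assumed exact or i.i.d.\ from $\tilde p_{t-1}$, which I would state as the agnostic-sampler assumption.

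A union bound over the GP and MC events gives probability $1-\delta_{\mathrm{GP}}-\delta_{\mathrm{MC}}$. For finite $\mathcal X$ with an exact threshold, $\hat\theta_t=\theta_t$ and the MC terms vanish.
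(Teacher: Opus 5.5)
Your overall structure matches the paper's: monotonicity of $\sigma^2$, the threshold rule combined with Lemma~\ref{lem:info_gain}, Freedman's inequality for the MC martingale with conditional variance at most $1/(4M)$, and a change of measure from $P_{\mu_T}$ to $P_{\mu_{t-1}}$. Your chain through $f$ gives the same factor $e^{4|\lambda|(\beta_T^{1/2}+\epsilon_T)}$ as the paper's direct comparison of $\mu_T$ with $\mu_{t-1}$.

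The gap is the step you flagged: passing from $P_{\mu_{t-1}}$ to the AB surrogate $\tilde p_{t-1}$. Treating this as a density-ratio bound costs $\sup_x P_{\mu_{t-1}}(x)/\tilde p_{t-1}(x)\le e^{\lambda^2/2}$. Folding $\tfrac{\lambda}{2}\sigma_{t-1}^2$ into $g_{t-1}$ and invoking Lemma~\ref{lem:log_density_ratio} is worse, costing $e^{\lambda^2}$. Neither factor can be ``absorbed'': the statement's prefactor is exactly $e^{4|\lambda|(\beta_T^{1/2}+\epsilon_T)}$, which your chain already uses up, so nothing is left to dominate a constant strictly larger than one. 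As written, your argument proves the bound only up to an extra factor $e^{\lambda^2/2}$.

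The missing idea is that this step needs no two-sided density-ratio bound at all. It needs only the one-sided inequality $\mathbb{E}_{P_{\mu_{t-1}}}[\sigma_{t-1}^2]\le \mathbb{E}_{\tilde p_{t-1}}[\sigma_{t-1}^2]=\theta_t$, and that holds with constant one.

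To see this, write $\tilde p_{t-1}\propto P_{\mu_{t-1}}\,w$ with $w=e^{\lambda^2\sigma_{t-1}^2/2}$. Then $\theta_t=\mathbb{E}_{P_{\mu_{t-1}}}[\sigma_{t-1}^2 w]/\mathbb{E}_{P_{\mu_{t-1}}}[w]$. Set $h=\sigma_{t-1}^2$ and take independent $x,x'\sim P_{\mu_{t-1}}$. Since $w$ is an increasing function of $h$, the differences $h(x)-h(x')$ and $w(x)-w(x')$ share a sign, so $\mathrm{Cov}_{P_{\mu_{t-1}}}(h,w)=\tfrac12\mathbb{E}[(h(x)-h(x'))(w(x)-w(x'))]\ge 0$. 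This is Chebyshev's association inequality, and it gives $\theta_t\ge\mathbb{E}_{P_{\mu_{t-1}}}[h]$.

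Intuitively, the variance inflation in the AB surrogate only shifts mass toward high-variance points. That raises the threshold, so it works in your favour rather than costing a constant. With this inequality, $\sum_t \mathbb{E}_{P_{\mu_{t-1}}}[\sigma_{t-1}^2]\le\sum_t\theta_t\le C_1\gamma_T+\sum_t(\theta_t-\hat\theta_t)$, and your remaining steps yield exactly the stated bound.
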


\begin{proof}
The proof proceeds in four parts: (1) relating the surrogate distribution to the target, (2) establishing a martingale structure for MC errors, (3) bounding the cumulative variance, and (4) handling the distribution shift.

Since $\tilde{p}^{\mathrm{u}}_{t-1} = e^{\lambda\mu_{t-1} + \frac{\lambda^2}{2}\sigma_{t-1}^2 + b}$ and its normalized version satisfies $\tilde{p}_{t-1}(x) \propto P_{\mu_{t-1}}(x) \cdot e^{\frac{\lambda^2}{2}\sigma_{t-1}^2}$, we have:
\begin{equation}
\begin{aligned}
\mathbb{E}_{\tilde{p}_{t-1}}[\sigma_{t-1}^2] & = \frac{\mathbb{E}_{P_{\mu_{t-1}}}[\sigma_{t-1}^2 \cdot e^{\frac{\lambda^2}{2}\sigma_{t-1}^2}]}{\mathbb{E}_{P_{\mu_{t-1}}}[e^{\frac{\lambda^2}{2}\sigma_{t-1}^2}]}  \geq \mathbb{E}_{P_{\mu_{t-1}}}[\sigma_{t-1}^2],
\end{aligned}
\label{Eq: inequality_between_surrogate_to_target_integrated_var}
\end{equation}
\noindent where the last inequality uses $\mathrm{Cov}_{P_{\mu_{t-1}}}(\sigma_{t-1}^2, e^{\frac{\lambda^2}{2}\sigma_{t-1}^2}) \geq 0$: let $x, x' \overset{\text{i.i.d.}}{\sim} P_{\mu_{t-1}}$, $h(x) = \sigma_{t-1}^2(x)$, $w(x) = e^{\frac{\lambda^2}{2}\sigma_{t-1}^2(x)}$. Since $w$ is increasing in $h$, $(h(x) - h(x'))$ and $(w(x) - w(x'))$ have the same sign, so $\mathrm{Cov}(h, w) = \frac{1}{2}\mathbb{E}[(h(x) - h(x'))(w(x) - w(x'))] \geq 0$.

The MC estimate of the $\bar{\mathcal{X}}$'s threshold at round $t$:
\begin{equation}
\hat{\mathbb{E}}_{\tilde{p}_{t-1}}[\sigma_{t-1}^2] := \frac{1}{M}\sum_{i=1}^{M} \sigma_{t-1}^2(x^{(i)}), \quad x^{(i)} \overset{\text{i.i.d.}}{\sim} \tilde{p}_{t-1}.
\end{equation}

Let $\mathcal{F}_{t-1}$ denote the $\sigma$-algebra generated by $D_{t-1} = \{(x_i, y_i)\}_{i=1}^{t-1}$. Since $\tilde{p}_{t-1}$ is fully determined by $\mu_{t-1}$ and $\sigma_{t-1}^2$, which are $\mathcal{F}_{t-1}$-measurable, the estimation error
\begin{equation}
\xi_t := \hat{\mathbb{E}}_{\tilde{p}_{t-1}}[\sigma_{t-1}^2] - \mathbb{E}_{\tilde{p}_{t-1}}[\sigma_{t-1}^2] = \frac{1}{M}\sum_{i=1}^{M} \sigma_{t-1}^2(x^{(i)}) - \mathbb{E}_{\tilde{p}_{t-1}}[\sigma_{t-1}^2]
\end{equation}
satisfies $\mathbb{E}[\xi_t | \mathcal{F}_{t-1}] = 0$, forming a martingale difference sequence.

Since $\sigma_{t-1}^2(x) \in [0, 1]$ by kernel normalization, the increments are bounded $|\xi_t| \leq 1$. For the conditional variance, given $\mathcal{F}_{t-1}$, the expectation term is a constant and the samples $x^{(i)} \overset{\text{i.i.d.}}{\sim} \tilde{p}_{t-1}$, so by the sample mean variance formula:
$$\mathrm{Var}(\xi_t | \mathcal{F}_{t-1}) = \mathrm{Var}\left(\frac{1}{M}\sum_{i=1}^{M} \sigma_{t-1}^2(x^{(i)}) \Big| \mathcal{F}_{t-1}\right) = \frac{1}{M}\mathrm{Var}_{x \sim \tilde{p}_{t-1}}(\sigma_{t-1}^2(x)) \leq \frac{1}{4M},$$
where the last inequality uses $\mathrm{Var}(Y) \leq 1/4$ for any random variable $Y \in [0,1]$. Hence, the cumulative error $S_T := \sum_{t=1}^T \xi_t$ has total conditional variance $V_T := \sum_{t=1}^T \mathrm{Var}(\xi_t | \mathcal{F}_{t-1}) \leq \frac{T}{4M}$.

By Freedman's inequality \citep{freedman1975tail} (and abuse of notation of $\epsilon$), for any $\epsilon > 0$:
\begin{equation*}
\Pr\left(|S_T| \geq \epsilon\right) \leq 2\exp\left(-\frac{\epsilon^2}{2(V_T + \epsilon/3)}\right) \leq 2\exp\left(-\frac{\epsilon^2}{2(T/(4M) + \epsilon/3)}\right).
\end{equation*}

To obtain a high-probability bound, we seek $\epsilon$ such that the r.h.s. is at most $\delta_{MC}$. This requires:
\begin{equation*}
\frac{\epsilon^2}{2(T/(4M) + \epsilon/3)} \geq \log(2/\delta_{MC}).
\end{equation*}

Rearranging yields the quadratic inequality, we have:
\begin{equation*}
\epsilon^2 - \frac{2\log(2/\delta_{MC})}{3}\epsilon - \frac{T\log(2/\delta_{MC})}{2M} \geq 0.
\end{equation*}

The positive root of the corresponding equality is:
\begin{equation*}
\epsilon^+ = \frac{\log(2/\delta_{MC})}{3} + \sqrt{\frac{\log^2(2/\delta_{MC})}{9} + \frac{T\log(2/\delta_{MC})}{2M}}.
\end{equation*}

Therefore, with probability at least $1 - \delta_{MC}$:
\begin{equation*}
|S_T| \leq \frac{\log(2/\delta_{MC})}{3} + \sqrt{\frac{\log^2(2/\delta_{MC})}{9} + \frac{T\log(2/\delta_{MC})}{2M}} \leq \frac{2\log(2/\delta_{MC})}{3} + \sqrt{\frac{T\log(2/\delta_{MC})}{2M}},
\end{equation*}
\noindent where the last (optional) inequality uses $\sqrt{a + b} \leq \sqrt{a} + \sqrt{b}$ to obtain a cleaner rate analysis.

Using Eq.~\ref{Eq: inequality_between_surrogate_to_target_integrated_var} and the algorithm's selection rule $\sigma_{t-1}^2(x_t) \geq \hat{\mathbb{E}}_{\tilde{p}_{t-1}}[\sigma_{t-1}^2]$,
\begin{align*}
\mathbb{E}_{P_{\mu_{t-1}}}[\sigma_{t-1}^2] &\leq \mathbb{E}_{\tilde{p}_{t-1}}[\sigma_{t-1}^2] \\
&= \hat{\mathbb{E}}_{\tilde{p}_{t-1}}[\sigma_{t-1}^2] - \left(\hat{\mathbb{E}}_{\tilde{p}_{t-1}}[\sigma_{t-1}^2] - \mathbb{E}_{\tilde{p}_{t-1}}[\sigma_{t-1}^2]\right) \\
&\leq \sigma_{t-1}^2(x_t) - \left(\hat{\mathbb{E}}_{\tilde{p}_{t-1}}[\sigma_{t-1}^2] - \mathbb{E}_{\tilde{p}_{t-1}}[\sigma_{t-1}^2]\right) \\& \leq C_1\gamma_T + \left|\sum_{t=1}^T \left(\hat{\mathbb{E}}_{\tilde{p}_{t-1}}[\sigma_{t-1}^2] - \mathbb{E}_{\tilde{p}_{t-1}}[\sigma_{t-1}^2]\right)\right|.
\end{align*}
Summing over $t = 1, \ldots, T$ and applying Lemma~\ref{lem:info_gain} and Part 2:
\begin{equation}
\sum_{t=1}^T \mathbb{E}_{P_{\mu_{t-1}}}[\sigma_{t-1}^2] \leq C_1\gamma_T + |S_T| \leq C_1\gamma_T + \frac{2\log(2/\delta_{MC})}{3} + \sqrt{\frac{T\log(2/\delta_{MC})}{2M}}
\label{eq:cumsum_bound}
\end{equation}

By monotonicity of posterior variance, $\sigma_T^2(x) \leq \sigma_{t-1}^2(x)$ for all $x \in \mathcal{X}$ and $t \leq T$:
\begin{equation*}
T \cdot \mathbb{E}_{P_{\mu_T}}[\sigma_T^2] \leq \sum_{t=1}^T \mathbb{E}_{P_{\mu_T}}[\sigma_{t-1}^2].
\end{equation*}
To relate $\mathbb{E}_{P_{\mu_T}}[\sigma_{t-1}^2]$ to $\mathbb{E}_{P_{\mu_{t-1}}}[\sigma_{t-1}^2]$, we bound the density ratio. By Lemma~\ref{lem:gp_confidence}, with probability at least $1 - \delta_{\mathrm{GP}}$:
\begin{equation*}
|\mu_T(x) - \mu_{t-1}(x)| \leq |f(x) - \mu_T(x)| + |f(x) - \mu_{t-1}(x)| \leq 2\beta^{1/2}_T + 2\epsilon_T,
\end{equation*}
where we used $\sigma_T(x), \sigma_{t-1}(x) \leq 1$.

Applying Lemma~\ref{lem:log_density_ratio} with $g_1 = \mu_T$, $g_2 = \mu_{t-1}$, $\Delta(x) \leq 2\beta^{1/2}_T + 2\epsilon_T$, and $\Delta_{\max} \leq 2\beta^{1/2}_T + 2\epsilon_T$:
\begin{equation*}
\frac{P_{\mu_T}(x)}{P_{\mu_{t-1}}(x)} \leq e^{4|\lambda|(\beta^{1/2}_T + \epsilon_T)}.
\end{equation*}
Therefore:
\begin{equation*}
\mathbb{E}_{P_{\mu_T}}[\sigma_{t-1}^2] = \mathbb{E}_{P_{\mu_{t-1}}}\left[\sigma_{t-1}^2 \cdot \frac{P_{\mu_T}}{P_{\mu_{t-1}}}\right] \leq e^{4|\lambda|(\beta^{1/2}_T + \epsilon_T)} \cdot \mathbb{E}_{P_{\mu_{t-1}}}[\sigma_{t-1}^2].
\end{equation*}
Summing over $t$ and applying Eq.~\ref{eq:cumsum_bound}:
\begin{align*}
T \cdot \mathbb{E}_{P_{\mu_T}}[\sigma_T^2] &\leq e^{4|\lambda|(\beta^{1/2}_T + \epsilon_T)} \cdot \sum_{t=1}^T \mathbb{E}_{P_{\mu_{t-1}}}[\sigma_{t-1}^2] \\
&\leq e^{4|\lambda|(\beta^{1/2}_T + \epsilon_T)} \cdot \left(C_1\gamma_T + \frac{2\log(2/\delta_{MC})}{3} + \sqrt{\frac{T\log(2/\delta_{MC})}{2M}}\right).
\end{align*}
Dividing by $T$:
\begin{equation*}
\mathbb{E}_{P_{\mu_T}}[\sigma_T^2] \leq e^{4|\lambda|(\beta^{1/2}_T + \epsilon_T)} \cdot \left(\frac{C_1\gamma_T}{T} + \frac{2\log(2/\delta_{MC})}{3T} + \sqrt{\frac{\log(2/\delta_{MC})}{2MT}}\right).
\end{equation*}

For finite $\mathcal{X}$ with exact threshold computation, $\hat{\mathbb{E}}_{\tilde{p}_{t-1}}[\sigma_{t-1}^2] = \mathbb{E}_{\tilde{p}_{t-1}}[\sigma_{t-1}^2]$ for all $t$, so $\epsilon_{\mathrm{MC}} = \delta_{\mathrm{MC}} = 0$.

\end{proof}

\begin{lemma}[Variance Bound for TS-SID-iVAR]\label{lem:var_bound_ts}
Consider \texttt{TS-SID-iVAR} selecting $x_t \in \bar{\mathcal{X}}_t$ where $\bar{\mathcal{X}}_t$ is defined in Table~\ref{tab:methods}. If the constraint threshold at each iteration is estimated via $M$ Monte Carlo samples, then for any $\delta_{\mathrm{GP}}, \delta_{\mathrm{MC}} \in (0,1)$, with probability at least $1 - \delta_{\mathrm{GP}} - \delta_{\mathrm{MC}}$:
\begin{equation*}
\mathbb{E}_{P_{\mu_T}}[\sigma_T^2] \leq e^{6|\lambda|(\beta^{1/2}_T + \epsilon_T)} \cdot \left(\frac{C_1\gamma_T}{T} + \frac{2\log(2/\delta_{\mathrm{MC}})}{3T} + \sqrt{\frac{\log(2/\delta_{\mathrm{MC}})}{2MT}}\right),
\end{equation*}
where $\beta_T, \epsilon_T$ are as in Lemma~\ref{lem:gp_confidence} with failure probability $\delta_{\mathrm{GP}}$. For finite $\mathcal{X}$ with exact threshold computation, the $\log(2/\delta_{\mathrm{MC}})$ terms vanish.
\end{lemma}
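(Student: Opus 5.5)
I will follow the proof of Lemma~\ref{lem:var_bound_ab} step for step. Only the first part changes, where the surrogate is related to $P_{\mu_{t-1}}$. For \texttt{TS-SID} the surrogate is $\tilde p_{t-1}\propto e^{\lambda\tilde f_{t-1}+b}$, with $\tilde f_{t-1}$ a posterior draw. The covariance-monotonicity argument in Eq.~\ref{Eq: inequality_between_surrogate_to_target_integrated_var} therefore no longer applies. I will replace it by a density-ratio bound through Lemma~\ref{lem:log_density_ratio} with $(g_1,g_2)=(\mu_{t-1},\tilde f_{t-1})$. This gives
\begin{equation*}
\mathbb{E}_{P_{\mu_{t-1}}}[\sigma_{t-1}^2]\le e^{2|\lambda|\sup_x|\tilde f_{t-1}(x)-\mu_{t-1}(x)|}\,\mathbb{E}_{\tilde p_{t-1}}[\sigma_{t-1}^2].
\end{equation*}

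\textbf{Controlling the sample deviation.} I need $\sup_x|\tilde f_{t-1}-\mu_{t-1}|\le \beta_T^{1/2}+\epsilon_T$ uniformly over $t\le T$. Since $\tilde f_{t-1}$ is an exact draw from $f\mid\mathcal D_{t-1}$, Lemma~\ref{lem:gp_confidence} applies to it as it does to $f$, and $\sigma_{t-1}\le 1$. I will charge this event to the same $\delta_{\mathrm{GP}}$ budget. The $t$-uniformity costs at most a union bound, which only changes logarithmic factors inside $\beta_T$; I expect the intended argument to absorb it into $\beta_T$ as in the \texttt{AB} case. With this in hand the step yields the factor $e^{2|\lambda|(\beta_T^{1/2}+\epsilon_T)}$.

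\textbf{Cumulative variance.} The Freedman martingale argument goes through unchanged. The MC samples are drawn i.i.d.\ from $\tilde p_{t-1}$, which is $\mathcal F_{t-1}$-measurable once $\tilde f_{t-1}$ is included in the filtration, and the increments still lie in $[0,1]$. The selection rule gives $\sigma_{t-1}^2(x_t)\ge\hat{\mathbb E}_{\tilde p_{t-1}}[\sigma_{t-1}^2]$. Summing over $t$ and applying Lemma~\ref{lem:info_gain} then gives
\begin{equation*}
\sum_t \mathbb{E}_{P_{\mu_{t-1}}}[\sigma_{t-1}^2]\le e^{2|\lambda|(\beta_T^{1/2}+\epsilon_T)}\Big(C_1\gamma_T+\tfrac{2\log(2/\delta_{\mathrm{MC}})}{3}+\sqrt{\tfrac{T\log(2/\delta_{\mathrm{MC}})}{2M}}\Big).
\end{equation*}

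\textbf{Distribution shift.} The final part is copied from the \texttt{AB} proof. Monotonicity of the posterior variance gives $T\,\mathbb E_{P_{\mu_T}}[\sigma_T^2]\le\sum_t\mathbb E_{P_{\mu_T}}[\sigma_{t-1}^2]$. The ratio $P_{\mu_T}/P_{\mu_{t-1}}$ is at most $e^{4|\lambda|(\beta_T^{1/2}+\epsilon_T)}$, as before. Multiplying the two factors gives the exponent $6|\lambda|$, and dividing by $T$ completes the bound. For finite $\mathcal X$ with an exact threshold, the MC terms vanish as before.

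The main obstacle is the sample-deviation step: the sup bound must hold simultaneously for every posterior draw $\tilde f_{t-1}$, $t\le T$, within the stated probability budget. I would first try to apply the confidence bound jointly, treating $(f,\tilde f_{t-1})$ as identically distributed given $\mathcal D_{t-1}$. If that fails, I would accept a union bound over $t$ and fold the resulting $\log T$ into $\beta_T$, which does not affect the rates.
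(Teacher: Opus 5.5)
Your proposal is correct and follows the paper's proof essentially step for step. The density ratio $P_{\mu_{t-1}}/P_{\tilde f_{t-1}}\le e^{2|\lambda|(\beta_T^{1/2}+\epsilon_T)}$ from Lemma~\ref{lem:log_density_ratio} replaces the covariance argument, the Freedman bound runs on the filtration enlarged by $\tilde f_{t-1}$, and the $e^{4|\lambda|(\beta_T^{1/2}+\epsilon_T)}$ shift from the \texttt{AB} proof multiplies in to give $e^{6|\lambda|(\cdot)}$.

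Your caveat is also well placed. The paper simply asserts the deviation bound for all $t\le T$, and for both $f$ and the draws, within a single $\delta_{\mathrm{GP}}$. Because the $\tilde f_{t-1}$ are fresh draws each round, identical conditional distribution alone does not give uniformity in $t$. The union bound you fall back on is therefore the honest way to close this step; it only adds $\mathcal{O}(\log T)$ (plus a constant for splitting $\delta_{\mathrm{GP}}$) to $\beta_T$.
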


\begin{proof}
The proof follows a similar structure to Lemma~\ref{lem:var_bound_ab}, with an additional step to handle the Thompson sampling randomness.

At each round $t$, \texttt{TS-SID}  draws a posterior sample $\tilde{f}_{t-1} \sim f|D_{t-1}$ and defines $\tilde{p}_{t-1} \propto e^{\lambda\tilde{f}_{t-1} + b}$.
By Lemma~\ref{lem:gp_confidence} applied to the posterior sample, with probability at least $1 - \delta_{\mathrm{GP}}$:
\begin{equation*}
|\tilde{f}_{t-1}(x) - \mu_{t-1}(x)| \leq \beta^{1/2}_T \sigma_{t-1}(x) + \epsilon_T \leq \beta^{1/2}_T + \epsilon_T
\end{equation*}
for all $x \in \mathcal{X}$ and $t \leq T$. Applying Lemma~\ref{lem:log_density_ratio}:
\begin{equation}\label{eq:ts_density_ratio}
\frac{P_{\mu_{t-1}}(x)}{P_{\tilde{f}_{t-1}}(x)} \leq e^{2|\lambda|(\beta^{1/2}_T + \epsilon_T)}.
\end{equation}
This implies:
\begin{equation*}
\mathbb{E}_{P_{\tilde{f}_{t-1}}}[\sigma_{t-1}^2] \geq e^{-2|\lambda|(\beta^{1/2}_T + \epsilon_T)} \cdot \mathbb{E}_{P_{\mu_{t-1}}}[\sigma_{t-1}^2].
\end{equation*}

The algorithm estimates $\mathbb{E}_{P_{\tilde{f}_{t-1}}}[\sigma_{t-1}^2]$ via MC samples $x^{(i)} \stackrel{\text{i.i.d.}}{\sim} P_{\tilde{f}_{t-1}}$. Conditioned on $\tilde{\mathcal{F}}_{t-1}$ (which includes both $D_{t-1}$ and $\tilde{f}_{t-1}$), the distribution $P_{\tilde{f}_{t-1}}$ is fixed, so the estimation errors

\begin{equation*}
\xi_t := \hat{\mathbb{E}}_{P_{\tilde{f}_{t-1}}}[\sigma_{t-1}^2] - \mathbb{E}_{P_{\tilde{f}_{t-1}}}[\sigma_{t-1}^2]
\end{equation*}

form a martingale difference sequence with respect to $\{\tilde{\mathcal{F}}_t\}$. Following the same argument as in  Lemma~\ref{lem:var_bound_ab} Part 2, with probability at least $1 - \delta_{MC}$: 

\begin{equation*}
|S_T| \leq \frac{2\log(2/\delta_{MC})}{3} + \sqrt{\frac{T\log(2/\delta_{MC})}{2M}}.
\end{equation*}

The same argument as Lemma~\ref{lem:var_bound_ab} Part 3 applies with $P_{\tilde{f}_{t-1}}$ in place of $\tilde{p}_{t-1}$, yielding:
\begin{equation*}
\sum_{t=1}^T \mathbb{E}_{P_{\tilde{f}_{t-1}}}[\sigma_{t-1}^2] \leq C_1\gamma_T + |S_T|.
\end{equation*}
Unlike \texttt{AB-SID}, here we need convert from $P_{\tilde{f}_{t-1}}$ to $P_{\mu_{t-1}}$. Using the density ratio defined in Eq.~\ref{eq:ts_density_ratio}:
\begin{equation*}
\mathbb{E}_{P_{\mu_{t-1}}}[\sigma_{t-1}^2] = \mathbb{E}_{P_{\tilde{f}_{t-1}}}\left[\sigma_{t-1}^2 \cdot \frac{P_{\mu_{t-1}}}{P_{\tilde{f}_{t-1}}}\right] \leq e^{2|\lambda|(\beta^{1/2}_T + \epsilon_T)} \cdot \mathbb{E}_{P_{\tilde{f}_{t-1}}}[\sigma_{t-1}^2].
\end{equation*}
Summing over $t$:
\begin{equation*}\label{eq:ts_cumsum_bound}
\sum_{t=1}^T \mathbb{E}_{P_{\mu_{t-1}}}[\sigma_{t-1}^2] \leq e^{2|\lambda|(\beta^{1/2}_T + \epsilon_T)} \cdot \left(C_1\gamma_T + |S_T|\right).
\end{equation*}
This additional factor of $e^{2|\lambda|(\beta^{1/2}_T + \epsilon_T)}$ arises from the Thompson sampling approximation and is absent in \texttt{AB-SID}.

The same argument as Lemma~\ref{lem:var_bound_ab} Part 4 gives:
\begin{equation*}
\mathbb{E}_{P_{\mu_T}}[\sigma_{t-1}^2] \leq e^{4|\lambda|(\beta^{1/2}_T + \epsilon_T)} \cdot \mathbb{E}_{P_{\mu_{t-1}}}[\sigma_{t-1}^2].
\end{equation*}
Summing over $t$ and applying \eqref{eq:ts_cumsum_bound}:
\begin{align*}
T \cdot \mathbb{E}_{P_{\mu_T}}[\sigma_T^2] &\leq e^{4|\lambda|(\beta^{1/2}_T + \epsilon_T)} \cdot \sum_{t=1}^T \mathbb{E}_{P_{\mu_{t-1}}}[\sigma_{t-1}^2] \\
&\leq e^{4|\lambda|(\beta^{1/2}_T + \epsilon_T)} \cdot e^{2|\lambda|(\beta^{1/2}_T + \epsilon_T)} \cdot \left(C_1\gamma_T + |S_T|\right) \\
&= e^{6|\lambda|(\beta^{1/2}_T + \epsilon_T)} \cdot \left(C_1\gamma_T + |S_T|\right).
\end{align*}
Dividing by $T$ and substituting the bound from Part 2:
\begin{equation*}
\mathbb{E}_{P_{\mu_T}}[\sigma_T^2] \leq e^{6|\lambda|(\beta^{1/2}_T + \epsilon_T)} \cdot \left(\frac{C_1\gamma_T}{T} + \frac{2\log(2/\delta_{MC})}{3T} + \sqrt{\frac{\log(2/\delta_{MC})}{2MT}}\right).
\end{equation*}

For finite $\mathcal{X}$ with exact threshold computation, $\hat{\mathbb{E}}_{\tilde{p}_{t-1}}[\sigma_{t-1}^2] = \mathbb{E}_{P_{\tilde{f}_{t-1}}}[\sigma_{t-1}^2]$ for all $t$, so $\epsilon_{\mathrm{MC}} = \delta_{\mathrm{MC}} = 0$.
\end{proof}

We are now able to prove Theorem~\ref{thm:hp_terminal}.
\begin{proof}
Set $\delta_{\mathrm{GP}} = \delta_{\mathrm{MC}} = \delta/2$. By union bound, the following holds with probability at least $1 - \delta$.

By Lemma~\ref{lem:hp_mse_bound} with failure probability $\delta/2$:
\begin{equation*}
\mathbb{E}_{P_f}[(f - \mu_T)^2] \leq e^{2|\lambda|(\beta^{1/2}_T\sigma_{\max,T} + \epsilon_T)} \cdot \left(\beta_T \cdot \mathbb{E}_{P_{\mu_T}}[\sigma_T^2] + 2\beta^{1/2}_T\epsilon_T + \epsilon_T^2\right).
\end{equation*}
Using $\sigma_{\max,T} \leq 1$ (kernel normalization):
\begin{equation*}\label{eq:mse_to_var}
\mathbb{E}_{P_f}[(f - \mu_T)^2] \leq e^{2|\lambda|(\beta^{1/2}_T + \epsilon_T)} \cdot \left(\beta_T \cdot \mathbb{E}_{P_{\mu_T}}[\sigma_T^2] + 2\beta^{1/2}_T\epsilon_T + \epsilon_T^2\right).
\end{equation*}

By Lemma~\ref{lem:var_bound_ab} (for \texttt{AB-SID}) or Lemma~\ref{lem:var_bound_ts} (for TS-SID) with failure probability $\delta/2$:
\begin{equation*}
\mathbb{E}_{P_{\mu_T}}[\sigma_T^2] \leq e^{2(1+\kappa)|\lambda|(\beta^{1/2}_T + \epsilon_T)} \cdot \left(\frac{C_1\gamma_T}{T} + \frac{2\log(4/\delta)}{3T} + \sqrt{\frac{\log(4/\delta)}{2MT}}\right),
\end{equation*}
where $\kappa = 1$ for \texttt{AB-SID}(prefactor $e^{4|\lambda|(\cdot)}$) and $\kappa = 2$ for \texttt{TS-SID}  (prefactor $e^{6|\lambda|(\cdot)}$). Denoting the \textit{cumulative MC error} $\epsilon_{\mathrm{MC}} := \frac{2\log(4/\delta)}{3T} + \sqrt{\frac{\log(4/\delta)}{2MT}}$ and substituting into \eqref{eq:mse_to_var}:
\begin{align*}
\mathbb{E}_{P_f}[(f - \mu_T)^2] &\leq e^{2|\lambda|(\beta^{1/2}_T + \epsilon_T)} \cdot \left(\beta_T \cdot e^{2(1+\kappa)|\lambda|(\beta^{1/2}_T + \epsilon_T)} \cdot \left(\frac{C_1\gamma_T}{T} + \epsilon_{\mathrm{MC}}\right) + 2\beta^{1/2}_T\epsilon_T + \epsilon_T^2\right).
\end{align*}

Using $e^{2|\lambda|(\cdot)} \cdot e^{2(1+\kappa)|\lambda|(\cdot)} = e^{2(\kappa+2)|\lambda|(\cdot)}$:
\begin{align*}
\mathbb{E}_{P_f}[(f - \mu_T)^2] &\leq e^{2(\kappa+2)|\lambda|(\beta^{1/2}_T + \epsilon_T)} \cdot \left(\frac{C_1\beta_T\gamma_T}{T} + \beta_T\epsilon_{\mathrm{MC}}\right) + e^{2|\lambda|(\beta^{1/2}_T + \epsilon_T)} \cdot \left(2\beta^{1/2}_T\epsilon_T + \epsilon_T^2\right).
\end{align*}

Since $e^{2|\lambda|(\cdot)} \leq e^{2(\kappa+2)|\lambda|(\cdot)}$ for $\kappa \geq 0$:
\begin{equation*}
\mathbb{E}_{P_f}[(f - \mu_T)^2] \leq e^{2(\kappa+2)|\lambda|(\beta^{1/2}_T + \epsilon_T)} \cdot \left(\frac{C_1\beta_T\gamma_T}{T} + \beta_T\epsilon_{\mathrm{MC}} + 2\beta^{1/2}_T\epsilon_T + \epsilon_T^2\right).
\end{equation*}
\end{proof}

This enables the following convergence rate analysis:

\begin{proof}
Since $\beta_T = \mathcal{O}(\log T)$, we have $\sqrt{\beta_T} = \mathcal{O}(\sqrt{\log T})$. For any $\epsilon > 0$: $e^{c\sqrt{\log T}} = o(T^\epsilon)$
since $\sqrt{\log T} = o(\log T)$. Thus the prefactor satisfies $e^{\mathcal{O}(\sqrt{\beta_T})} = o(T^\epsilon)$.

For the MC error:
\begin{itemize}
    \item If $M$ is constant: $\beta_T \epsilon_{\mathrm{MC}} = \mathcal{O}\left(\frac{\log T}{\sqrt{T}}\right) = o(T^{-1/2 + \epsilon})$.
    \item If $M = \Omega(T)$: $\beta_T \epsilon_{\mathrm{MC}} = \mathcal{O}\left(\frac{\log T}{T}\right) = o(T^{-1 + \epsilon})$.
\end{itemize}

Combining the prefactor with the dominant term, for any $\epsilon > 0$:
\begin{itemize}
    \item If $M$ is constant: $o(T^\epsilon) \cdot o(T^{-1/2 + \epsilon}) = \mathcal{O}(T^{-1/2 + 2\epsilon})$.
    \item If $M = \Omega(T)$: $o(T^\epsilon) \cdot o(T^{-1 + \epsilon}) = \mathcal{O}(T^{-1 + 2\epsilon})$.
\end{itemize}
Since $\epsilon > 0$ is arbitrary, the result follows.
\end{proof}

\subsection{Theorem~\ref{thm:avg_mse}} \label{App: Bayes_regret}

\begin{proof}[Proof of Theorem~\ref{thm:avg_mse}]
By Lemma~\ref{lem:bayesian_mse}:
\begin{equation*}
\mathbb{E}_{f|\mathcal{D}_{T}}\left[\mathbb{E}_{P_f}[(f-\mu_T)^2]\right] \leq e^{2\lambda^2}(1+4\lambda^2) \cdot \mathbb{E}_{P_{\mu_T}}[\sigma_T^2].
\end{equation*}
Taking expectation over $\mathcal{D}_{T}$ (which traces back to $f \sim \mathcal{GP}(0,k)$ by the tower property):
\begin{equation*}
\mathbb{E}_{f}\left[\mathbb{E}_{P_f}[(f-\mu_T)^2]\right] \leq e^{2\lambda^2}(1+4\lambda^2) \cdot \mathbb{E}_{P_{\mu_T}}[\sigma_T^2].
\end{equation*}

By Lemma~\ref{lem:var_bound_ab} (for \texttt{AB-SID}) or Lemma~\ref{lem:var_bound_ts} (for TS-SID), with probability at least $1 - \delta$:
\begin{equation*}
\mathbb{E}_{P_{\mu_T}}[\sigma_T^2] \leq e^{2(1+\kappa)|\lambda|(\beta_T^{1/2}+\epsilon_T)} \cdot \left(\frac{C_1\gamma_T}{T} + \epsilon_{\mathrm{MC}}\right),
\end{equation*}
where $\kappa = 1$ for \texttt{AB-SID}and $\kappa = 2$ for TS-SID. Substituting completes the proof.
\end{proof}

\subsubsection{Rate Analysis for Average MSE Bound}\label{app:avg_rate}

The average bound (Theorem~\ref{thm:avg_mse}) achieves the same asymptotic rates as the high-probability bound (Corollary~\ref{cor:rate}).

The prefactor $e^{2\lambda^2}(1+4\lambda^2)$ is constant in $T$. For the remaining prefactor, since $\beta_T = \mathcal{O}(\log T)$, we have $e^{2(1+\kappa)|\lambda|(\beta_T^{1/2}+\epsilon_T)} = e^{\mathcal{O}(\sqrt{\log T})} = o(T^\epsilon)$ for any $\epsilon > 0$.

For the MC error:
\begin{itemize}
    \item If $M$ is constant: $\epsilon_{\mathrm{MC}} = \mathcal{O}(1/\sqrt{T})$.
    \item If $M = \Omega(T)$: $\epsilon_{\mathrm{MC}} = \mathcal{O}(1/T)$.
\end{itemize}

Combining the prefactor with the dominant term yields $\mathcal{O}(T^{-1/2 + \epsilon})$ for constant $M$ and $\mathcal{O}(T^{-1 + \epsilon})$ for $M = \Omega(T)$.

Compared to the high-probability bound, the average bound is tighter by logarithmic factors: The main terms scale as $\gamma_T/T$ instead of $\beta_T\gamma_T/T$, the prefactor exponent is $2(1+\kappa)|\lambda|$ instead of $2(\kappa+2)|\lambda|$.


\subsection{Proposition~\ref{prop:fixed-threshold}}

Analogous to our previous theoretical analysis, we first establish an upper bound for fixed-threshold methods, then derive the convergence rate. The key observation is that any algorithm querying points with $\sigma^2_{t-1}(x_t) \geq \eta$ must exit this regime within $\mathcal{O}(\gamma_T/\eta)$ iterations by the information gain bound, after which the posterior variance remains uniformly below $\eta$ by monotonicity.

\begin{lemma}[High-Probability MSE Upper Bound for Fixed-Threshold Methods]
\label{lem:fixed-threshold-bound}
Consider any active learning algorithm that selects $x_t$ satisfying 
$\sigma^2_{t-1}(x_t) \geq \eta$ when $\bar{\mathcal{X}}'_t := \{x : \sigma^2_{t-1}(x) \geq \eta\} \neq \phi$, and selects $x_t = \arg\max_{x \in \mathcal{X}} \sigma_{t-1}(x)$ otherwise, where $\eta \in (0,1]$ is a fixed threshold. Under Assumptions~\ref{ass:kernel_regularity} and \ref{ass:target_dist}, let $\mathcal{X} \subset [0,r]^d$ be compact. Then for any $\delta \in (0,1)$, with probability at least $1 - \delta$:
\begin{equation*}
L(\mu_T) \leq e^{2|\lambda|(\beta_T^{1/2}\sqrt{\eta} + \epsilon_T)} \cdot 
\left(\beta_T \eta + 2\beta_T^{1/2}\epsilon_T + \epsilon_T^2\right)
\end{equation*}
provided $\eta > C_1\gamma_T / T$, where $\beta_T, \epsilon_T$ are as in 
Lemma~\ref{lem:gp_confidence} with failure probability $\delta$, and 
$C_1 = 2/\log(1 + \tau^{-2})$. For finite $\mathcal{X}$, $\epsilon_T = 0$.
\end{lemma}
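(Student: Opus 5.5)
The plan is to show that after $T$ rounds the posterior variance is uniformly below $\eta$, and then feed this into the pointwise argument behind Lemma~\ref{lem:hp_mse_bound}, with $\sigma_{\max,T}$ replaced by $\sqrt{\eta}$.

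\textbf{Step 1 (leaving the high-variance regime).} Let $S := \{t \le T : \bar{\mathcal{X}}'_t \neq \phi\}$. For every $t \in S$ the selection rule gives $\sigma^2_{t-1}(x_t) \ge \eta$. By Lemma~\ref{lem:info_gain},
\begin{equation*}
|S|\,\eta \le \sum_{t \in S}\sigma^2_{t-1}(x_t) \le \sum_{t=1}^T \sigma^2_{t-1}(x_t) \le C_1\gamma_T .
\end{equation*}
Hence $|S| \le C_1\gamma_T/\eta < T$ under the hypothesis $\eta > C_1\gamma_T/T$, so there is a round $t_0 \le T$ with $\bar{\mathcal{X}}'_{t_0} = \phi$, i.e.\ $\sigma^2_{t_0-1}(x) < \eta$ for all $x$. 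Posterior variance is monotone non-increasing in the data, so $\sigma_T^2(x) \le \sigma^2_{t_0-1}(x) < \eta$ for all $x \in \mathcal{X}$. This gives $\sigma_{\max,T} \le \sqrt{\eta}$ and $\mathbb{E}_{P_{\mu_T}}[\sigma_T^2] \le \eta$.

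This step needs the fallback rule only to guarantee that a query is still made. Monotonicity holds regardless of which point is queried, and variance depends only on the inputs, not on $f$, so Step 1 is deterministic and consumes no failure probability.

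\textbf{Step 2 (density ratio and MSE).} On the event of Lemma~\ref{lem:gp_confidence} with failure probability $\delta$ (at time $T$), we have $|f(x)-\mu_T(x)| \le \beta_T^{1/2}\sigma_T(x)+\epsilon_T$. Combining this with Step 1 gives
\begin{equation*}
\Delta_{\max} \le \beta_T^{1/2}\sqrt{\eta}+\epsilon_T .
\end{equation*}
I then follow the proof of Lemma~\ref{lem:hp_mse_bound} verbatim, using Lemma~\ref{lem:log_density_ratio}:
\begin{equation*}
\frac{P_f}{P_{\mu_T}} \le e^{2|\lambda|(\beta_T^{1/2}\sqrt{\eta}+\epsilon_T)},
\end{equation*}
and expand $\mathbb{E}_{P_{\mu_T}}[(\beta_T^{1/2}\sigma_T+\epsilon_T)^2]$, bounding $\sigma_T^2 \le \eta$ and $\sigma_T \le 1$. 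This yields exactly the stated bound. For finite $\mathcal{X}$, $\epsilon_T = 0$.

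\textbf{Step 3 (rate, for Proposition~\ref{prop:fixed-threshold}).} This is the main obstacle. With $\eta$ fixed, the bound of Lemma~\ref{lem:fixed-threshold-bound} does not itself decay in $T$, so the $\mathcal{O}(T^{-1+\epsilon})$ claim needs more than plugging in. My plan is to apply the argument of Step 1 at every level $\eta' \in [C_1\gamma_T/T,\,\eta]$.

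Once the algorithm is in the fallback regime it is pure uncertainty sampling, and uncertainty sampling satisfies
\begin{equation*}
T\max_x\sigma_T^2(x) \le \sum_t \sigma^2_{t-1}(x_t) \le C_1\gamma_T
\end{equation*}
on the rounds it controls. Combining the at most $C_1\gamma_T/\eta$ threshold rounds with the remaining $T - C_1\gamma_T/\eta = \Omega(T)$ max-variance rounds, I aim to get $\max_x\sigma_T^2 \le C_1\gamma_T/(T - C_1\gamma_T/\eta) = \mathcal{O}(\gamma_T/T)$.

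I then re-run Step 2 with $\eta$ replaced by this quantity. The prefactor is $e^{\mathcal{O}(\sqrt{\log T})} = o(T^\epsilon)$, $\beta_T = \mathcal{O}(\log T)$, and $\epsilon_T = L/T$, so the bound is $\mathcal{O}(T^{-1+\epsilon})$ in the same way as in Corollary~\ref{cor:rate}.
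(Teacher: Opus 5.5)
Your proof is correct and follows the paper's argument essentially step for step. The paper likewise bounds the number of high-variance rounds by $C_1\gamma_T/\eta < T$ via Lemma~\ref{lem:info_gain}, uses monotonicity of the posterior variance to get $\sigma_{\max,T}<\sqrt{\eta}$ and $\mathbb{E}_{P_{\mu_T}}[\sigma_T^2]<\eta$, and substitutes these into Lemma~\ref{lem:hp_mse_bound}. Your Step~3 is not needed for this lemma, but it matches how the paper proves Proposition~\ref{prop:fixed-threshold}, namely through $\sigma^2_{\max,T}\le C_1\gamma_T/(T-C_1\gamma_T/\eta)$.
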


\begin{proof}
 According to the algorithm, we define the AL iteration as the following two phases:
\begin{equation*}
T^{(1)} := \{t \leq T : \max_{x \in \mathcal{X}} \sigma^2_{t-1}(x) \geq \eta\}, \quad T^{(2)} := \{t \leq T : \max_{x \in \mathcal{X}} \sigma^2_{t-1}(x) < \eta\}.
\end{equation*}

In Phase 1, by the algorithm's selection rule, any query point satisfies $\sigma^2_{t-1}(x_t) \geq \eta$. By Lemma~\ref{lem:info_gain}:
\begin{equation*}
|T^{(1)}| \cdot \eta \leq \sum_{t \in T^{(1)}} \sigma^2_{t-1}(x_t) \leq \sum_{t=1}^{T} \sigma^2_{t-1}(x_t) \leq C_1\gamma_T.
\end{equation*}
Thus $|T^{(1)}| \leq C_1\gamma_T / \eta < T$ under the stated condition on $\eta$, ensuring Phase 2 is non-empty.

Once Phase 2 begins, $\max_{x \in \mathcal{X}} \sigma^2_{t-1}(x) < \eta$ by definition. Since posterior variance is monotonically non-increasing with additional observations, $\sigma_{\max,T} < \sqrt{\eta}$ and $\mathbb{E}_{P_{\mu_T}}[\sigma^2_T(x)] < \eta$.

 By Lemma~\ref{lem:gp_confidence}, with probability at least $1 - \delta$, the confidence bound holds uniformly. Applying Lemma~\ref{lem:hp_mse_bound}:
\begin{equation*}
L(\mu_T) \leq e^{2|\lambda|(\beta_T^{1/2}\sigma_{\max,T} + \epsilon_T)} \cdot \left(\beta_T \cdot \mathbb{E}_{P_{\mu_T}}[\sigma^2_T(x)] + 2\beta_T^{1/2}\epsilon_T + \epsilon_T^2\right).
\end{equation*}
Substituting $\sigma_{\max,T} < \sqrt{\eta}$ and $\mathbb{E}_{P_{\mu_T}}[\sigma^2_T] < \eta$ yields the result.
\end{proof}

Now we start to prove proposition~\ref{prop:fixed-threshold}.

\begin{proof}
By Lemma~\ref{lem:fixed-threshold-bound}, Phase 1 contains at most $|T^{(1)}| \leq C_1\gamma_T/\eta$ iterations. In Phase 2, the algorithm selects $x_t = \arg\max_{x \in \mathcal{X}} \sigma_{t-1}(x)$, so $\sigma^2_{\max,t}$ is non-increasing for $t \in T^{(2)}$. Applying the information gain bound to Phase 2:
\begin{equation*}
|T^{(2)}| \cdot \sigma^2_{\max,T} \leq \sum_{t \in T^{(2)}} \sigma^2_{t-1}(x_t) \leq C_1\gamma_T.
\end{equation*}
Since $|T^{(2)}| = T - |T^{(1)}| \geq T - C_1\gamma_T/\eta$, we obtain:
\begin{equation*}
\sigma^2_{\max,T} \leq \frac{C_1\gamma_T}{T - C_1\gamma_T/\eta}.
\end{equation*}
For any fixed $\eta \in (0,1]$, since $\gamma_T = o(T)$ for standard kernels, we have $T - C_1\gamma_T/\eta \geq T/2$ for sufficiently large $T$, yielding $\sigma^2_{\max,T} \leq 2C_1\gamma_T/T$.

Applying Lemma~\ref{lem:hp_mse_bound} with $\sigma_{\max,T} \leq \sqrt{2C_1\gamma_T/T}$ and $\mathbb{E}_{P_{\mu_T}}[\sigma^2_T] \leq \sigma^2_{\max,T} \leq 2C_1\gamma_T/T$:
\begin{equation*}
L(\mu_T) \leq e^{2|\lambda|(\beta_T^{1/2}\sigma_{\max,T} + \epsilon_T)} \cdot \left(\beta_T \cdot \frac{2C_1\gamma_T}{T} + 2\beta_T^{1/2}\epsilon_T + \epsilon_T^2\right).
\end{equation*}
For the prefactor, since $\beta_T = \mathcal{O}(\log T)$, $\gamma_T = \mathcal{O}((\log T)^c)$ for kernel-dependent constant $c$, and $\epsilon_T = \mathcal{O}(1/T)$:
\begin{equation*}
\beta_T^{1/2}\sigma_{\max,T} = \mathcal{O}\left(\sqrt{\log T} \cdot \sqrt{\frac{(\log T)^c}{T}}\right) = \mathcal{O}\left(\frac{(\log T)^{(c+1)/2}}{\sqrt{T}}\right) = o(1).
\end{equation*}
Thus $e^{2|\lambda|(\beta_T^{1/2}\sigma_{\max,T} + \epsilon_T)} = 1 + o(1)$. The dominant term is $\beta_T \cdot 2C_1\gamma_T/T = \mathcal{O}((\log T)^{c+1}/T)$, while the remaining terms satisfy $2\beta_T^{1/2}\epsilon_T + \epsilon_T^2 = \mathcal{O}(\sqrt{\log T}/T) = o((\log T)^{c+1}/T)$. Therefore $L(\mu_T) = \mathcal{O}(T^{-1+\epsilon})$ for arbitrarily small $\epsilon > 0$.
\end{proof}

\section{Experimental Details and Additional Experimental Investigations} \label{app: experimental_investigation}
\subsection{Experimental Setup Details}
\label{App: setup}

For all methods except the molecular drug discovery example, we model the objective 
function using a GP with a Matérn-5/2 kernel, estimate hyperparameters via maximum 
a posteriori (MAP) with the default priors from~\citet{hvarfner2024vanilla}, and fix 
observation noise at $\tau^2 = 10^{-4}$, except for the GP prior problem where we 
assume the true kernel hyperparameters are known. We pre-normalize inputs to $[0,1]^d$ 
and standardize outputs online. For acquisition function optimization over continuous 
domains, we use SLSQP~\citep{kraft1988software} to enforce the variance constraint 
defining $\bar{\mathcal{X}}_t$, and L-BFGS-B~\citep{byrd1995limited} otherwise, both 
via \texttt{BoTorch}~\citep{balandat2020botorch}.

For continuous $\mathcal{X}$, we approximate the acquisition function integral via 
Sequential Monte Carlo (SMC), which has demonstrated strong performance for sampling 
Boltzmann-form distributions~\citep{akhound2024iterated, akhound2025progressive}. 
Specifically, we use $N = 1000d$ particles, 10 temperature levels, and 10 random walk 
Metropolis steps per level. For fair comparison, the same number of reference points 
is used for all acquisition functions requiring integration over the input space. 
A sensitivity study to particle number in terms of performance is also reported in Appendix~\ref{app: sensitivity_analysis_of_mcmc}.

\subsection{Generic Adapatation of Heuristics Active Learning Algorithms } \label{app:hal}
Several heuristic active learning approaches have been proposed in the computational materials science and potential energy modeling literature for efficient assembly of data for the training of machine-learned interatomic potentials (MLIP); See, e.g., \citep{li2015molecular,podryabinkin2017active,vandermause2020fly,jinnouchi2020fly,van2023hyperactive,kulichenko2023uncertainty,duschatko2024uncertainty}. These MLIPs are required to be accurate for configurations that are probable under the specific statistical ensembles targeted in the molecular dynamics simulation in which these MLIPs are intended to be used, but accuracy of the MLIP on configurations that are very improbable within this ensemble is of less importance. As such, the problem of fitting an MLIP resembles the here presented SIDAL problem and heuristic active learning algorithms designed for these problems are natural contenders to be considered in benchmark problems. Using the notation introduced in the main body of the paper, the generic form of a query of these algorithms consists of the following sampling procedure:
\begin{enumerate}
\item Run a single-chain Markov Chain Monte Carlo algorithm (e.g., a stochastic thermostat or constant-energy molecular dynamics) targeting a surrogate density $\tilde{p}_{t-1}$ of the generic form 
\[
\tilde{p}_{t-1}(x) \propto \exp\bigl(\lambda\mu_{t-1}(x) + \lambda h(\sigma^2_{t-1}(x)) + b(x)\bigr),
\]
where $h(\sigma^2)$ is a smooth transformation of $\sigma^2$.
\item Once the Markov chain reaches a configuration where a method-dependent uncertainty measure $\UC(x)$ exceeds a preset threshold, $\eta>0$, the Markov chain terminates and the current sample is queried.
\end{enumerate}
Algorithms proposed in the literature differ in how these two steps are implemented, specifically in terms of the choice of the surrogate density $\tilde{p}_{t-1}(x)$, and the uncertainty measure $\UC$ (see Table \ref{tab:al_methods}), are often motivated by problem-specific properties and constraints and thus are not necessarily suited for the generic SIDAL setting considered here. 

We therefore adapt the existing heuristics in the following Generic Heuristic Active Learning (GHAL) approach to fit the here considered generic SIDAL setting:

  \begin{enumerate}
      \item Run single-chain MCMC targeting $\tilde{p}_{t-1}(x)\propto \exp(\lambda(\mu_{t-1}(x) - \tau\sigma_{t-1}(x)))$ initialized at a point in input space close to the current training set. (This is to ensure that the uncertainty threshold is not exceeded at initialization.) 
      \item If $\sigma^2_{t-1}(x) > \eta$, select this sample $x$. If the threshold is not exceeded after the maximum number of steps of the Markoc chain, select the sample with highest predictive variance $\sigma^2_{t-1}(x)$.
      \item Query $y_t = f(x_t) + \epsilon_t$ and update the GP posterior
  \end{enumerate}
Note that for $\tau=0$, this methods closely resembles the query strategy in the active learning loop of FLARE \cite{vandermause2020fly}, whereas for $\tau>0$, this approach resembles the Hyperactive Learning HAL method \cite{van2023hyperactive} up to the form of the uncertainty measure $\UC$ used in the thresholding step. 
Here, both $\tau>0$ and the threshold $\eta>0$ are hyper-parameters. To the best of our knowledge, there are no systematic apriori tuning strategies described in the literature. Therefore, we resort to randomly sample the values of these hyperparameters uniformly from a range of plausible values at the beginning of each active learning loop, i.e.,  $\tau \sim \mathcal{U}( [0, 3])$ and $\eta \sim  \mathcal{U}( [1, 5])$.

  We mention that both HAL and the here proposed methods modify the sampling distribution, but with fundamentally different mechanisms and goals. HAL uses a \emph{single-chain reactive} approach: sampling from a heuristically biased distribution and triggering queries when local uncertainty exceeds a threshold. In contrast, our method uses \emph{multi-particle proactive} optimization: maintaining $N$ particles via SMC on the \emph{true} target distribution $P_f \propto \exp(\lambda f)$, then selecting points that maximize variance reduction under this distribution. The linear $+\tau\sigma$ term in HAL is a heuristic exploration bonus, whereas our $+\frac{\lambda^2}{2}\sigma^2$ term arises from the Gaussian MGF identity $\mathbb{E}[e^{\lambda f}] = e^{\lambda\mu + \frac{\lambda^2}{2}\sigma^2}$, naturally weighting uncertainty by its relevance to the rare-event distribution.

\begin{table}[htbp]
\centering
\caption{Summary of heuristic active learning methods for interatomic potentials. Here, $1/\beta = -\lambda >0$, corresponds, up to a constant pre-factor, to the temperature of the simulated system. $f$ is a vector-valued quantity measuring the uncertainty in force predictions of the MLIP.  }
\label{tab:al_methods}
\begin{tabular}{@{}lcc@{}}
\toprule
\textbf{Method} & \textbf{Sampling Distribution} & \textbf{Uncertainty measure} \\
\midrule
FLARE \citep{vandermause2020fly} & $\exp(-\beta \mu)$ & $\UC=\sigma$\\[4pt]
HAL \citep{van2023hyperactive} & $\exp(-\beta(\mu - \tau\sigma))$ & $\UC = \max {\rm softmax}(f)$ \\[4pt]
UDD-AL \citep{kulichenko2023uncertainty} & $\exp(-\beta(\mu + E_{\text{bias}}(\sigma)))$ & $\UC = \sigma$ \\[4pt]
\bottomrule
\end{tabular}
\end{table}

\subsection{Discrete Experimental Results}
\label{App: discrete_exp}

We evaluate the same benchmark functions on discretized domains, where $\mathcal{X}$ consists of a finite grid of candidate points. This setting allows exact computation of the constraint threshold $\mathbb{E}_{\tilde{p}_{t-1}}[\sigma_{t-1}^2]$ without Monte Carlo approximation, corresponding to the $\epsilon_{\mathrm{MC}} = 0$ case in our theoretical analysis. 

Results are shown in Figure~\ref{fig:synthetic_exp_discreat}. The overall trends mirror the continuous setting: \texttt{AB-SID-iVAR}-$\bar{\mathcal{X}}$ consistently achieves the lowest weighted MSE across benchmarks, with the gap widening in higher dimensions. The relative ordering of methods remains largely unchanged from the continuous experiments (Figure~\ref{fig:synthetic_exp}), confirming that our findings are not artifacts of MCMC approximation quality. 

Notably, the MC variant \texttt{TS-SID-iVAR} remains competitive in general but exhibits high variance on multimodal targets such as Branin. This occurs because occasionally, when the Thompson sample concentrates on a subset of modes, the constraint threshold $\mathbb{E}_{\tilde{p}_{t-1}}[\sigma_{t-1}^2]$ becomes small, resulting in a vast feasible region $\bar{\mathcal{X}}_t$ that fails to encourage sufficient exploration. This empirical finding does not contradict our theoretical guarantees (Theorem~\ref{thm:hp_terminal}), which are asymptotic in high probability as $T \to \infty$. In discrete domains, this can be empirically mitigated by adopting a more conservative threshold such as the domain-averaged variance;

\begin{figure*}[h!]
    \centering
\includegraphics[width=1.0\linewidth]{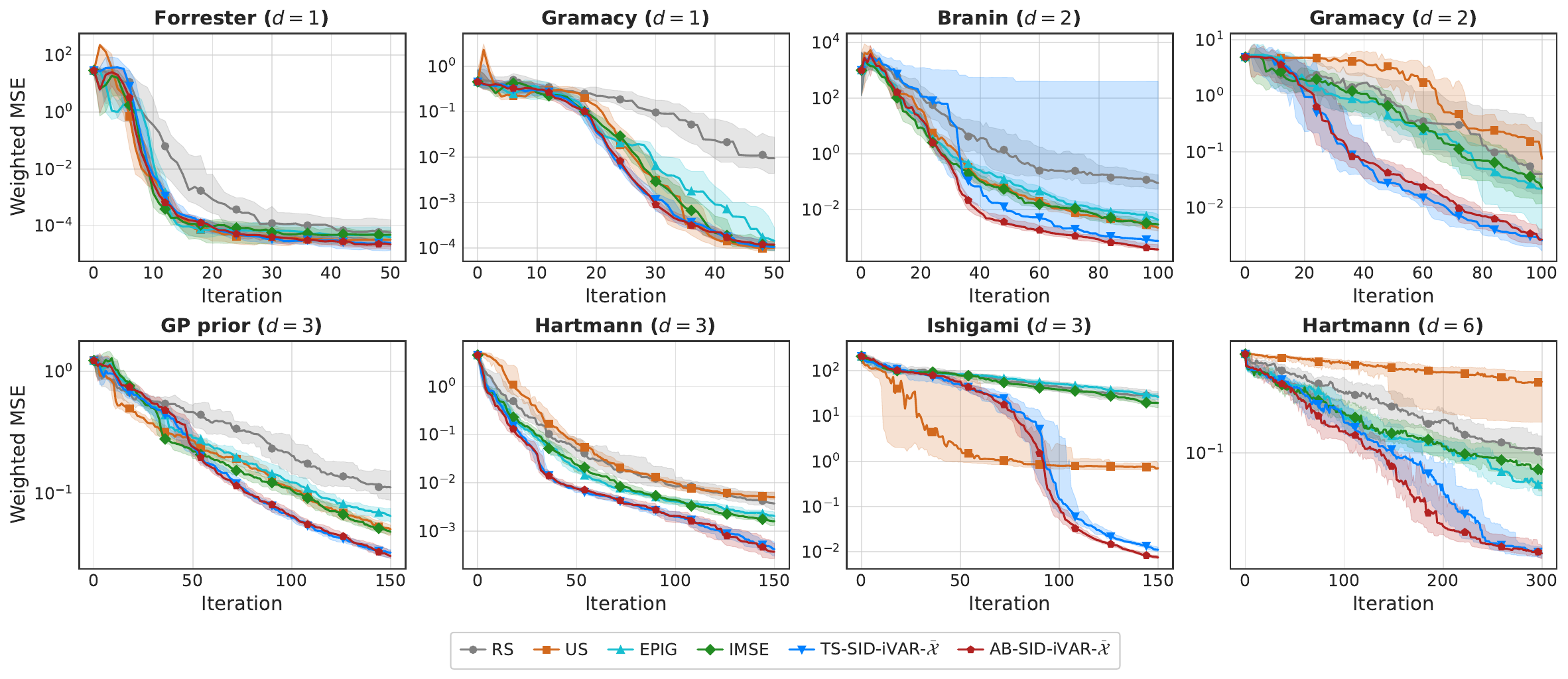}
    \caption{Weighted MSE convergence on synthetic benchmarks with discrete input domains across dimensions $d = 1$ to $d = 6$. Each panel shows median performance over 30 random seeds (shaded regions: interquartile range). Results mirror the continuous setting (Figure~\ref{fig:synthetic_exp}), with \texttt{AB-SID-iVAR}-$\bar{\mathcal{X}}$ consistently achieving the lowest weighted MSE.}
    \label{fig:synthetic_exp_discreat}
\end{figure*}

\subsection{Ablation Study}
\label{App: Ablation_Study}
\begin{figure}[h!]
    \centering
    \includegraphics[width=1.0\linewidth]{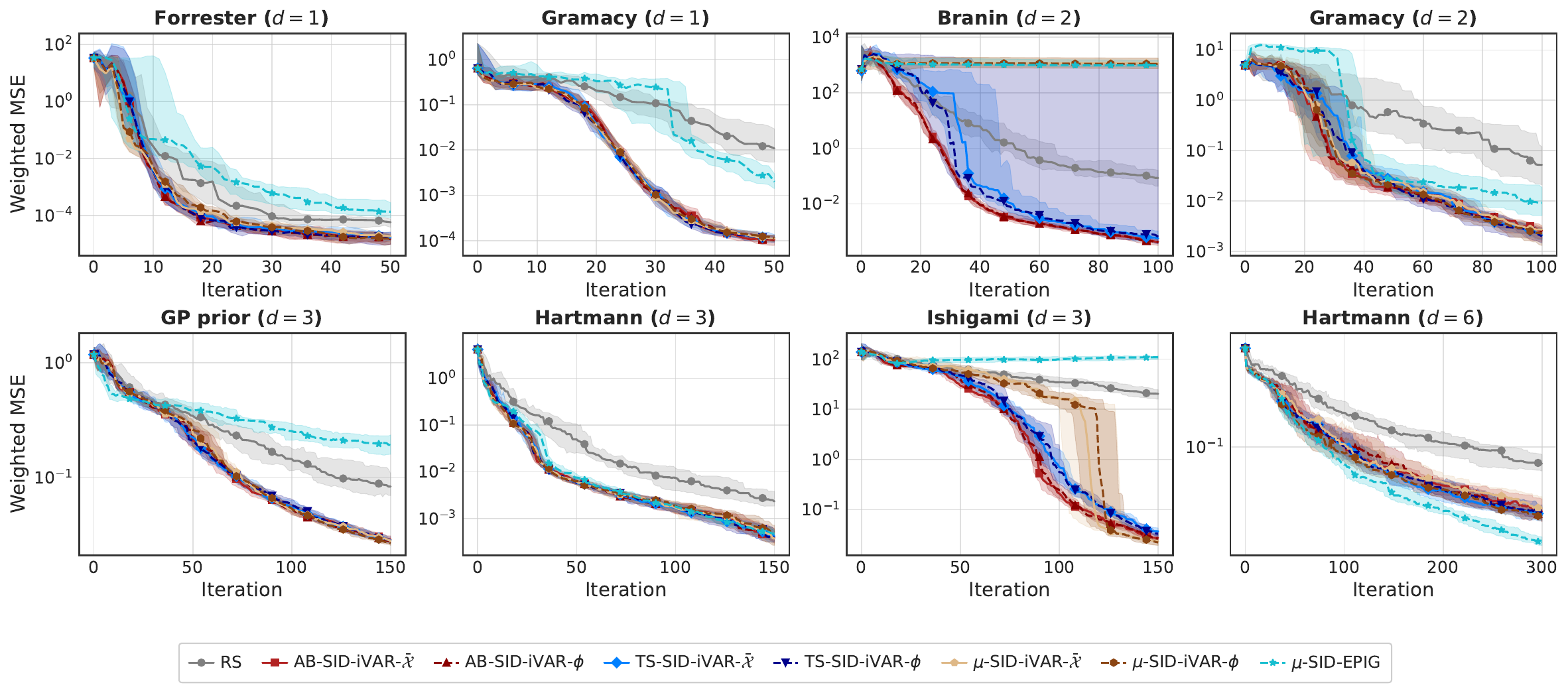}
    \caption{Ablation study comparing surrogate forms, acquisition strategies, and constraint sets. 
$\bar{\mathcal{X}}$: methods with the variance-threshold constraint; $\phi$: unconstrained 
optimization over $\mathcal{X}$. The $\mu$-SID family uses the plug-in surrogate 
$e^{\lambda\mu_{t-1}}$; $\mu$-SID-EPIG replaces the iVAR criterion with EPIG on the same 
surrogate. Random sampling (RS) is included as a baseline. On problems dominated by a 
broad basin of comparable local minima (Hartmann $d=3,6$), plug-in $\mu$-SID variants achieve 
competitive convergence. In more general settings, particularly when the target distribution 
has well-separated modes (Branin) or strongly oscillatory structure (Ishigami), AB/TS-SID-iVAR 
yield more robust convergence. }
    \label{fig:abl_study}
\end{figure}

We compare: (1) the surrogate form (\texttt{AB-SID} and \texttt{TS-SID} versus the 
naive plug-in $\mu$-SID using $e^{\lambda\mu_{t-1}}$), tested with both our \texttt{iVAR} 
acquisition (denoted $\mu$-\texttt{SID-iVAR}) and EPIG (denoted $\mu$-\texttt{SID-EPIG}) 
to verify that the choice of acquisition function alone cannot compensate for a plug-in 
surrogate, and (2) the presence of the constraint set $\bar{\mathcal{X}}$ versus 
unconstrained optimization over full $\mathcal{X}$ (denoted $\phi$). Results are shown 
in Figure~\ref{fig:abl_study}.

On functions where the target distribution does not exhibit strongly separated modes 
(Forrester, Gramacy 1D, Hartmann), all variants perform comparably. However, when the 
target distribution concentrates on well-separated regions or exhibits oscillatory 
structure, clear failure modes emerge. On Branin, where the target distribution $P_f$ 
is bimodal, both $\mu$-\texttt{SID-iVAR} and $\mu$-\texttt{SID-EPIG} fail entirely as 
the plug-in surrogate $e^{\lambda\mu_{t-1}}$ commits to whichever mode is discovered 
first and cannot explore alternatives, resulting in weighted MSE that remains near its 
initial value throughout. On Ishigami, both plug-in variants converge substantially 
later than \texttt{AB/TS-SID-iVAR}. \texttt{AB-SID} and \texttt{TS-SID} avoid these 
failures by incorporating posterior uncertainty into the surrogate via the 
variance-inflation term $\frac{\lambda^2}{2}\sigma^2_{t-1}$, allowing probability mass 
to spread across uncertain regions.

The constraint set $\bar{\mathcal{X}}$ provides complementary robustness. On Branin 
and Ishigami, the unconstrained variants ($\phi$) exhibit higher variance across seeds, 
as they may exploit the current surrogate's modes without sufficient exploration.

\subsection{Real World Problems}

\subsubsection{Potential Energy Surface Modeling} \label{App: PES_modeling_details}

In all four systems below we report $\lambda$; for $\lambda<0$, the physical 
temperature relates to $\lambda$ via $\mathcal{T} = -(k_{\rm B}\lambda)^{-1}$, 
where $k_{\rm B} = 8.617 \times 10^{-5}$\,eV/K. Energies are reported in eV 
throughout.

\paragraph{$\mathbf{H_2}$ on $\mathbf{Cu(100)}$.}
We fit the potential energy surface of a hydrogen molecule, $\mathrm{H}_{2}$, 
on top of a $\mathrm{Cu(100)}$ surface. The chemistry of primary interest is 
the dissociative adsorption of $\mathrm{H}_{2}$ on the metal surface \cite{smits2023quantum}, which 
is well described by two reaction coordinates: the molecule's distance from 
the surface, $z$, and the interatomic distance between the two H atoms, $d$. 
We fit $E$ as a function of $(z, d)$. As ground truth we use a custom 
MACE model~\cite{stark2024benchmarking} trained on $\mathrm{H}_{2}$ reactive 
surface configurations. The simulation cell follows standard literature 
setup: a single $\mathrm{H}_{2}$ molecule above a $3\times 3$, six-layer 
$\mathrm{Cu(100)}$ slab consisting of 56 Cu atoms in face-centered cubic (FCC) 
lattice structure, within a periodic 
$7.749 \times 7.748 \times 49.132\,\text{\AA}^3$ unit cell. We use a 
conservative weighting $|\lambda| = 1$ (treated as an abstract concentration 
parameter rather than a physical temperature), yielding broad coverage of 
both the bonded and dissociated states and the barrier between them.

\paragraph{$\mathbf{H}$ on $\mathbf{Cu_{13}}$.}
We consider the adsorption of a hydrogen probe atom on a frozen 13-atom 
copper cluster in the icosahedral global-minimum geometry~\cite{yang2006structure} 
in vacuum. We parametrise the probe position in spherical coordinates 
$(r, u, \phi)$ centered at the cluster centroid,
\[
\mathbf{x} = \bigl(r\sqrt{1-u^{2}}\cos\phi,\ r\sqrt{1-u^{2}}\sin\phi,\ ru\bigr) 
\in \mathbb{R}^3,
\]
where $u = \cos\theta$ is used in place of the polar angle $\theta$ so that 
uniform sampling on $(u, \phi)$ induces the uniform measure on the sphere. 
We restrict $r \in [0.5, 6.0]\,\text{\AA}$ to cover surface and near-surface 
configurations, and exploit the icosahedral symmetry of the cluster by 
restricting $(u, \phi)$ to a single fundamental domain 
$[0.7947, 1] \times [0, \pi/5]$. Energies are evaluated using the 
MACE-MPA-0 foundation model~\cite{batatia2025foundation}. We use 
$|\lambda| = 8.55\,\text{eV}^{-1}$, corresponding to $\mathcal{T} \approx 1358$\,K 
(close to the melting point of copper), at which the Boltzmann target 
concentrates on the cluster's surface adsorption sites.

\paragraph{$\mathbf{H_2O}$ on $\mathbf{Pt}$.}
The energy landscape of an $\mathrm{H_2O}$ molecule on a platinum surface features 
multiple physisorption wells, chemisorption minima, and dissociation 
pathways \cite{schnur2009properties}. We model the metal surface as a three-layer ABC-stacked $\mathrm{Pt(111)}$ slab 
in a $3\times 3$ in-plane supercell with two-dimensional periodic boundary 
conditions ($\sim 8\,\text{\AA}$ lateral spacing between water images). 
A single rigid $\mathrm{H_2O}$ molecule is placed above the slab and 
parametrised by 6 degrees of freedom: the oxygen's lateral position $(x, y)$ 
within the $p3m1$ fundamental triangle of the surface unit cell, its height 
$z \in [0.5, 5.0]\,\text{\AA}$ above the topmost Pt layer, and the molecular 
orientation given by ZYZ Euler angles $(\alpha, \cos\beta, \gamma)$. As in 
the spherical parametrisation above, $\cos\beta$ replaces $\beta$ for uniform 
sampling on $SO(3)$; $\gamma$ is restricted to $[0, \pi)$ to account for the 
$C_{2v}$ symmetry of $\mathrm{H_2O}$. Energies are evaluated using the 
MACE-MPA-0 foundation model~\cite{batatia2025foundation}. We use 
$|\lambda| = 38.68\,\text{eV}^{-1}$, corresponding to room temperature 
$\mathcal{T} = 300$\,K, the standard condition for water-on-platinum 
experiments.

\paragraph{Si Crystal.}
We fit the potential energy of silicon in a highly symmetric diamond cubic structure, 
modeled by two atoms in a cubic unit cell 
with periodic boundary conditions. By translation invariance, the relevant 
degrees of freedom reduce to the displacement vector between the two atoms. As ground truth we use the 
Stillinger--Weber potential~\cite{stillinger1985computer}, a canonical 
empirical potential for tetrahedrally-bonded silicon. We use 
$|\lambda| = 11.60\,\text{eV}^{-1}$, corresponding to 
$\mathcal{T} = 1000$\,K, a typical Si annealing / chemical vapor deposition 
temperature, so that the Boltzmann distribution covers a wide range of 
thermalised crystal states. We emphasise that this setup is a minimal benchmark: the small cell, fixed lattice vectors, and absence of defects restrict the configuration space to thermal fluctuations around a single crystalline minimum.

\subsubsection{Molecular Drug Discovery}\label{App: Mol_drug_disc}

The candidate pool consists of approximately 20k molecules from the GuacaMol benchmark~\citep{brown2019guacamol}, each represented by a 2048-bit Morgan fingerprint. We use a GP with a Tanimoto kernel as our surrogate model and consider two commonly used multi-property drug-likeness scores (Median 1 and Median 2) as learning objectives, testing $\lambda \in \{25, 75\}$ to vary the concentration of the target distribution (Figure~\ref{fig:molecular_drug_discovery}, top row). This setting lies outside our theoretical assumptions as the Tanimoto kernel is neither stationary nor differentiable; nevertheless, the problem is of practical interest and serves as a challenging test of whether \texttt{AB-SID-iVAR} remains effective beyond the provable regime. We omit TS-SID-iVAR as drawing posterior samples over the full 20k-point pool is computationally prohibitive, though efficient approximations exist for Tanimoto kernels~\citep{tripp2023tanimoto}. We also omit \texttt{GHAL} as it is designed for continuous domains.

\subsection{Runtime Report and Compute Resources}\label{app: run_time_rep}

\paragraph{Compute setup.} All experiments were run on an internal SLURM cluster. Each run was allocated 4 CPU cores and 32~GB of RAM on internal computational nodes without usage of any GPUs. 

\paragraph{Per-iteration runtime.} 
We report the per-iteration runtime (in seconds) for each method in Table~\ref{tab:runtime_profile}, averaged across seeds. For our methods (AB-$\bar{\mathcal{X}}$ and TS-$\bar{\mathcal{X}}$), we decompose the runtime into three components: GP fitting, acquisition   
  optimization, and SMC sampling.    

Several observations can be made from this analysis. First, GP fitting contributes negligibly to total runtime across all methods. Second, acquisition optimization dominates the computational cost for our methods. This is expected: optimizing the SID-iVAR objective requires evaluating the fantasy variance at all reference points for each candidate, and the $\bar{\mathcal{X}}$ constraint introduces an additional posterior evaluation per optimization step, effectively doubling the per-step cost compared to methods like IMSE, which use an unconstrained L-BFGS-B optimizer. Third, SMC sampling adds modest overhead in continuous mode but is not required for discrete domains where the candidate set is given. 

\begin{table}[htbp]
\centering
\caption{Runtime profile (seconds per iteration, mean $\pm$ std across seeds).}
\label{tab:runtime_profile}
\small
\begin{adjustbox}{max width=\textwidth}
\begin{tabular}{@{}lcccccccccc@{}}
\toprule
 & $d$ & IMSE & AB-$\bar{\mathcal{X}}$ & \multicolumn{3}{c}{AB-$\bar{\mathcal{X}}$ Breakdown} & TS-$\bar{\mathcal{X}}$ & \multicolumn{3}{c}{TS-$\bar{\mathcal{X}}$ Breakdown} \\
\cmidrule(lr){5-7} \cmidrule(lr){9-11}
 &  &  &  & GP & Acq Opt & SMC &  & GP & Acq Opt & SMC \\
\midrule
Continuous & 1 & 0.30$\pm$0.04 & 2.3$\pm$0.6 & 0.06$\pm$0.01 & 2.2$\pm$0.6 & 0.62$\pm$0.16 & 3.6$\pm$1.2 & 0.05$\pm$0.01 & 3.5$\pm$1.1 & 1.9$\pm$0.7 \\
 & 2 & 0.68$\pm$0.09 & 3.7$\pm$1.1 & 0.06$\pm$0.02 & 3.3$\pm$1.0 & 0.84$\pm$0.32 & 6.0$\pm$1.8 & 0.06$\pm$0.01 & 5.6$\pm$1.7 & 3.2$\pm$1.1 \\
 & 3 & 1.5$\pm$0.2 & 6.1$\pm$1.2 & 0.08$\pm$0.01 & 5.2$\pm$1.2 & 1.2$\pm$0.3 & 12.8$\pm$2.3 & 0.09$\pm$0.02 & 11.9$\pm$2.2 & 7.2$\pm$1.4 \\
 & 6 & 4.9$\pm$0.7 & 16.3$\pm$3.4 & 0.15$\pm$0.03 & 13.3$\pm$2.9 & 3.6$\pm$0.8 & 26.2$\pm$5.0 & 0.14$\pm$0.03 & 23.3$\pm$4.5 & 14.0$\pm$2.7 \\
\midrule
Discrete & 1 & 2.4$\pm$0.6 & 3.7$\pm$0.5 & 0.06$\pm$0.01 & 3.6$\pm$0.5 & NA & 4.0$\pm$0.5 & 0.05$\pm$0.01 & 3.8$\pm$0.5 & NA \\
 & 2 & 3.2$\pm$0.5 & 5.3$\pm$1.1 & 0.05$\pm$0.01 & 5.2$\pm$1.1 & NA & 4.6$\pm$2.4 & 0.06$\pm$0.02 & 4.4$\pm$2.4 & NA \\
 & 3 & 2.6$\pm$0.7 & 5.9$\pm$1.3 & 0.08$\pm$0.02 & 5.7$\pm$1.3 & NA & 6.1$\pm$1.1 & 0.08$\pm$0.01 & 5.9$\pm$1.1 & NA \\
 & 6 & 0.81$\pm$0.16 & 1.8$\pm$0.3 & 0.16$\pm$0.03 & 1.5$\pm$0.3 & NA & 2.0$\pm$0.4 & 0.15$\pm$0.03 & 1.7$\pm$0.3 & NA \\
\bottomrule
\end{tabular}
\end{adjustbox}
\end{table}

\subsection{Sensitivity Analysis of MCMC sample size} \label{app: sensitivity_analysis_of_mcmc}

We investigate the sensitivity of our methods to the SMC particle size $N$ on GP prior 
functions across $d \in \{2, 4, 6, 8\}$, testing $N \in \{250d, 500d, 1000d, 1500d\}$. 
Figure~\ref{fig:mcmc_runtime_report} reports the weighted MSE convergence (top) and 
per-iteration MCMC runtime (bottom). Convergence is essentially unchanged across particle 
counts for both AB-$\bar{\mathcal{X}}$ and TS-$\bar{\mathcal{X}}$, while runtime scales 
roughly linearly with $N$. TS-$\bar{\mathcal{X}}$ incurs higher cost than AB-$\bar{\mathcal{X}}$ 
at matched $N$ due to pathwise-conditioned posterior sampling. This justifies $N = 250d$ 
as a sufficient default in practice.

\begin{figure}[h!]
    \centering
    \includegraphics[width=1.0\linewidth]{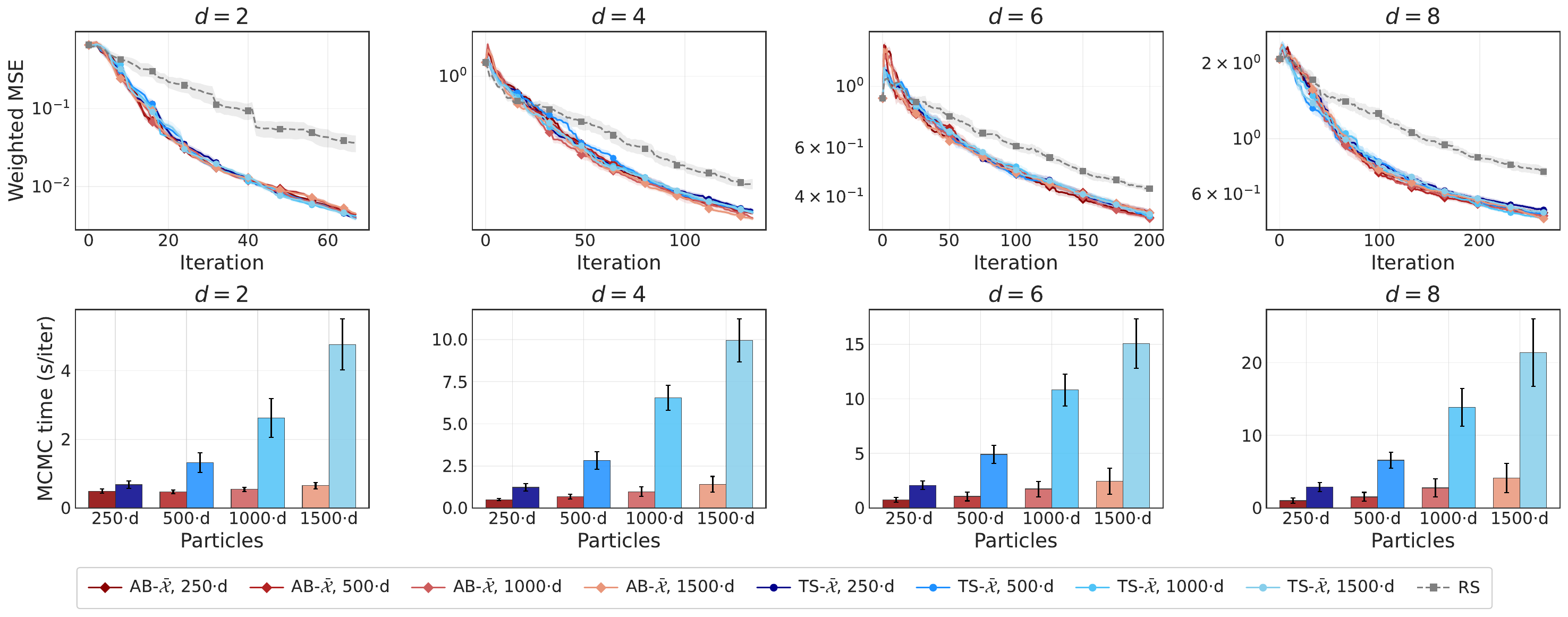}
    \caption{Sensitivity of SMC particle size $N \in \{250d, 500d, 1000d, 1500d\}$ on 
    GP prior functions across $d \in \{2,4,6,8\}$. Top: weighted MSE convergence; curves 
    overlap across particle counts. Bottom: per-iteration MCMC time, scaling linearly 
    with $N$.}
    \label{fig:mcmc_runtime_report}
\end{figure}
%

\subsection{Sensitivity Analysis of $\lambda$ and $b$}\label{app: temp_and_bias}
\begin{figure}[h!]
    \centering
    \includegraphics[width=1.0\linewidth]{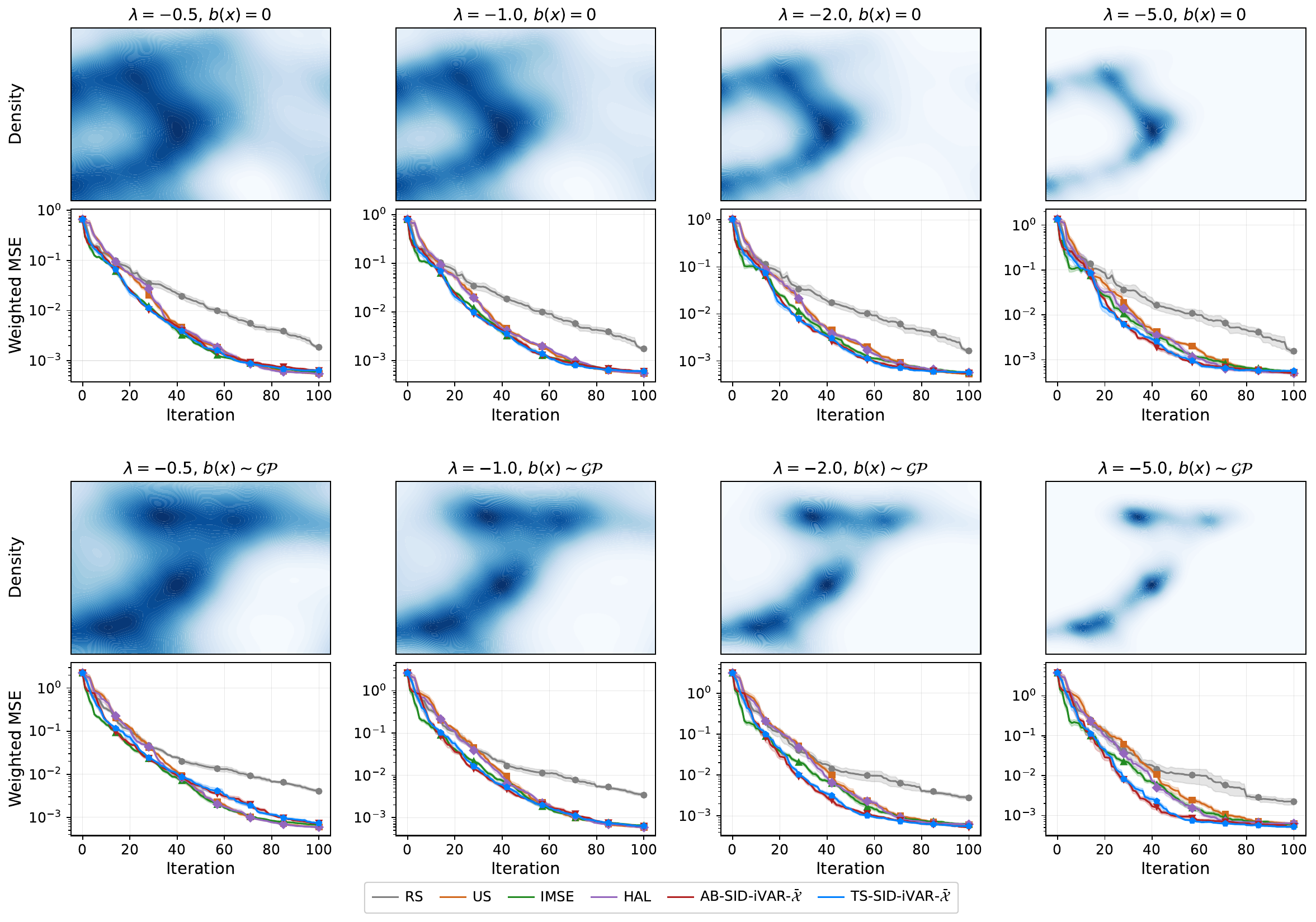}
    \caption{Sensitivity analysis of temperature $\lambda$ and bias $b(x)$ on 2-dimensional GP prior samples. Top (two) row: constant bias $b(x) = 0$; bottom (two) row: function form bias $b(x) \sim \mathcal{GP}$. Columns correspond to $\lambda \in \{-0.5, -1.0, -2.0, -5.0\}$ from left to right. Density plots show the ground truth with high-probability regions under $P_f \propto \exp(\lambda f + b)$ shaded in \textcolor{cyan}{blue}. As $|\lambda|$ increases, the target concentrates and SID-aware methods show greater relative advantage.}
    \label{fig:sensitivity_analysis_of_lambda_and_b}
\end{figure}

We investigate sensitivity to the temperature parameter $\lambda$ and bias function $b(x)$ on functions drawn from a 2 dimensional GP prior across $\lambda \in \{-0.5, -1.0, -2.0, -5.0\}$ with either constant or function form bias $b(x)$ drawn from an independent GP prior. 

The results are illustrated in Figure~\ref{fig:sensitivity_analysis_of_lambda_and_b}. As $\vert\lambda\vert$ increases, the target distribution $P_f \propto \exp(\lambda f + b)$ concentrates around the minima of $f$. Under weak concentration ($\lambda = -0.5$), all methods exhibit comparable performance. However, the performance benefit gets more obvious as $|\lambda|$ grows. This reflects the increasing inefficiency of uniform or uncertainty-driven sampling when the evaluation metric concentrates on a shrinking subset of the domain. 

The introduction of spatially-varying bias $b(x) \sim \mathcal{GP}$ (bottom row) produces more complex target distributions, yet the relative ordering among methods is preserved, confirming that our framework extends beyond the $b(x) = 0$ setting assumed in the main experiments.

\subsection{Molecular Drug Discovery}

To complement the quantitative $R^2$ analysis, Figure~\ref{fig:pairwise_plot} visualizes prediction quality across the entire molecule pool. Each panel plots the GP posterior mean $\mu$ against the true score $y$, with points colored by their Boltzmann weight under $P_\lambda$. This visualization directly reveals \emph{where} in function-value space each method succeeds or fails.

For \texttt{AB-SID-iVAR}, dark points (high Boltzmann weight) cluster tightly around the diagonal across all settings, indicating that the model has learned the high-value region accurately. In contrast, baseline methods exhibit two failure modes visible in the plots: (i) vertical spread of dark points, indicating high variance in predictions for task-relevant molecules, and (ii) systematic bias where dark points deviate from the diagonal, indicating the model has not adequately explored these regions. These deficiencies become more severe at $\lambda = 75$, where the target distribution concentrates on a narrower slice of high-scoring compounds. The pairwise plots thus provide a complementary diagnostic to the $R^2$-on-top-$k\%$ metric, confirming that \texttt{AB-SID-iVAR}'s advantage stems from concentrating sampling effort on the Boltzmann-weighted region of interest.
\begin{figure}
    \centering
    \includegraphics[width=1.0\linewidth]{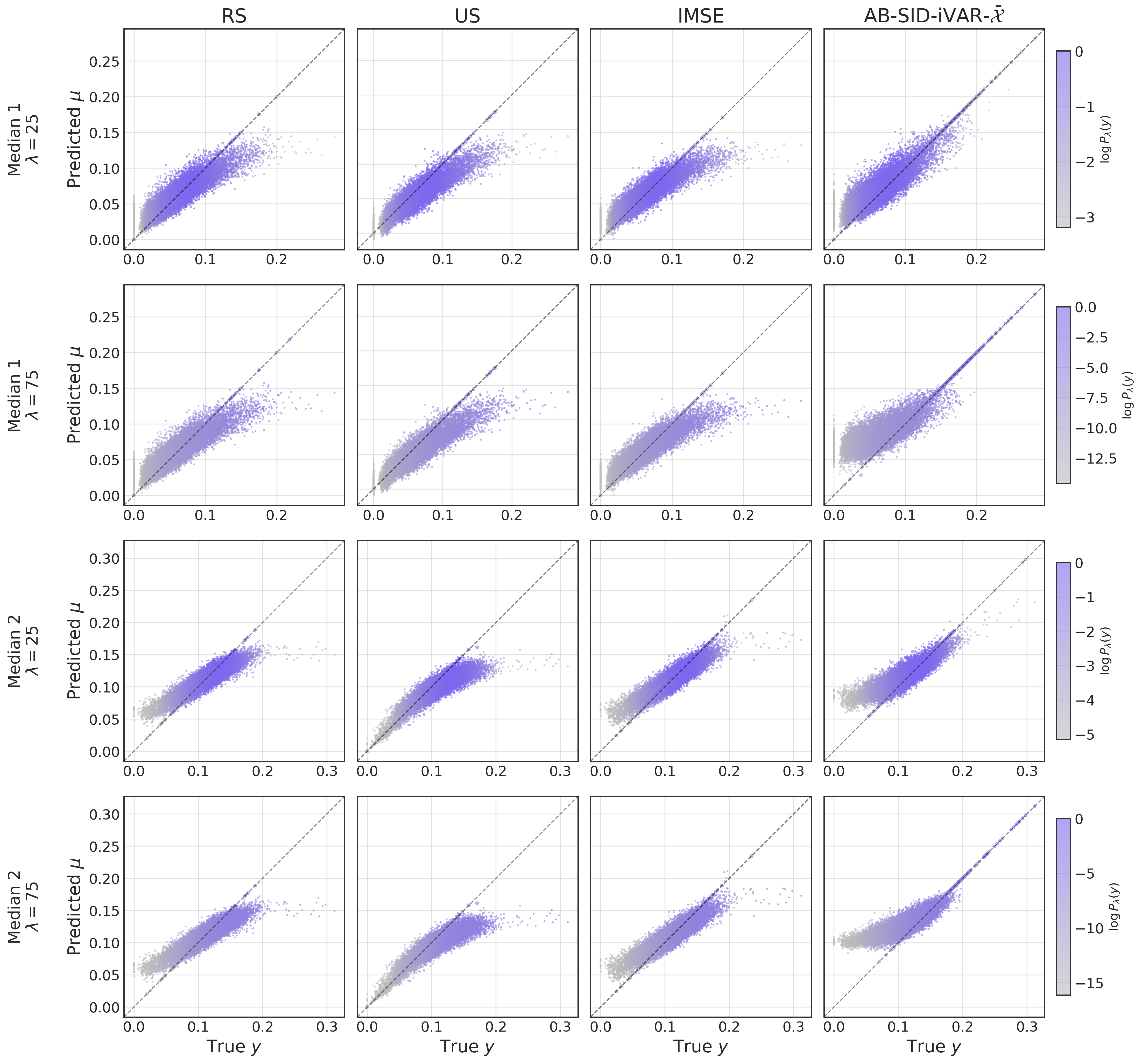}
    \caption{Pairwise comparison of predicted GP posterior mean $\mu$ versus true function value $y$ across all $\approx$20k GuacaMol molecules. Rows: scoring functions (Median 1, Median 2) and Boltzmann temperatures ($\lambda = 25, 75$). Columns: acquisition strategies. Point color encodes log Boltzmann weight $\log P_\lambda(y) \propto \lambda y$ (darker $=$ higher weight, i.e., more task-relevant molecules), we omit reporting the result of EPIG here for the sake of space, and it behaves almost identical to IMSE. Ideal performance corresponds to dark points clustering tightly along the $y = \mu$ diagonal. \texttt{AB-SID-iVAR} (rightmost column) achieves this across all settings, whereas baselines show systematic vertical spread or bias for high-weight molecules---particularly pronounced at $\lambda = 75$ where the target distribution is sharply concentrated.}
    \label{fig:pairwise_plot}
\end{figure}

\section{Limitations and Future Work} \label{App:limit}

Our theoretical bounds exhibit exponential dependence on $|\lambda|$ through the density-ratio argument (Lemmas~\ref{lem:log_density_ratio}--\ref{lem:hp_mse_bound}). While this establishes consistency for any fixed $\lambda$, the global nature of this approach, requiring uniform control of the partition function ratio across the entire domain, introduces an exponential prefactor $e^{\mathcal{O}(|\lambda|\sqrt{\beta_T})}$ that does not appear in analogous Bayesian optimization results for the $\lambda \to \infty$ limit. 

Computationally, our acquisition function relies on MCMC sampling to approximate the constraint set $\bar{\mathcal{X}}_t$ and the integral in Eq.~\ref{Eq: SID-iVAR} over continuous domains. While Figure~\ref{fig:mcmc_runtime_report} shows that substantially smaller SMC sample sizes than those used in our experiments suffice without degrading performance, thereby reducing acquisition optimization cost, this still introduces overhead beyond methods that require only point evaluations of the surrogate. Since our algorithm is agnostic to the choice of MCMC sampler, we believe this cost can be further reduced via amortized Boltzmann distribution samplers (e.g., ~\citep{ouyang2024bnem, akhound2025progressive}).

Several directions are promising for future work. On the theoretical side, developing input-dependent analyses that directly track coverage of high-probability regions, bridging our analysis with Bayesian optimization regret bounds in the $\lambda \to \infty$ limit, and extending guarantees to non-Boltzmann SID forms are natural next steps. Additionally, our preliminary analysis of \texttt{GHAL} (Proposition~\ref{prop:fixed-threshold}) assumes exhaustive exploration of the input space by the Markov chain; relaxing this condition to cover more general settings remains a natural direction of future study. Finally, our acquisition function couples the AB/TS-SID surrogate with the iVAR criterion as an integrated design, motivated by and analyzed through the variance-based upper bounds in Lemmas~\ref{lem:hp_mse_bound}-\ref{lem:bayesian_mse}. Studying alternative integrated designs under the SIDAL framework, along with their corresponding theoretical guarantees (which would require analysis distinct from Theorem~\ref{thm:hp_terminal}), is a promising direction for future work.

\end{document}